\def\eqref#1{equation~\ref{#1}}
\def\1{\bm{1}}
\DeclareMathAlphabet{\mathsfit}{\encodingdefault}{\sfdefault}{m}{sl}
\SetMathAlphabet{\mathsfit}{bold}{\encodingdefault}{\sfdefault}{bx}{n}
\tiny\color{gray},
\newcommand{\firstres}[1]{{\textcolor{magenta}{\textbf{#1}}}}
\newcommand{\secondres}[1]{{\textcolor{cyan}{\textbf{#1}}}}
\newcommand{\rxmark}{\textcolor[RGB]{202,12,22}{\ding{55}}}
\newcommand{\gcmark}{\textcolor[RGB]{18,220,168}{\checkmark}}
\definecolor{hiddendraw}{RGB}{91,155,213}
\newcommand{\ie}{\textit{i}.\textit{e}.\xspace}
\newcommand{\eg}{\textit{e}.\textit{g}.\xspace} 
\newcommand{\wrt}{\textit{w}.\textit{r}.\textit{t}\xspace}
\newtheorem{theorem}{Theorem}
\newtheorem{proposition}[theorem]{Proposition}
\newtheorem{remark}{Remark}
\newtheorem{assumption}{Assumption}
\newtheorem*{Pro*}{Problem Statement}
\def\model{\texttt{ST-TTC}~}
\def\PemsThree{\textit{PEMS-03}}
\def\PemsFour{\textit{PEMS-04}}
\def\PemsSeven{\textit{PEMS-07}}
\def\PemsEight{\textit{PEMS-08}}
\def\KnowAir{\textit{KnowAir}}
\def\UrbanEV{\textit{UrbanEV}}
\def\MetrLa{\textit{METR-LA}}
\def\LargeST{\textit{LargeST}}
\def\SD{\textit{SD}}
\def\GBA{\textit{GBA}}
\def\GLA{\textit{GLA}}
\def\CA{\textit{CA}}
\def\EnergyStream{\textit{Energy-Stream}}
\def\AirStream{\textit{Air-Stream}}
\def\PEMSStream{\textit{PEMS-Stream}}
\def\STAEformer{\textit{STAEformer}}
\def\STTN{\textit{STTN}}
\def\GWNet{\textit{GWNet}}
\def\STGCN{\textit{STGCN}}
\def\STID{\textit{STID}}
\def\STNorm{\textit{ST-Norm}}
\def\TTTMAE{\textit{TTT-MAE}}
\def\TENT{\textit{TENT}}
\def\OnlineTCN{\textit{OnlineTCN}}
\def\FSNet{\textit{FSNet}}
\def\OneNet{\textit{OneNet}}
\def\CompFormer{\textit{CompFormer}}
\def\DOST{\textit{DOST}}
\def\PatchSTG{\textit{PatchSTG}}
\def\STONE{\textit{STONE}}
\def\EAC{\textit{EAC}}
\def\STKEC{\textit{STKEC}}
\newcommand{\pinkborder}[1]{%
    \begin{tikzpicture}[baseline=(text.base)]
        \node[inner sep=0] (text) {#1};
        \draw[magenta, line width=1.6pt] ($(text.north west) + (-0.05em, 0.6pt)$) rectangle ($(text.south east) + (0.05em, -0.6pt)$);
    \end{tikzpicture}%
}
\newcommand{\orangeborder}[1]{%
    \begin{tikzpicture}[baseline=(text.base)]
        \node[inner sep=0] (text) {#1};
        \draw[orange, line width=1.6pt] ($(text.north west) + (-0.05em, 2.6pt)$) rectangle ($(text.south east) + (0.05em, -0.6pt)$);
    \end{tikzpicture}%
}
\let\orig@fnsymbol\@fnsymbol
\def\@fnsymbol#1{\ifcase#1\or\relax\else\orig@fnsymbol{#1}\fi}
\title{Learning with Calibration: \\Exploring Test-Time Computing of Spatio-Temporal Forecasting}
\author{
\parbox{\textwidth}{
Wei Chen,Yuxuan Liang\textsuperscript{$*$}
}
}
\affiliation{INTR \& DSA Thrust, The Hong Kong University of Science and Technology (Guangzhou)}
\abstract{
Spatio-temporal forecasting is crucial in many domains, such as transportation, meteorology, and energy. However, real-world scenarios frequently present challenges such as signal anomalies, noise, and distributional shifts. Existing solutions primarily enhance robustness by modifying network architectures or training procedures. Nevertheless, these approaches are computationally intensive and resource-demanding, especially for large-scale applications. In this paper, we explore a \textit{novel \underline{t}est-\underline{t}ime \underline{c}omputing paradigm, namely learning with calibration, \texttt{ST-TTC}, for \underline{s}patio-\underline{t}emporal forecasting}. Through learning with calibration, we aim to capture periodic structural biases arising from non-stationarity during the testing phase and perform real-time bias correction on predictions to improve accuracy. Specifically, we first introduce a spectral-domain calibrator with phase-amplitude modulation to mitigate periodic shift and then propose a flash updating mechanism with a streaming memory queue for efficient test-time computation. \model effectively bypasses complex training-stage techniques, offering an efficient and generalizable paradigm. Extensive experiments on real-world datasets demonstrate the effectiveness, universality, flexibility and efficiency of our proposed method. 
}
\date{\sffamily May 31, 2025}
\begin{document}

\maketitle

\makeatletter
\let\@fnsymbol\orig@fnsymbol
\makeatother

\section{Introduction}

Spatio-temporal forecasting (STF) aims to predict the future state of dynamic systems from historical spatio-temporal observations and underpins many real-world applications, such as traffic flow forecasting~\citep{guo2021learning}, air quality forecasting~\citep{nguyen2023climax}, and energy consumption forecasting~\citep{wang2019review}. Although spatio-temporal neural networks~\citep{jin2023spatio,jin2024survey,shao2024exploring}, which couple spatial neural operators with temporal neural operators, have achieved remarkable progress on these tasks, their deployment in practical environments remains fraught with challenges. These observations, typically collected by sensors, are frequently corrupted by noise, outliers (\eg, spikes or dropouts due to hardware failure)~\citep{xu2024spatiotemporal}, and more commonly, non-stationary distribution shifts arising from sensor aging and seasonal patterns~\citep{wang2024evaluating}.

To enhance generalization and performance, prior work has focused primarily on out-of-distribution (OOD) learning for ST data during the training phase: designing architectures that resist perturbations~\citep{ji2025seeing,ma2024stop,wang2024stone,xia2023deciphering}, augmenting training data with noise or adversarial examples~\citep{bai2025alleviating,liu2022practical,zhang2023automated}, and introducing specialized loss functions or regularizers~\citep{li2024flashst,zhang2023spatial} to counteract distribution drift. However, these methods share fundamental limitations: they assume that the training data sufficiently captures all future target domain invariance, \textit{a premise that is rarely valid in real-world settings}. Concurrently, an emerging paradigm of continual fine-tuning~\citep{chen2025eac,chen2021trafficstream,lee2024continual,wang2024towards,wang2023knowledge,wang2023pattern} has become popular in spatio-temporal learning by continuously tuning the model to adapt to dynamic changes. Though promising, it still divides the target domain into multiple periods of training and testing and relies on period-specific training data to optimize model, \textit{thereby failing in data-scarse scenarios.}

\begin{table}[t!]
  \centering
  \renewcommand{\arraystretch}{2.0}
  \setlength{\tabcolsep}{3pt}
  \caption{Formal comparison of different spatio-temporal learning paradigms for generalization from the perspective of data and learning. $s$ denotes the source domain, $t$ denotes the target domain, $x$ and $y$ denote the samples and labels sampled from $\mathbf{X}$ and $\mathbf{Y}$, respectively. OOD learning expects inputs sampled from any environment $e^* \sim \mathcal{E}$ to be valid, while others are only optimized for the current training or test environment $e$. In particular, continual fine-tuning divides the target domain into multiple stages and optimizes for a specific stage $\tau$ environment $e^{\tau}$. \rxmark~means not involved.}
  \vspace{2mm}
  \label{tab:paradigm_compare}
  \scalebox{0.56}{
  \begin{tabular}{@{}ccccccc@{}}
    \toprule
    \multirow{2}{*}{\textbf{Setting}} 
      & \multirow{2}{*}{\textbf{Example Works}} 
      & \multicolumn{2}{c}{\textbf{Data Perspective}} 
      & \multicolumn{2}{c}{\textbf{Learning Perspective}} \\
    \cmidrule(lr){3-4} \cmidrule(lr){5-6}
      & 
      & Source 
      & Target 
      & Train‐Time 
      & Test‐Time \\
    \midrule
    OOD Learning 
      & STONE~\citep{wang2024stone}, CaST~\citep{xia2023deciphering}
      & $\langle \mathbf{X}^s, \mathbf{Y}^s\rangle$ 
      & \rxmark 
      & 
      $\displaystyle
        \min_{f_{\theta}}
        \;\max_{e^*\in\mathcal E}
        \;\mathbb E_{(x,y)\sim P(\mathbf{X}^s,\mathbf{Y}^s\mid e^*)}
        \bigl[L(f_\theta(x),y)\bigr]$
      & \rxmark \\
    Continual Fine‐Tuning 
      & EAC~\citep{chen2025eac}, TrafficStream~\citep{chen2021trafficstream}
      & \rxmark 
      & $\langle \mathbf{X}^t, \mathbf{Y}^t\rangle$ 
      & 
      $\displaystyle
        \min_{f_{{\theta}^{\tau}}}
        \;\mathbb E_{(x,y)\sim P(\mathbf{X}^t,\mathbf{Y}^t\mid e^{\tau})}
        \bigl[L(f_{{\theta}^{\tau}}(x),y)\bigr]$
      & \rxmark \\
    Test‐Time Training 
      & TTT-ST~\citep{chen2024test}
      & $\mathbf{X}^s$ 
      & $\mathbf{X}^t$
      & 
      \begin{tabular}[t]{@{}l@{}}
        $\displaystyle
        \min_{f_{\theta}}
        \,\mathbb E_{(\tilde x,x)\sim P(\mathbf{X}^s\mid e)}
        \bigl[L(f_\theta(\tilde x),x)\bigr]$
      \end{tabular}
      & 
      $\displaystyle
        \min_{f_{\theta}}
        \,\mathbb E_{(\tilde x,x)\sim P(\mathbf{X}^t\mid e)}
        \bigl[L(f_\theta(\tilde x),x)\bigr]$ \\
    Online Continual Learning 
      & DOST~\citep{wang2024distribution}
      & \rxmark 
      & $\mathbf{X}^t$ 
      & \rxmark 
      & 
      $\displaystyle
        \min_{f_{\theta(\delta)}}
        \;\mathbb E_{(x,y)\sim P(\mathbf{X}^t\mid e)}
        \bigl[L(f_{\theta(\delta)}(x),y)\bigr]$ \\
    Test‐Time Computing
      & \model (Our)
      & \rxmark 
      & $\mathbf{X}^t$ 
      & \rxmark 
      & 
      $\displaystyle
        \min_{g_{\theta}}
        \;\mathbb E_{(x,y)\sim P(\mathbf{X}^t\mid e)}
        \bigl[L\bigl(g_\theta(f_\theta(x)),y\bigr)\bigr]$ \\
    \bottomrule
  \end{tabular}
  }
\end{table}

\begin{figure*}[t!]
    \centering
    \vspace{-3mm}
    \includegraphics[width=\textwidth]{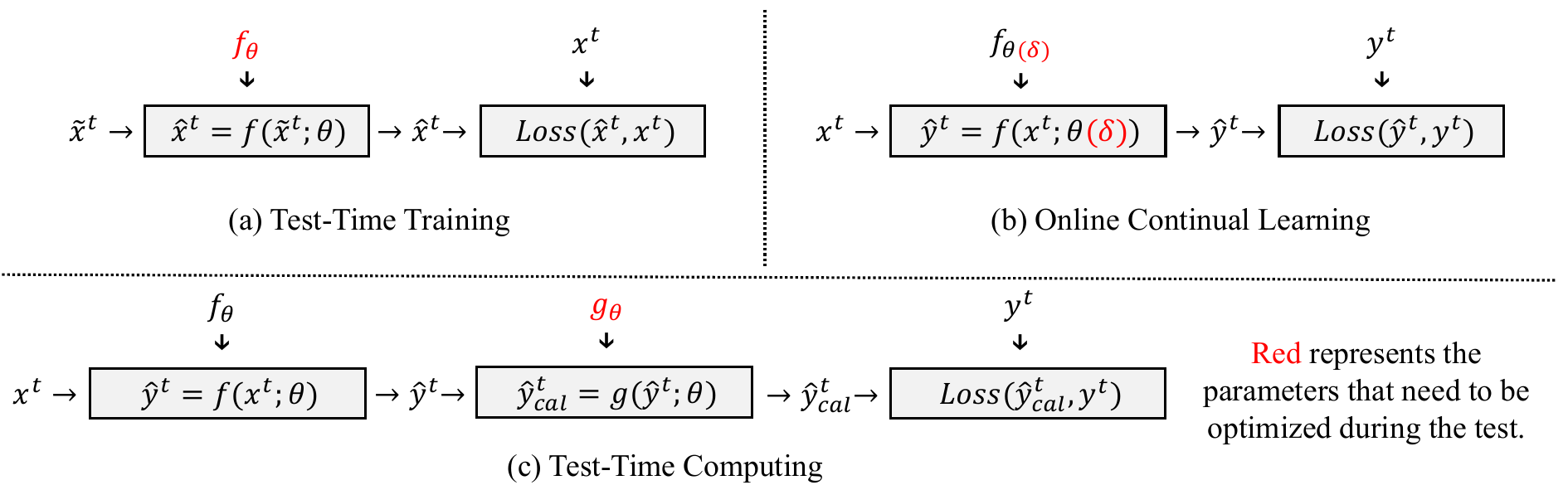}
    \caption{Conceptual visualization comparison of different spatio-temporal learning paradigms under test environment. \textit{(a) Test-Time Training} requires the use of additional pretext tasks in the training and test phases to optimize the self-supervision head or the overall model parameters $f_{\theta}$. \textit{(b) Online Continual Learning}, by optimizing some internal parameters $f_{\theta(\delta)}$ of the model, requires additional modifications to the internal architecture of the network. Our \textit{(c) Test-Time Computing} method only requires a lightweight calibrator $g_{\theta}$, which is a seamless and lightweight plug-and-play module.}
    \label{fig:concept_compare}
\end{figure*}

Recently, leveraging test-time information has attracted widespread attention for its ability to significantly improve language model performance on complex reasoning tasks~\citep{akyurek2024surprising,snell2025scaling}. In computer vision, this concept has already been extensively developed: test-time training (TTT) was first introduced by~\citep{sun2020test}, which defines an auxiliary self-supervised task applied to both training and test samples to better balance bias and variance~\citep{gandelsman2022test}. A similar idea was adapted to spatio-temporal forecasting in TTT-ST~\citep{chen2024test}. Unlike language and vision settings—where obtaining ground-truth labels for test samples at inference time is nearly impossible—STF benefits from label autocorrelation~\citep{cressie2011statistics}: each observation strongly depends on its predecessor, and training instances are constructed from sliding windows, which provide access to historical samples and their true labels. Moreover, this property makes STF also require timeliness~\citep{shao2024stemo}, that is, the additional computing time during inference must be less than the window-stride interval. A recent STF method, DOST~\citep{wang2024distribution}, explores online continual learning, which initially explored this direction. It uses historical test sample labels to dynamically adapt the modified model architecture. \textit{Though promising, these approaches typically involve complex self-supervised tasks or structural adaptations and still fall short of the timeliness demands of STF.}

To address this gap, we propose \underline{T}est-\underline{T}ime \underline{C}omputing of \underline{S}patio-\underline{T}emporal Forecasting (\texttt{ST-TTC}), an attractive complementary paradigm. \model achieves learning with calibration by iteratively leveraging available test information during inference, enabling seamless integration with diverse models. This adapts the model to evolving spatio-temporal patterns, thereby calibrating predictions. Our principal insight is that performance degradation during test time is primarily driven by non-stationary distributional shifts stemming from progressive periodic biases. Therefore, we propose a spectral domain calibrator. This involves appending a lightweight module, operating in the frequency domain, subsequent to the backbone network. This module calibrates biases by learning minor, node-specific amplitude and phase correction factors. Furthermore, a flash gradient updating mechanism with a streaming memory queue, ensures universal, rapid, and resource-efficient test-time computing. Table~\ref{tab:paradigm_compare} provides a formal comparison of our method against existing learning paradigms, and Figure~\ref{fig:concept_compare} offers a conceptual visualization of learning with test domain. In summary, our contributions are:
\begin{itemize}[leftmargin=4.5mm,parsep=3.5pt]
    \item We propose a novel test-time computing paradigm of spatio-temporal forecasting, termed \model.
    \item We systematically explore the goals and means of achieving this paradigm. Concretely, we introduce a spectral domain calibrator with phase-amplitude modulation to mitigate periodic shift and present a flash updating mechanism with a streaming memory queue for efficient test-time computation.
    \item Experimental results on real-world spatio-temporal datasets in different fields, scenarios, and learning paradigms demonstrate the effectiveness and universality of \model.
\end{itemize}

\section{Related Work}
\vspace{-1mm}
\textbf{Spatio-Temporal Forecasting.} 
Spatio-temporal sequences can be regarded as spatially extended multivariate time series. Although one can trivially apply multivariate forecasting methods~\citep{biller2003modeling,box1970distribution,nie2023time,zeng2023transformers} independently at each location, such decoupling of spatial and temporal dependencies invariably yields suboptimal results~\citep{shao2024exploring}. Classical spatio-temporal forecasting method instead relies on shallow models or spatio-temporal kernels, including feature-based methods~\citep{ohashi2012wind,zheng2015forecasting}, state space models~\citep{bahadori2014fast,cliff1975space,pfeifer1980starima}, and Gaussian process models~\citep{flaxman2015machine,senanayake2016predicting}. Unfortunately, the overall nonlinearity of these models is limited, and the high complexity of computation and storage further hinders the availability of massive training instances~\citep{shi2018machine}. In recent years, spatio-temporal neural networks~\citep{jin2023spatio,jin2024survey,kumar2024spatio} have been widely adopted to learn the complex dynamics of such systems. Early work concentrated on devising neural operators to extract spatial or temporal correlation~\citep{fang2025efficient,liu2023spatio,shao2022spatial,wu2019graph,yu2018spatio} and on designing fusion architectures to integrate them~\citep{cini2023taming,deng2024disentangling,guo2019attention,li2018diffusion,oreshkin2021fc}. More recent efforts have explored domain-invariant representation learning~\citep{ma2024stop,zhou2024coms2t,zhou2023maintaining} and continual model adaptation~\citep{chen2025eac,kieu2024team,chen2021trafficstream,yi2024get} to better accommodate unseen environmental shifts. \textit{However, these methods still depend exclusively on offline training data and thus cannot deliver truly timely and effective adaptation in real settings.}

\textbf{Test-Time Computing.}
Test-time computation is inspired by the human cognition~\citep{kahneman2011thinking}, in which additional computational effort is allocated during inference to improve task performance. This insight has recently driven considerable interest in the nature language process community, fueled by the success of reasoning-augmented language models (\eg, o1~\citep{jaech2024openai} and r1~\citep{guo2025deepseek}) that activate and adapt internal computations at test time via supervised fine-tuning or reinforcement learning (RL)~\citep{zhang2025and}. While the generalization properties of RL-based adaptation remain debated~\citep{yue2025does}, the notion of supervised learning on unlabeled test data dates back to ``transductive learning''~\citep{Gammerman98lbyt} in the 1990s and has demonstrated empirical benefits~\citep{bottou1992local,vapnik1999nature}. In the computer vision domain, this idea was formalized as Test-Time Training~\citep{sun2020test}, which attaches an auxiliary self-supervised head to enable online adaptation to each test instance—a paradigm subsequently generalized as test-time adaptation~\citep{jangtest,liang2025comprehensive,wangtent,wang2022continual}. However, spatio-temporal forecasting has seen limited exploration of such techniques. TTT-ST~\citep{chen2024test} applies TTT-style auxiliary objectives during training and continues to update at inference, and DOST~\citep{wang2024distribution} further incorporates dynamic learning mechanisms within modified model architectures for test-time updates. In addition, some methods~\citep{guo2024online,zhang2023test} are conceptually close to ours, such as CompFormer~\citep{zhang2023test}, which proposes a test-time compensated representation learning framework, but still requires access to additional training data. \textit{Notably, we formalize the test-time computing of spatio-temporal forecasting, and propose a unified learning-with-calibration framework that is general, lightweight, efficient, and effective for STF at test-time.}

For more related work, we provide a more detailed introduction in Appendix~\ref{more_work}.

\section{Preliminaries}
\vspace{-1mm}
\textbf{Problem Definition.}
Let $x\in\mathbb{R}^{T\times C}$ denote the multivariate time series recorded at each location sensor, capturing the dynamic observations of $C$ measured features in $T$ consecutive time steps. Stacking these sequences for all $N$ locations yields the spatio-temporal tensor $X\in\mathbb{R}^{N\times T\times C}$. Given historical observations $X^{h}\in\mathbb{R}^{N\times T^{h}\times C}$ (and an optional spatial correlation graph $\mathcal{G}$ representing the spatial relationships of $N$ locations), spatio-temporal forecasting aims to learn a mapping $f_{\theta}: (X^{h}, \mathcal{G}) \longmapsto X^{f}\in\mathbb{R}^{N\times T^{f}\times C}$, where $X^{f}$ is the signal for the next $T^{f}$ time steps. In practice, according to~\citep{li2018diffusion,shao2024exploring}, the feature to be predicted is usually only the target variable.

\textbf{Scenario Definition.}
In deep learning systems, batch-based testing is typically employed to exploit parallelism. In real-world deployment, however, predictions must be produced for each incoming time-step sample—\ie, with batch size $B$ set to 1. At time index $t$, once the new sliding-window input $X_{t}\in\mathbb{R}^{N\times T^{h}\times C}$ arrives, the true labels for all test samples before time index $t - T^{h} - T^{f} + 1$ become available. Thus, test-time computing of spatio-temporal forecasting can leverage this accumulated historical information to enhance the accuracy of the current prediction, while ensuring that any additional computation latency remains below a threshold defined by the sliding-window stride.

\section{Methodology}
Our test-time computing framework of spatio-temporal forecasting (\texttt{ST-TTC}) integrates two synergistic components: 1) a spectral domain calibrator with phase-amplitude modulation; and 2) a flash gradient update mechanism with streaming memory queue. In this section, we introduce these two key components, respectively, from the perspective of what is computed and how it is computed.

\subsection{What to Compute~$?$~Spectral Domain Calibrator with Phase-Amplitude Modulation}\label{what_to_compute}

\textbf{Motivation.}
Spatio-temporal data, such as traffic flow and air quality, often exhibit periodic patterns (\eg, daily or weekly cycles). However, in real-world deployments, these patterns are not stationary; they are dynamically influenced by various internal and external factors~\citep{wang2024robust}. Such influences lead to non-stationarities manifesting as fluctuations in amplitude (\eg, increased or decreased traffic peaks due to seasonal changes) or phase shifts (\eg, peak hours are advanced or delayed due to traffic congestion). Pre-training models typically fit fixed periodic patterns during training, which makes them vulnerable to performance degradation under such persistent dynamic changes during inference~\citep{wang2024evaluating}. Therefore, we argue that \textit{the goal of test-time computation is: how to design an effective calibrator that can efficiently capture such gradual systematic bias from the pattern to correct the prediction errors caused by non-stationarity, while avoiding overfitting to random noise?}

\textbf{Key Challenges.}
While correction in the time domain is possible~\citep{guo2024online,zhang2023test}, it often requires extensive parameterization, leading to increased model complexity and limited ability to capture evolving periodic structures. Moreover, the coupled structural and branching modules~\citep{chen2024test,wang2024distribution} are prone to overfitting the random noise in the spatio-temporal evolution. To address this, we propose calibration in the spectral domain, where periodic variations are more transparently expressed as changes in the amplitude and phase of specific frequency components. Spectral correction offers a potentially more direct and robust solution. However, this introduces two main challenges: \ding{182} the degree of non-stationarity varies across spatial nodes; and \ding{183} full-spectrum parameterization is computationally expensive. \textit{The core problem thus becomes how to design a lightweight, spatial-aware calibrator.}

\textbf{Implementation Details.}
To this end, we formally introduce the \textit{spectral domain calibrator (SD-Calibrator)}, which is a lightweight plug-and-play module that performs spectral domain calibration on the time domain prediction results of the pre-trained model, aiming to achieve efficient test-time computation for spatio-temporal forecasting. Specifically, it can be divided into three steps: 
\begin{itemize}[leftmargin=4.5mm,parsep=3.5pt]
    \item \textit{Spatial-aware Decomposition.} To ensure spatial awareness, we apply a real-to-complex fast Fourier transform (rFFT) along the time dimension of the backbone model's prediction $\hat{y} \in \mathbb{R}^{B \times N \times T}$, separately for each spatial node. This yields the frequency spectrum: $Y_f=\mathrm{rFFT}(\hat{y})\in\mathbb{C}^{B\times N\times M}$, where $M=\tfrac{T}{2}+1$ is the number of unique frequency bins for real-valued signals. Then, we decompose $Y_f$ into its amplitude $A=|Y_f|\in\mathbb{R}^{B\times N\times M}$ and phase $P=\angle Y_f\in\mathbb{R}^{B\times N\times M}$.
    \item \textit{Group-wise Modulation.} To ensure lightweight and balanced spectrum expression, we divide the $M$ frequency bins into $G$ contiguous groups of size $\lfloor M/G\rfloor$, and learn per‐group, per‐node amplitude and phase offsets $\lambda^\alpha\in\mathbb{R}^{G\times N\times 1}, \lambda^\phi\in \mathbb{R}^{G\times N\times 1}$ (Note: Both $\lambda^\alpha$ and $\lambda^\phi$ are initialized to $0$ to avoid incorrect calibration of predictions before learning). For each group $g\in\{1,\dots,G\}$, we apply $ A_{g}^{'} = A_{g}\odot(1 + \lambda^\alpha_g\bigr), P_{g}^{'} = P_{g} + \lambda^\phi_g,$ and reconstruct the spectrum as $Y_{f}^{'}=\bigcup_{g=1}^{G}A^{'}_{g}\odot e^{(j\,P^{'}_{g})}$.
    \item \textit{Inverse Transform.} Finally, the calibrated time‐domain signal is obtained by Inverser rFFT $\hat{y}_{cal} = \mathrm{irFFT}\bigl(Y_f) \;\in\;\mathbb{R}^{B\times N\times T},$ along the frequency dimension.
\end{itemize}

For clarity, we provide a Algorithm workflow~\ref{alg:sd_calibrator} and Pytorch-Style Pseudocode~\ref{pytorch_sdc} in Appendix~\ref{appendix-algorithm-sdc}.

\textbf{Complexity Analysis.}
The full‐spectrum parameterization learns independent amplitude and phase offsets for each of the $M=T/2+1$ frequency bins and $N$ nodes, totaling $2NM$ parameters. In contrast, our $G$‐group design learns only $2NG$ parameters. Since $G$ is a constant and $M$ grows linearly with $T$, $G \ll M$ is usually the case. For large-scale long-term scenario, this significantly reduces memory footprint and gradient update cost while retaining interpretable per-band calibration.

\textbf{Theoretical Analysis.} We also provide a theoretical approximate bound on the output perturbation induced by the \textit{SD-Calibrator}, ensuring controlled deviation from the original prediction to prevent overfitting (Please refer to Theorem~\ref{theorem_1} and the proof in Appendix~\ref{Appendix_theoretical}).

\subsection{How to Compute~$?$~Flash Gradient Update with Streaming Memory Queue}\label{how_to_compute}

\textbf{Motivation.}
The \textit{SD-Calibrator} provides an effective mechanism for output correction. To accommodate the dynamic nature of spatio-temporal data, its parameters ($\lambda^\alpha, \lambda^\phi$) must be continuously updated during inference. Fortunately, as we discussed above, due to the streaming nature of spatio-temporal data, unlike Visual and textual tasks, we have access to the true labels of historical samples. However, simply accumulating all historical data for updates is not feasible due to the increasing computational load and memory usage. Therefore, we argue that \textit{the key to test-time computation is: How to design an efficient data selection and learning mechanism that leverages appropriate historical information to tuning the SD-Calibrator without incurring a lot of computational overhead?}

\textbf{Key Challenges.}
Although retrieving similar sequences from historical training databases can partially compensate for prediction errors~\citep{zhang2023test}, this assumption is unrealistic, as only test-time information is available in our scenario. Moreover, selectively storing historical test samples via memory bank primarily serves to mitigate catastrophic forgetting in the backbone model~\citep{wang2024distribution}, which misaligns with the learning objective of our \textit{SD-calibrator}. To address this, we propose freezing the backbone and updating only the calibrator using recent test samples for efficient test-time computing. However, this strategy introduces two critical challenges: \ding{182} recent studies~\citep{lau2025fast} have shown that real-time updates may cause information leakage; and \ding{183} excessive updates can lead to overfitting of the calibration parameters and increased computational burden. \textit{The core problem thus becomes how to design a efficient calibration parameter learning mechanism without information leakage.}

\textbf{Implementation Details.}
To address these challenges, we introduce the \textit{flash gradient update} strategy coupled with a \textit{streaming memory queue}. The process is as follows:
\begin{itemize}[leftmargin=4.5mm,parsep=3.5pt]
    \item \textit{Streaming Memory Queue.} We maintain a first-in, first-out (FIFO) queue, denoted as $\mathcal{Q}$, with a maximum size equal to the prediction horizon $T^f$. For each incoming test instance $t$, after making a prediction, we store the input-label pair $(X_t, Y_t)$ into $\mathcal{Q}$ (Here is for engineering convenience. In real deployment, data points can be merged at each step to form the true label). Once $\mathcal{Q}$ is full, for every new test sample $(X_{n}, Y_{n})$ added, the oldest sample pair $(X_{o}, Y_{o})$ is dequeued. This dequeued sample $(X_{o}, Y_{o})$ is then used for the gradient update, thus avoiding the information leakage.
    \item \textit{Flash Gradient Update.} Once we have $(X_{o}, Y_{o})$, we first obtain the backbone model's prediction for the historical input: $\hat{Y}_{o}^{b} = f_{\theta}(X_{o})$ (note: the backbone model weights $f_{\theta}$ are frozen). Then, the \textit{SD-Calibrator} $g_{\theta}$ processes this prediction: $\hat{Y}_{o}^{cal} = g_{\theta}(\hat{Y}_{o}^{b})$. The loss function between the calibrated prediction $\hat{Y}_{o}^{cal}$ and the true historical label $Y_{o}$ is calculated, and only a single gradient descent step is performed to update the parameters of the \textit{SD-Calibrator}: $\lambda \leftarrow \lambda - \eta \nabla_{\lambda} L$. For the next input sample $X_{t}$, the updated \textit{SD-Calibrator} is used for prediction. Using this single-sample single-step gradient descent strategy, we achieve lightning-fast parameter updates.
\end{itemize}

For clarity, we provide a Algorithm workflow~\ref{alg:flash_gradient} and Pytorch-Style Pseudocode~\ref{pytorch_fgu} in Appendix~\ref{appendix-algorithm-lgu}.

\textbf{Complexity Analysis.}
The primary focus here is the time complexity. The \textit{Streaming Memory Queue} itself has an $\mathcal{O}(1)$ time complexity for enqueue and dequeue operations. The \textit{Lightning Gradient Update} is performed only once for each incoming test sample. Each update involves: 1). Forward propagation of the backbone and calibrator (dominated by the computational cost $\mathcal{O}(NTlogT)$ of rFFT and irFFT) 2). Backward propagation of the calibrator (dominated by parameter cost $\mathcal{O}(NG)$).

\textbf{Theoretical Analysis.}
We also show that this single update step leads to a controlled adjustment, ensuring that the calibrator makes progress on the newest sample it's trained on, without causing erratic behavior, under standard assumptions. (Please refer to Proposition~\ref{theorem_2} in Appendix~\ref{Appendix_theoretical}).

\section{Experiments}

In this section,we conduct extensive experiments to answer the following research questions (RQs):

\begin{itemize}[noitemsep, topsep=0pt, partopsep=0pt, leftmargin=4.5mm,parsep=5pt]
    \item \textbf{RQ1:} Can \model have a consistent improvement on various types of models and datasets? Can \model outperform previous learning methods that leverage test data? \textit{(\textbf{Effectiveness})}
    \item \textbf{RQ2:} Can \model effective in various real-world scenarios, including few-shot learning, long-term forecasting, and large-scale forecasting? \textit{(\textbf{Universality})}
    \item \textbf{RQ3:} Can \model further enhance the performance of existing learning paradigms that utilize training data, such as OOD Learning and continual learning? \textit{(\textbf{Flexibility})}
    \item \textbf{RQ4:} How does \model work? Which components or strategies are crucial? Are these components or strategies sensitive to parameters or design? \textit{(\textbf{Mechanism \& Robustness})}
    \item \textbf{RQ5:} What is the time and parameter cost of \model during test-time computation, and how does it compare to other advanced methods? \textit{(\textbf{Efficiency \& Lightweight})}

\end{itemize}

\subsection{Experimental Setup}

\textbf{Datasets.}
We employ publicly available benchmark datasets widely used in the literature to cover typical spatio-temporal forecasting scenarios in the traffic domain (\PemsThree, \PemsFour, \PemsSeven, \PemsEight~\citep{song2020spatial}), the meteorological domain (\KnowAir~\citep{wang2020pm2}), and the energy domain (\UrbanEV~\citep{li2025urbanev}). In addition, we also leverage the traffic-speed benchmark \MetrLa~\citep{li2018diffusion}, the large-scale spatio-temporal benchmark \LargeST~\citep{liu2023largest}, and dynamic-stream benchmarks (\EnergyStream, \AirStream, \PEMSStream~\citep{chen2025eac}) to assess our methods across varied settings and learning paradigms. Unless otherwise specified, all datasets are chronologically split into training, validation and test sets in a 6 : 2 : 2 ratio. For more detailed description of each dataset, please see the Appendix~\ref{appendix_datasets}.

\textbf{Baseline.}
For the default evaluation, we cover various widely used spatio-temporal backbones, which can be divided into three categories: (1) Transformer-based: \STAEformer~\citep{liu2023spatio} and \STTN~\citep{xu2020spatial}; (2) Graph-based: \GWNet~\citep{wu2019graph} and \STGCN~\citep{yu2018spatio}; (3) MLP-based: \STID~\citep{shao2022spatial} and \STNorm~\citep{deng2021st}. For the baselines that leverage test information, we cover three types: (1) popular test-time adaptation methods in vision: \TTTMAE~\citep{gandelsman2022test} and \TENT~\citep{wangtent}; (2) Online time series forecasting methods: \OnlineTCN~\citep{zinkevich2003online}, \FSNet~\citep{pham2023learning} and \OneNet~\citep{wen2023onenet}; (3) Comparable online spatio-temporal forecasting methods: \CompFormer~\citep{zhang2023test} and \DOST~\citep{wang2024distribution}. For the baselines on large-scale benchmarks, we use the efficient \PatchSTG~\citep{fang2025efficient} as the backbone. For the baseline of OOD learning scenarios, we use the advanced \STONE~\citep{wang2024stone} as the default method. For the continual learning scenario, we use \EAC~\citep{chen2025eac} and \STKEC~\citep{wang2023knowledge} as the default methods. We follow the default parameter settings of the models for all scenarios according to the corresponding literature. For details of each method, see Appendix~\ref{appendix_baselines}.

\textbf{Protocol.}
Following prior benchmarks~\citep{shao2024exploring}, we employ a 12-to-12 forecasting protocol—using the previous 12 time steps to predict the next 12 steps and their mean—evaluated with mean absolute error (MAE), root mean square error (RMSE), and mean absolute percentage error (MAPE). For simplicity, all experiments share the same hyperparameters of our \model:  the calibration module learning rate $lr$ is set to 1e-4, the memory-queue sample count $n$ used for updating is 1, and the number of groups $m$ to 4. To ensure fairness, each experiment is repeated five times, with results reported as mean ± standard deviation (denoted in \textcolor{gray}{\text{gray ±}}). More protocol details, see Appendix \ref{appendix_protocols}.

\subsection{Effectiveness Study (RQ1)}

\textbf{Consistent Effectiveness.} 
Table~\ref{tab:rq1_consistent_performance_12} presents the results of our method for 12-step future prediction across six models on six public datasets. The \rxmark~column denotes the results of standard testing, while the \gcmark~column indicates results obtained with our proposed \model approach. The best results in the \rxmark~and \gcmark~columns are highlighted in bold \textbf{\textcolor{cyan}{blue}} and \textbf{\textcolor{magenta}{pink}} fonts, respectively. We also compute the relative improvement, denoted by the $\Delta$ column. Based on these results, we make the following observations: \ding{182} The application of our test-time computation method, \model, consistently yields performance gains across various backbone architectures and dataset combinations. \ding{183} From a model-centric perspective, our approach can further enhance the \textbf{\textcolor{magenta}{performance}} of even the \textbf{\textcolor{cyan}{top-performing}} methods across different metrics and datasets. \ding{184} From a data-centric perspective, \UrbanEV~shows more significant relative improvement, likely due to its more pronounced distribution shift.

\begin{table*}[ht]
    \renewcommand{\arraystretch}{1.8}
    \setlength{\tabcolsep}{4.0pt} 
    \centering
    \caption{Performance comparison of different models w/ and w/o \model on common benchmarks.}
    \resizebox{\textwidth}{!}{
    \begin{tabular}{cccccccccccccccccccc} 
        \toprule
        
        \multicolumn{2}{c}{\multirow{2}{*}{{Models}}} & \multicolumn{6}{c}{{Transformer-based}} & \multicolumn{6}{c}{{Graph-based}} & \multicolumn{6}{c}{{MLP-based}}
        \\
        
        \cmidrule(lr){3-8} \cmidrule(lr){9-14} \cmidrule(lr){15-20} 
        & & \multicolumn{3}{c}{{\textit{STAEformer}}~\citep{liu2023spatio}} & \multicolumn{3}{c}{{\textit{STTN}}~\citep{xu2020spatial}} & \multicolumn{3}{c}{{\textit{GWNet}}~\citep{wu2019graph}} & \multicolumn{3}{c}{{\textit{STGCN}}~\citep{yu2018spatio}} & \multicolumn{3}{c}{{\textit{STID}}~\citep{shao2022spatial}} & \multicolumn{3}{c}{{\textit{ST-Norm}}~\citep{deng2021st}}
        \\
        \multicolumn{2}{c}{w/ \model} & \rxmark & \gcmark & {$\Delta$(\%)} & \rxmark & \gcmark & {$\Delta$(\%)} & \rxmark & \gcmark & {$\Delta$(\%)} & \rxmark & \gcmark & {$\Delta$(\%)} & \rxmark & \gcmark & {$\Delta$(\%)} & \rxmark & \gcmark & {$\Delta$(\%)} \\
        
        \cmidrule(lr){1-2}\cmidrule(lr){3-5}\cmidrule(lr){6-8}\cmidrule(lr){9-11}\cmidrule(lr){12-14}\cmidrule(lr){15-17}\cmidrule(lr){18-20}
        
        \multirow{3}{*}{\PemsThree} 
        & MAE 
        & $17.00\textcolor{gray}{\text{\scriptsize±0.16}}$ & ${16.75\textcolor{gray}{\text{\scriptsize±0.14}}}$ &\cellcolor{gray!8} \boldmath{$\downarrow 1.47$}
        & $18.12\textcolor{gray}{\text{\scriptsize±0.53}}$ & ${17.88\textcolor{gray}{\text{\scriptsize±0.50}}}$ &\cellcolor{gray!8} \boldmath{$\downarrow 1.32$}
        & $\secondres{16.73}\textcolor{gray}{\text{\scriptsize±0.41}}$ & ${\firstres{16.42}\textcolor{gray}{\text{\scriptsize±0.19}}}$ &\cellcolor{gray!8} \boldmath{$\downarrow 1.85$}
        & $18.41\textcolor{gray}{\text{\scriptsize±0.35}}$ & ${18.08\textcolor{gray}{\text{\scriptsize±0.35}}}$ &\cellcolor{gray!8} \boldmath{$\downarrow 1.79$}
        & $17.48\textcolor{gray}{\text{\scriptsize±0.02}}$ & ${17.29\textcolor{gray}{\text{\scriptsize±0.01}}}$ &\cellcolor{gray!8} \boldmath{$\downarrow 1.09$}
        & $17.27\textcolor{gray}{\text{\scriptsize±0.13}}$ & ${17.03\textcolor{gray}{\text{\scriptsize±0.12}}}$ &\cellcolor{gray!8} \boldmath{$\downarrow 1.39$}
        \\
        
        & RMSE
        & $29.98\textcolor{gray}{\text{\scriptsize±0.59}}$ & ${29.48\textcolor{gray}{\text{\scriptsize±0.58}}}$ &\cellcolor{gray!8} \boldmath{$\downarrow 1.67$}
        & $31.02\textcolor{gray}{\text{\scriptsize±1.55}}$ & ${30.48\textcolor{gray}{\text{\scriptsize±1.37}}}$ &\cellcolor{gray!8} \boldmath{$\downarrow 1.74$}
        & $\secondres{28.48}\textcolor{gray}{\text{\scriptsize±0.48}}$ & ${\firstres{27.90}\textcolor{gray}{\text{\scriptsize±0.13}}}$ &\cellcolor{gray!8} \boldmath{$\downarrow 2.04$}
        & $31.74\textcolor{gray}{\text{\scriptsize±0.94}}$ & ${31.10\textcolor{gray}{\text{\scriptsize±0.86}}}$ &\cellcolor{gray!8} \boldmath{$\downarrow 2.02$}
        & $29.10\textcolor{gray}{\text{\scriptsize±0.22}}$ & ${28.74\textcolor{gray}{\text{\scriptsize±0.21}}}$ &\cellcolor{gray!8} \boldmath{$\downarrow 1.24$}
        & $29.28\textcolor{gray}{\text{\scriptsize±0.20}}$ & ${28.71\textcolor{gray}{\text{\scriptsize±0.14}}}$ &\cellcolor{gray!8} \boldmath{$\downarrow 1.95$}
        \\
         
        & MAPE(\%)
        & $\secondres{15.82}\textcolor{gray}{\text{\scriptsize±0.22}}$ & ${\firstres{15.82}\textcolor{gray}{\text{\scriptsize±0.20}}}$ &\cellcolor{gray!8} \boldmath{$\downarrow 0.00$}
        & $18.43\textcolor{gray}{\text{\scriptsize±1.19}}$ & ${18.01\textcolor{gray}{\text{\scriptsize±1.06}}}$ &\cellcolor{gray!8} \boldmath{$\downarrow 2.28$}
        & $16.70\textcolor{gray}{\text{\scriptsize±0.60}}$ & ${16.49\textcolor{gray}{\text{\scriptsize±0.30}}}$ &\cellcolor{gray!8} \boldmath{$\downarrow 1.26$}
        & $18.90\textcolor{gray}{\text{\scriptsize±0.78}}$ & ${18.68\textcolor{gray}{\text{\scriptsize±0.44}}}$ &\cellcolor{gray!8} \boldmath{$\downarrow 1.16$}
        & $17.50\textcolor{gray}{\text{\scriptsize±0.12}}$ & ${17.39\textcolor{gray}{\text{\scriptsize±0.01}}}$ &\cellcolor{gray!8} \boldmath{$\downarrow 0.63$}
        & $17.20\textcolor{gray}{\text{\scriptsize±0.82}}$ & ${16.92\textcolor{gray}{\text{\scriptsize±0.52}}}$ &\cellcolor{gray!8} \boldmath{$\downarrow 1.63$}
        \\
        
        \midrule
        
        \multirow{3}{*}{\PemsFour} 
        & MAE 
        & $\secondres{19.48}\textcolor{gray}{\text{\scriptsize±0.05}}$ & ${\firstres{19.33}\textcolor{gray}{\text{\scriptsize±0.06}}}$ &\cellcolor{gray!8} \boldmath{$\downarrow 0.77$}
        & $20.63\textcolor{gray}{\text{\scriptsize±0.03}}$ & ${20.48\textcolor{gray}{\text{\scriptsize±0.03}}}$ &\cellcolor{gray!8} \boldmath{$\downarrow 0.73$}
        & $20.57\textcolor{gray}{\text{\scriptsize±0.37}}$ & ${20.49\textcolor{gray}{\text{\scriptsize±0.39}}}$ &\cellcolor{gray!8} \boldmath{$\downarrow 0.39$}
        & $20.70\textcolor{gray}{\text{\scriptsize±0.14}}$ & ${20.57\textcolor{gray}{\text{\scriptsize±0.12}}}$ &\cellcolor{gray!8} \boldmath{$\downarrow 0.63$}
        & $19.97\textcolor{gray}{\text{\scriptsize±0.07}}$ & ${19.86\textcolor{gray}{\text{\scriptsize±0.08}}}$ &\cellcolor{gray!8} \boldmath{$\downarrow 0.55$}
        & $20.22\textcolor{gray}{\text{\scriptsize±0.10}}$ & ${20.08\textcolor{gray}{\text{\scriptsize±0.09}}}$ &\cellcolor{gray!8} \boldmath{$\downarrow 0.69$}
        \\
        
        & RMSE
        & $\secondres{32.58}\textcolor{gray}{\text{\scriptsize±0.39}}$ & ${\firstres{32.31}\textcolor{gray}{\text{\scriptsize±0.34}}}$ &\cellcolor{gray!8} \boldmath{$\downarrow 0.83$}
        & $33.14\textcolor{gray}{\text{\scriptsize±0.16}}$ & ${32.82\textcolor{gray}{\text{\scriptsize±0.10}}}$ &\cellcolor{gray!8} \boldmath{$\downarrow 0.97$}
        & $32.64\textcolor{gray}{\text{\scriptsize±0.35}}$ & ${32.49\textcolor{gray}{\text{\scriptsize±0.38}}}$ &\cellcolor{gray!8} \boldmath{$\downarrow 0.46$}
        & $33.11\textcolor{gray}{\text{\scriptsize±0.22}}$ & ${32.80\textcolor{gray}{\text{\scriptsize±0.20}}}$ &\cellcolor{gray!8} \boldmath{$\downarrow 0.94$}
        & $32.62\textcolor{gray}{\text{\scriptsize±0.08}}$ & ${32.49\textcolor{gray}{\text{\scriptsize±0.08}}}$ &\cellcolor{gray!8} \boldmath{$\downarrow 0.40$}
        & $33.15\textcolor{gray}{\text{\scriptsize±0.27}}$ & ${32.73\textcolor{gray}{\text{\scriptsize±0.23}}}$ &\cellcolor{gray!8} \boldmath{$\downarrow 1.27$}
        \\
         
        & MAPE(\%)
        & $\secondres{12.42}\textcolor{gray}{\text{\scriptsize±0.01}}$ & ${\firstres{12.37}\textcolor{gray}{\text{\scriptsize±0.09}}}$ &\cellcolor{gray!8} \boldmath{$\downarrow 0.40$}
        & $14.74\textcolor{gray}{\text{\scriptsize±0.76}}$ & ${14.53\textcolor{gray}{\text{\scriptsize±0.41}}}$ &\cellcolor{gray!8} \boldmath{$\downarrow 1.42$}
        & $14.44\textcolor{gray}{\text{\scriptsize±0.47}}$ & ${14.43\textcolor{gray}{\text{\scriptsize±0.32}}}$ &\cellcolor{gray!8} \boldmath{$\downarrow 0.07$}
        & $14.15\textcolor{gray}{\text{\scriptsize±0.16}}$ & ${14.04\textcolor{gray}{\text{\scriptsize±0.15}}}$ &\cellcolor{gray!8} \boldmath{$\downarrow 0.78$}
        & $12.78\textcolor{gray}{\text{\scriptsize±0.17}}$ & ${12.72\textcolor{gray}{\text{\scriptsize±0.06}}}$ &\cellcolor{gray!8} \boldmath{$\downarrow 0.47$}
        & $13.72\textcolor{gray}{\text{\scriptsize±0.13}}$ & ${13.68\textcolor{gray}{\text{\scriptsize±0.24}}}$ &\cellcolor{gray!8} \boldmath{$\downarrow 0.29$}
        \\
        
        \midrule
        
        \multirow{3}{*}{\PemsSeven} 
        & MAE 
        & $\secondres{21.67}\textcolor{gray}{\text{\scriptsize±0.16}}$ & ${\firstres{21.41}\textcolor{gray}{\text{\scriptsize±0.13}}}$ &\cellcolor{gray!8} \boldmath{$\downarrow 1.20$}
        & $23.30\textcolor{gray}{\text{\scriptsize±0.81}}$ & ${23.08\textcolor{gray}{\text{\scriptsize±0.75}}}$ &\cellcolor{gray!8} \boldmath{$\downarrow 0.94$}
        & $22.62\textcolor{gray}{\text{\scriptsize±0.27}}$ & ${22.46\textcolor{gray}{\text{\scriptsize±0.27}}}$ &\cellcolor{gray!8} \boldmath{$\downarrow 0.71$}
        & $24.26\textcolor{gray}{\text{\scriptsize±0.23}}$ & ${23.83\textcolor{gray}{\text{\scriptsize±0.18}}}$ &\cellcolor{gray!8} \boldmath{$\downarrow 1.77$}
        & $21.72\textcolor{gray}{\text{\scriptsize±0.05}}$ & ${21.55\textcolor{gray}{\text{\scriptsize±0.05}}}$ &\cellcolor{gray!8} \boldmath{$\downarrow 0.78$}
        & $22.69\textcolor{gray}{\text{\scriptsize±0.15}}$ & ${22.50\textcolor{gray}{\text{\scriptsize±0.14}}}$ &\cellcolor{gray!8} \boldmath{$\downarrow 0.84$}
        \\
        
        & RMSE
        & $37.48\textcolor{gray}{\text{\scriptsize±0.52}}$ & ${37.03\textcolor{gray}{\text{\scriptsize±0.48}}}$ &\cellcolor{gray!8} \boldmath{$\downarrow 1.20$}
        & $37.55\textcolor{gray}{\text{\scriptsize±0.91}}$ & ${37.24\textcolor{gray}{\text{\scriptsize±0.81}}}$ &\cellcolor{gray!8} \boldmath{$\downarrow 0.83$}
        & $36.87\textcolor{gray}{\text{\scriptsize±0.23}}$ & ${36.62\textcolor{gray}{\text{\scriptsize±0.26}}}$ &\cellcolor{gray!8} \boldmath{$\downarrow 0.68$}
        & $39.31\textcolor{gray}{\text{\scriptsize±0.26}}$ & ${38.63\textcolor{gray}{\text{\scriptsize±0.20}}}$ &\cellcolor{gray!8} \boldmath{$\downarrow 1.73$}
        & $\secondres{36.24}\textcolor{gray}{\text{\scriptsize±0.06}}$ & ${\firstres{36.00}\textcolor{gray}{\text{\scriptsize±0.06}}}$ &\cellcolor{gray!8} \boldmath{$\downarrow 0.66$}
        & $38.14\textcolor{gray}{\text{\scriptsize±0.44}}$ & ${37.77\textcolor{gray}{\text{\scriptsize±0.41}}}$ &\cellcolor{gray!8} \boldmath{$\downarrow 0.97$}
        \\
         
        & MAPE(\%)
        & $\secondres{8.92}\textcolor{gray}{\text{\scriptsize±0.05}}$ & ${\firstres{8.87}\textcolor{gray}{\text{\scriptsize±0.06}}}$ &\cellcolor{gray!8} \boldmath{$\downarrow 0.56$}
        & $10.08\textcolor{gray}{\text{\scriptsize±0.24}}$ & ${9.98\textcolor{gray}{\text{\scriptsize±0.24}}}$ &\cellcolor{gray!8} \boldmath{$\downarrow 0.99$}
        & $9.79\textcolor{gray}{\text{\scriptsize±0.16}}$ & ${9.75\textcolor{gray}{\text{\scriptsize±0.11}}}$ &\cellcolor{gray!8} \boldmath{$\downarrow 0.41$}
        & $10.57\textcolor{gray}{\text{\scriptsize±0.18}}$ & ${10.38\textcolor{gray}{\text{\scriptsize±0.07}}}$ &\cellcolor{gray!8} \boldmath{$\downarrow 1.80$}
        & $9.05\textcolor{gray}{\text{\scriptsize±0.03}}$ & ${9.00\textcolor{gray}{\text{\scriptsize±0.03}}}$ &\cellcolor{gray!8} \boldmath{$\downarrow 0.55$}
        & $9.94\textcolor{gray}{\text{\scriptsize±0.56}}$ & ${9.86\textcolor{gray}{\text{\scriptsize±0.33}}}$ &\cellcolor{gray!8} \boldmath{$\downarrow 0.80$}
        \\
        
        \midrule
        
        \multirow{3}{*}{\PemsEight} 
        & MAE 
        & $\secondres{14.84}\textcolor{gray}{\text{\scriptsize±0.09}}$ & ${\firstres{14.73}\textcolor{gray}{\text{\scriptsize±0.08}}}$ &\cellcolor{gray!8} \boldmath{$\downarrow 0.74$}
        & $17.19\textcolor{gray}{\text{\scriptsize±0.17}}$ & ${17.07\textcolor{gray}{\text{\scriptsize±0.16}}}$ &\cellcolor{gray!8} \boldmath{$\downarrow 0.70$}
        & $16.37\textcolor{gray}{\text{\scriptsize±0.21}}$ & ${16.28\textcolor{gray}{\text{\scriptsize±0.20}}}$ &\cellcolor{gray!8} \boldmath{$\downarrow 0.55$}
        & $17.33\textcolor{gray}{\text{\scriptsize±0.19}}$ & ${17.17\textcolor{gray}{\text{\scriptsize±0.21}}}$ &\cellcolor{gray!8} \boldmath{$\downarrow 0.92$}
        & $15.61\textcolor{gray}{\text{\scriptsize±0.02}}$ & ${15.52\textcolor{gray}{\text{\scriptsize±0.02}}}$ &\cellcolor{gray!8} \boldmath{$\downarrow 0.58$}
        & $16.69\textcolor{gray}{\text{\scriptsize±0.06}}$ & ${16.56\textcolor{gray}{\text{\scriptsize±0.04}}}$ &\cellcolor{gray!8} \boldmath{$\downarrow 0.78$}
        \\
        
        & RMSE
        & $\secondres{25.61}\textcolor{gray}{\text{\scriptsize±0.17}}$ & ${\firstres{25.49}\textcolor{gray}{\text{\scriptsize±0.16}}}$ &\cellcolor{gray!8} \boldmath{$\downarrow 0.47$}
        & $27.07\textcolor{gray}{\text{\scriptsize±0.30}}$ & ${26.93\textcolor{gray}{\text{\scriptsize±0.29}}}$ &\cellcolor{gray!8} \boldmath{$\downarrow 0.52$}
        & $26.14\textcolor{gray}{\text{\scriptsize±0.23}}$ & ${26.05\textcolor{gray}{\text{\scriptsize±0.21}}}$ &\cellcolor{gray!8} \boldmath{$\downarrow 0.34$}
        & $27.49\textcolor{gray}{\text{\scriptsize±0.21}}$ & ${27.31\textcolor{gray}{\text{\scriptsize±0.22}}}$ &\cellcolor{gray!8} \boldmath{$\downarrow 0.65$}
        & $25.70\textcolor{gray}{\text{\scriptsize±0.05}}$ & ${25.60\textcolor{gray}{\text{\scriptsize±0.05}}}$ &\cellcolor{gray!8} \boldmath{$\downarrow 0.39$}
        & $26.94\textcolor{gray}{\text{\scriptsize±0.10}}$ & ${26.80\textcolor{gray}{\text{\scriptsize±0.10}}}$ &\cellcolor{gray!8} \boldmath{$\downarrow 0.52$}
        \\
        
        & MAPE(\%)
        & $\secondres{9.37}\textcolor{gray}{\text{\scriptsize±0.04}}$ & ${\firstres{9.32}\textcolor{gray}{\text{\scriptsize±0.06}}}$ &\cellcolor{gray!8} \boldmath{$\downarrow 0.53$}
        & $11.27\textcolor{gray}{\text{\scriptsize±0.26}}$ & ${11.20\textcolor{gray}{\text{\scriptsize±0.28}}}$ &\cellcolor{gray!8} \boldmath{$\downarrow 0.62$}
        & $10.95\textcolor{gray}{\text{\scriptsize±0.34}}$ & ${10.79\textcolor{gray}{\text{\scriptsize±0.18}}}$ &\cellcolor{gray!8} \boldmath{$\downarrow 1.46$}
        & $11.63\textcolor{gray}{\text{\scriptsize±0.36}}$ & ${11.53\textcolor{gray}{\text{\scriptsize±0.30}}}$ &\cellcolor{gray!8} \boldmath{$\downarrow 0.86$}
        & $9.82\textcolor{gray}{\text{\scriptsize±0.12}}$ & ${9.76\textcolor{gray}{\text{\scriptsize±0.06}}}$ &\cellcolor{gray!8} \boldmath{$\downarrow 0.61$}
        & $11.46\textcolor{gray}{\text{\scriptsize±0.89}}$ & ${11.19\textcolor{gray}{\text{\scriptsize±0.38}}}$ &\cellcolor{gray!8} \boldmath{$\downarrow 2.36$}
        \\
        
        \midrule
        
        \multirow{3}{*}{\KnowAir} 
        & MAE 
        & $17.13\textcolor{gray}{\text{\scriptsize±0.19}}$ & ${17.06\textcolor{gray}{\text{\scriptsize±0.18}}}$ &\cellcolor{gray!8} \boldmath{$\downarrow 0.41$}
        & $17.14\textcolor{gray}{\text{\scriptsize±0.16}}$ & ${17.06\textcolor{gray}{\text{\scriptsize±0.16}}}$ &\cellcolor{gray!8} \boldmath{$\downarrow 0.47$}
        & $\secondres{17.03}\textcolor{gray}{\text{\scriptsize±0.07}}$ & ${\firstres{16.94}\textcolor{gray}{\text{\scriptsize±0.07}}}$ &\cellcolor{gray!8} \boldmath{$\downarrow 0.53$}
        & $17.03\textcolor{gray}{\text{\scriptsize±0.06}}$ & ${16.96\textcolor{gray}{\text{\scriptsize±0.06}}}$ &\cellcolor{gray!8} \boldmath{$\downarrow 0.41$}
        & $18.07\textcolor{gray}{\text{\scriptsize±0.11}}$ & ${17.98\textcolor{gray}{\text{\scriptsize±0.10}}}$ &\cellcolor{gray!8} \boldmath{$\downarrow 0.50$}
        & $17.07\textcolor{gray}{\text{\scriptsize±0.01}}$ & ${17.01\textcolor{gray}{\text{\scriptsize±0.02}}}$ &\cellcolor{gray!8} \boldmath{$\downarrow 0.35$}
        \\
        
        & RMSE
        & $26.13\textcolor{gray}{\text{\scriptsize±0.20}}$ & ${26.06\textcolor{gray}{\text{\scriptsize±0.19}}}$ &\cellcolor{gray!8} \boldmath{$\downarrow 0.27$}
        & $26.18\textcolor{gray}{\text{\scriptsize±0.16}}$ & ${26.13\textcolor{gray}{\text{\scriptsize±0.16}}}$ &\cellcolor{gray!8} \boldmath{$\downarrow 0.19$}
        & $\secondres{26.12}\textcolor{gray}{\text{\scriptsize±0.15}}$ & ${\firstres{26.05}\textcolor{gray}{\text{\scriptsize±0.14}}}$ &\cellcolor{gray!8} \boldmath{$\downarrow 0.27$}
        & $26.14\textcolor{gray}{\text{\scriptsize±0.17}}$ & ${26.07\textcolor{gray}{\text{\scriptsize±0.17}}}$ &\cellcolor{gray!8} \boldmath{$\downarrow 0.27$}
        & $27.23\textcolor{gray}{\text{\scriptsize±0.02}}$ & ${27.17\textcolor{gray}{\text{\scriptsize±0.02}}}$ &\cellcolor{gray!8} \boldmath{$\downarrow 0.22$}
        & $26.45\textcolor{gray}{\text{\scriptsize±0.07}}$ & ${26.39\textcolor{gray}{\text{\scriptsize±0.06}}}$ &\cellcolor{gray!8} \boldmath{$\downarrow 0.23$}
        \\
        
        & MAPE(\%)
        & $62.14\textcolor{gray}{\text{\scriptsize±1.62}}$ & ${61.66\textcolor{gray}{\text{\scriptsize±1.53}}}$ &\cellcolor{gray!8} \boldmath{$\downarrow 0.77$}
        & $64.12\textcolor{gray}{\text{\scriptsize±1.24}}$ & ${63.15\textcolor{gray}{\text{\scriptsize±1.40}}}$ &\cellcolor{gray!8} \boldmath{$\downarrow 1.51$}
        & $64.51\textcolor{gray}{\text{\scriptsize±0.20}}$ & ${63.62\textcolor{gray}{\text{\scriptsize±0.28}}}$ &\cellcolor{gray!8} \boldmath{$\downarrow 1.38$}
        & $63.12\textcolor{gray}{\text{\scriptsize±0.97}}$ & ${62.70\textcolor{gray}{\text{\scriptsize±0.84}}}$ &\cellcolor{gray!8} \boldmath{$\downarrow 0.67$}
        & $70.00\textcolor{gray}{\text{\scriptsize±1.44}}$ & ${69.07\textcolor{gray}{\text{\scriptsize±1.35}}}$ &\cellcolor{gray!8} \boldmath{$\downarrow 1.33$}
        & $\secondres{60.76}\textcolor{gray}{\text{\scriptsize±0.34}}$ & ${\firstres{60.50}\textcolor{gray}{\text{\scriptsize±0.63}}}$ &\cellcolor{gray!8} \boldmath{$\downarrow 0.43$}
        \\
        
        \midrule
        
        \multirow{3}{*}{\UrbanEV}
        & MAE
        & $2.87\textcolor{gray}{\text{\scriptsize±0.02}}$ & ${2.85\textcolor{gray}{\text{\scriptsize±0.02}}}$ &\cellcolor{gray!8} \boldmath{$\downarrow 0.70$}
        & $3.04\textcolor{gray}{\text{\scriptsize±0.06}}$ & ${2.99\textcolor{gray}{\text{\scriptsize±0.06}}}$ &\cellcolor{gray!8} \boldmath{$\downarrow 1.64$}
        & $2.89\textcolor{gray}{\text{\scriptsize±0.03}}$ & ${2.85\textcolor{gray}{\text{\scriptsize±0.03}}}$ &\cellcolor{gray!8} \boldmath{$\downarrow 1.38$}
        & $3.29\textcolor{gray}{\text{\scriptsize±0.10}}$ & ${3.23\textcolor{gray}{\text{\scriptsize±0.10}}}$ &\cellcolor{gray!8} \boldmath{$\downarrow 1.82$}
        & $\secondres{2.83}\textcolor{gray}{\text{\scriptsize±0.01}}$ & ${\firstres{2.79}\textcolor{gray}{\text{\scriptsize±0.01}}}$ &\cellcolor{gray!8} \boldmath{$\downarrow 1.41$}
        & $3.09\textcolor{gray}{\text{\scriptsize±0.02}}$ & ${3.04\textcolor{gray}{\text{\scriptsize±0.02}}}$ &\cellcolor{gray!8} \boldmath{$\downarrow 1.62$}
        \\
        
        & RMSE
        & $5.00\textcolor{gray}{\text{\scriptsize±0.01}}$ & ${4.98\textcolor{gray}{\text{\scriptsize±0.02}}}$ &\cellcolor{gray!8} \boldmath{$\downarrow 0.40$}
        & $5.09\textcolor{gray}{\text{\scriptsize±0.07}}$ & ${5.03\textcolor{gray}{\text{\scriptsize±0.07}}}$ &\cellcolor{gray!8} \boldmath{$\downarrow 1.18$}
        & $4.87\textcolor{gray}{\text{\scriptsize±0.07}}$ & ${4.81\textcolor{gray}{\text{\scriptsize±0.06}}}$ &\cellcolor{gray!8} \boldmath{$\downarrow 1.23$}
        & $5.63\textcolor{gray}{\text{\scriptsize±0.23}}$ & ${5.52\textcolor{gray}{\text{\scriptsize±0.22}}}$ &\cellcolor{gray!8} \boldmath{$\downarrow 1.95$}
        & $\secondres{4.74}\textcolor{gray}{\text{\scriptsize±0.02}}$ & ${\firstres{4.67}\textcolor{gray}{\text{\scriptsize±0.02}}}$ &\cellcolor{gray!8} \boldmath{$\downarrow 1.48$}
        & $5.31\textcolor{gray}{\text{\scriptsize±0.03}}$ & ${5.22\textcolor{gray}{\text{\scriptsize±0.02}}}$ &\cellcolor{gray!8} \boldmath{$\downarrow 1.69$}
        \\
        
        & MAPE(\%)
        & $\secondres{27.14}\textcolor{gray}{\text{\scriptsize±0.46}}$ & ${\firstres{26.75}\textcolor{gray}{\text{\scriptsize±0.58}}}$ &\cellcolor{gray!8} \boldmath{$\downarrow 1.44$}
        & $28.75\textcolor{gray}{\text{\scriptsize±0.08}}$ & ${28.22\textcolor{gray}{\text{\scriptsize±0.29}}}$ &\cellcolor{gray!8} \boldmath{$\downarrow 1.84$}
        & $29.10\textcolor{gray}{\text{\scriptsize±0.33}}$ & ${28.47\textcolor{gray}{\text{\scriptsize±0.26}}}$ &\cellcolor{gray!8} \boldmath{$\downarrow 2.16$}
        & $31.67\textcolor{gray}{\text{\scriptsize±0.97}}$ & ${31.45\textcolor{gray}{\text{\scriptsize±1.01}}}$ &\cellcolor{gray!8} \boldmath{$\downarrow 0.70$}
        & $27.60\textcolor{gray}{\text{\scriptsize±0.62}}$ & ${27.15\textcolor{gray}{\text{\scriptsize±0.43}}}$ &\cellcolor{gray!8} \boldmath{$\downarrow 1.63$}
        & $29.53\textcolor{gray}{\text{\scriptsize±0.57}}$ & ${29.26\textcolor{gray}{\text{\scriptsize±0.57}}}$ &\cellcolor{gray!8} \boldmath{$\downarrow 0.91$}
        \\

        \bottomrule
        \label{tab:rq1_consistent_performance_12}
    \end{tabular}
    }
\end{table*}

\textbf{Competitive Effectiveness.} 
We further compare our method against various advanced approaches that can leverage test-time information. Since the official source code of \CompFormer~and \DOST~is not available and uses additional data information, it leads to an unfair comparison. Nevertheless, we still include all reported \MetrLa~benchmark values (indicated with \textbf{\textit{*}}) using a unified \GWNet~backbone, categorized
\begin{wraptable}{r}{0.5\textwidth}
    \renewcommand{\arraystretch}{1.5}
    \centering
    \caption{Performance comparison of the advanced method w/ \model on \MetrLa~benchmark. $^{\dag}$ means the results are statistically significant (t-test with p-value $<0.01$).}
    \vspace{-2mm}
    \resizebox{0.5\textwidth}{!}{ 
    \begin{tabular}{ccccc}
    \toprule
    Method & MAE & RMSE & \makecell[c]{w/o Training\\ Set} & \makecell[c]{w/o Modifying\\ Backbone} \\
    \cmidrule(lr){1-1} \cmidrule(lr){2-5}
    \multicolumn{5}{l}{\textit{\textcolor{gray}{The training / validation / test set split used below is 70\% / 10\% / 20\%.}}}
    \\
    \textit{\textbf{TTT-MAE}}~\citep{gandelsman2022test}
    & 3.47\textcolor{gray}{\text{\scriptsize±0.03}} & 7.43\textcolor{gray}{\text{\scriptsize±0.05}}
    & \rxmark & \gcmark
    \\
    \textit{\textbf{TENT}}~\citep{wangtent}
    & 4.84\textcolor{gray}{\text{\scriptsize±0.08}} & 8.53\textcolor{gray}{\text{\scriptsize±0.10}}
    & \gcmark & \gcmark
    \\
    \textit{\textbf{CompFormer}}$^*$~\citep{zhang2023test}
    & 3.46\textcolor{gray}{\text{\scriptsize±0.02}} & \textbf{7.19\textcolor{gray}{\text{\scriptsize±0.08}}}
    & \rxmark & \gcmark
    \\
    \rowcolor{gray!8}
    \model
    & \textbf{3.46\textcolor{gray}{\text{\scriptsize±0.01}}}$^{\dag}$ & 7.21\textcolor{gray}{\text{\scriptsize±0.01}}
    & \gcmark & \gcmark
    \\
    \midrule
    \multicolumn{5}{l}{\textit{\textcolor{gray}{The training / validation / test set split used below is 20\% / 5\% / 75\%.}}}
    \\
    \textit{\textbf{OnlineTCN}}$^*$~\citep{zinkevich2003online}
    & 4.78\textcolor{gray}{\text{\scriptsize±0.03}} & 8.70\textcolor{gray}{\text{\scriptsize±0.04}}
    & \gcmark & \rxmark
    \\
    \textit{\textbf{FSNet}}$^*$~\citep{pham2023learning}
    & 5.79\textcolor{gray}{\text{\scriptsize±0.24}} & 11.06\textcolor{gray}{\text{\scriptsize±0.24}}
    & \gcmark & \rxmark
    \\
    \textit{\textbf{OneNet}}$^*$~\citep{wen2023onenet}
    & 4.94\textcolor{gray}{\text{\scriptsize±0.03}} & 8.80\textcolor{gray}{\text{\scriptsize±0.06}}
    & \gcmark & \rxmark
    \\
    \textit{\textbf{DOST}}$^*$~\citep{wang2024distribution}
    & 4.38\textcolor{gray}{\text{\scriptsize±0.02}} & 8.26\textcolor{gray}{\text{\scriptsize±0.03}}
    & \gcmark & \rxmark
    \\
    \rowcolor{gray!8}
    \model
    & \textbf{3.77\textcolor{gray}{\text{\scriptsize±0.07}}}$^{\dag}$ & \textbf{7.75\textcolor{gray}{\text{\scriptsize±0.13}}}$^{\dag}$
    & \gcmark & \gcmark
    \\
    \bottomrule
    \end{tabular}
    }
    \label{tab:rq1_competitive_performance}
\end{wraptable}

into regular and online settings as presented in Table~\ref{tab:rq1_competitive_performance}, based on their respective papers. Additionally, we implemented the popular \TTTMAE~method as a surrogate for the unavailable \textit{TTT-ST} method. Our observations are as follows: \ding{182} For the regular setting, our method achieves competitive results with more stable standard deviations. While other methods like \CompFormer~demonstrate similar performance, they often utilize more training information and computational resources. \ding{183} In the online setting, our method significantly outperforms existing approaches without requiring more complex model architecture modifications.

\subsection{Universality Study (RQ2)}

To demonstrate the universality of \model across diverse real-world scenarios, we explore various forecasting scenarios in the literature, including few-shot~\citep{yuan2024spatio}, long-term~\citep{shao2022long}, and large-scale~\citep{han2024bigst}.

\textbf{Few-Shot Scenario.}
To simulate limited training data, we retrained models using only the first 10\% of existing training sets to investigate a more common and challenging few-shot scenario. Figure~\ref{fig:few_shot_improvement} shows the relative performance gains with \model (For full results, please refer Table~\ref{tab:rq2_fewshot_12} in the Appendix). We observe: \ding{182} \model provides more significant improvements in the few-shot setting compared to the full-shot case in Table~\ref{tab:rq1_consistent_performance_12}, with about half exceeding 2\%. \ding{183} \KnowAir~shows the largest gain compared to other datasets, likely because its four-year long period leads to a substantial test distribution shift in the few-shot scenario, where our method adapts well.

\begin{figure*}[htbp!]
    \centering
    \vspace{-2mm}
    \includegraphics[width=\textwidth]{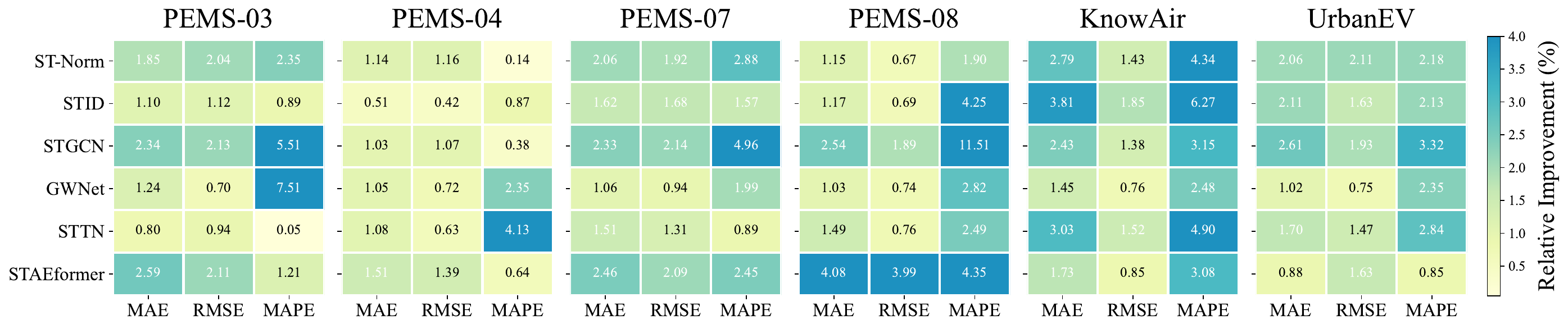}
    \caption{Relative improvements of different models w/ \model in the few-shot setting.}
    \label{fig:few_shot_improvement}
\end{figure*}

\textbf{Long-Term Scenario.}
In real-world scenarios, long-term forecasting helps to further plan future decisions. We predicted 24 future steps from 24 past steps to explore more complex temporal changes. 
\begin{figure}[b!]
	\centering
	\begin{minipage}{0.28\linewidth}
		\centering
		\includegraphics[width=1.0\linewidth]{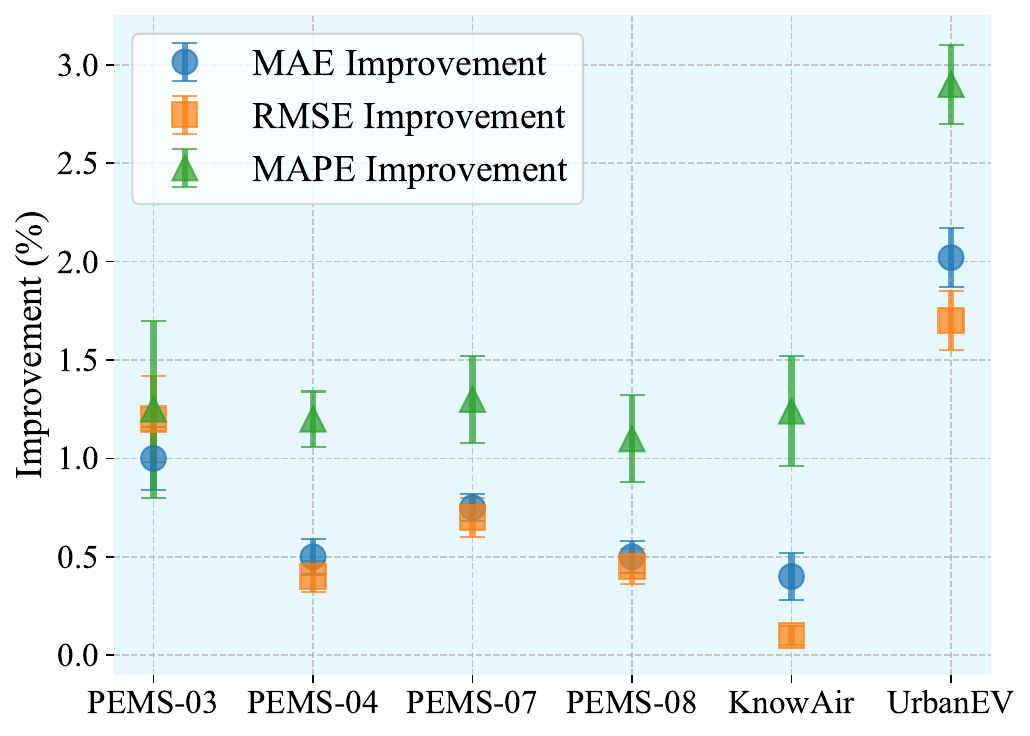}
	\end{minipage}
	\begin{minipage}{0.70\linewidth}
		\centering
		\includegraphics[width=1.0\linewidth]{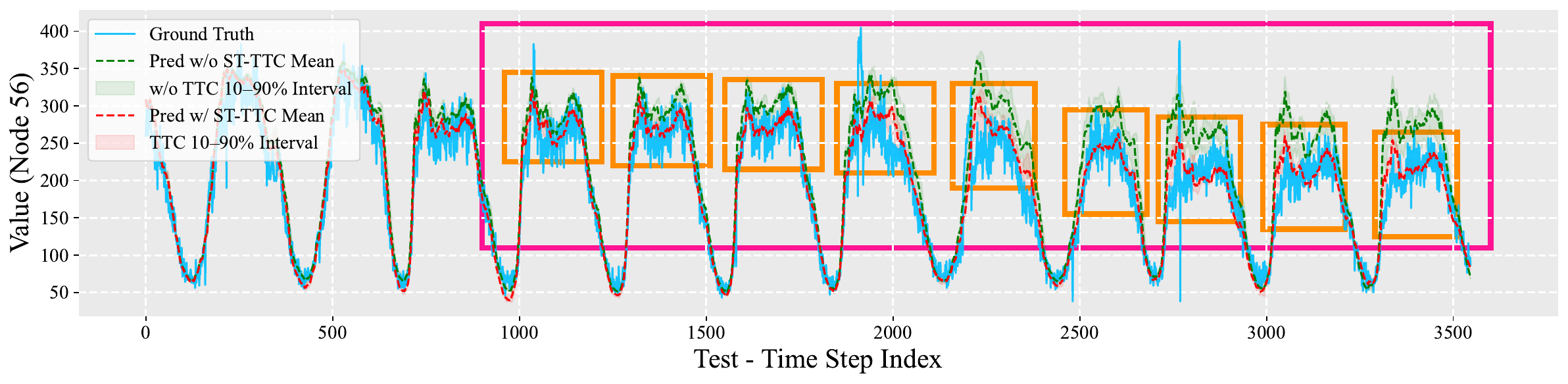}
	\end{minipage}
        \caption{Left: relative improvement of long-term setting. Right: visualization study of \PemsEight.}
        \label{fig:rq2_long_term_study}
\end{figure}
As shown in Figure~\ref{fig:rq2_long_term_study}, we present the relative performance improvement of the advanced \STID~model with our \model method, and give a test set prediction visualization case on the \PemsEight~dataset (see Figure~\ref{fig:long_term_vis} in the Appendix for more examples). Our observations include: \ding{182} \model consistently improves long-term forecasting, even more than short-term (Table~\ref{tab:rq1_consistent_performance_12}), likely due to more learnable information in longer windows. \ding{183} As the \pinkborder{pink} and \orangeborder{orange} box shows, our method learns test-time history, capturing both the global traffic decline and local fluctuations, leading to effective calibration.

\begin{wrapfigure}{r}{6.3cm}
\begin{center}
\includegraphics[width=1.0\linewidth]{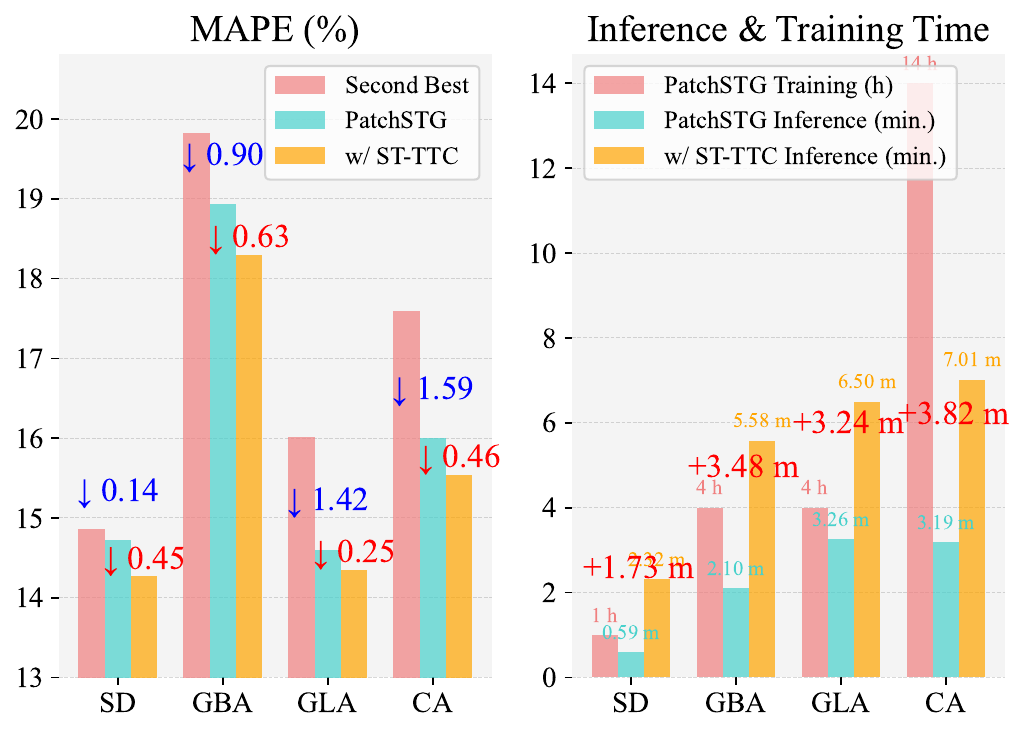}
\caption{Performance on \LargeST.} 
\label{fig.largest-12}
\end{center}
\end{wrapfigure} 
\textbf{Large-Scale Scenario.}
Beyond current regional datasets, state or national-level spatio-temporal forecasting can involve tens of thousands of stations and longer time frames. We explore large-scale scenarios using the popular \LargeST~benchmark (comprising \SD, \GBA, \GLA, and \CA~subsets). Figure~\ref{fig.largest-12} illustrates the 12-step prediction performance gains of the state-of-the-art efficient spatio-temporal model \PatchSTG~\citep{fang2025efficient} with our \model, along with a comparison of inference time complexity (For full results, see Table~\ref{tab:rq2_largest} in Appendix). We observe: \ding{182} Our \model consistently yields further performance improvements across all datasets, even surpassing the improvement of the second-best baseline over the PatchSTG on some datasets. \ding{183} The additional inference time is at most 3.82 minutes, which is a clear advantage for the achieved performance gains compared to the training time cost of up to 14 hours.

\subsection{Flexibility Study (RQ3)}

To illustrate the flexibility of \model in accommodating existing learning paradigms, we explore its integration with two training data-leveraging paradigms: OOD learning and Continual Learning.

\begin{wrapfigure}{r}{6.2cm}
\begin{center}
\vspace{-4mm}
\includegraphics[width=1.0\linewidth]{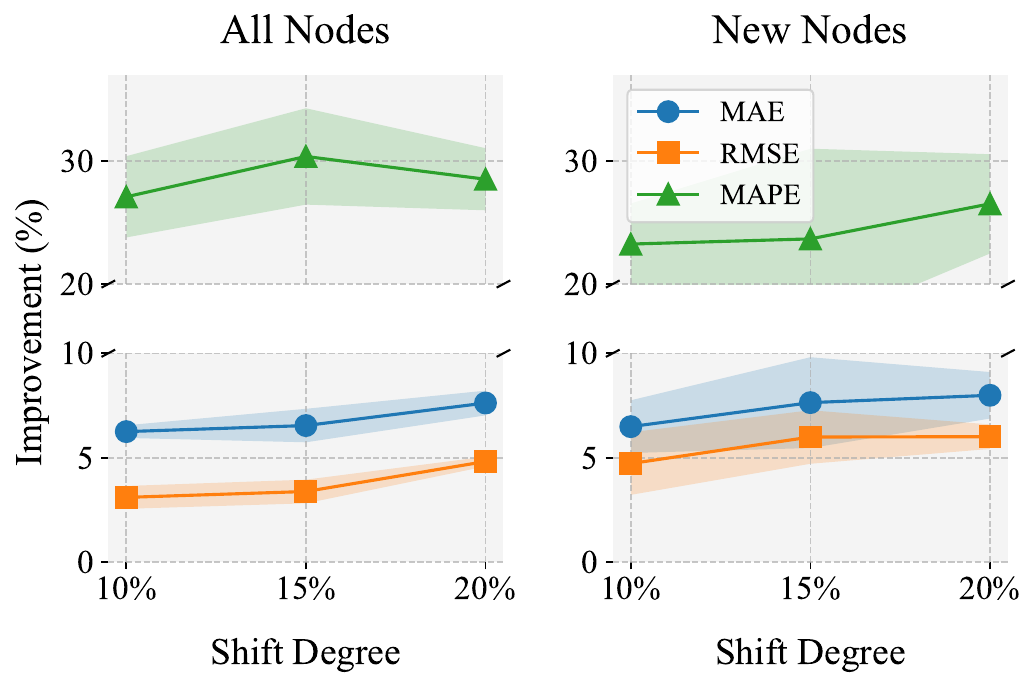}
\caption{Relative improvement using our \model in the OOD learning setting.}
\label{fig:st-ood}
\end{center}
\end{wrapfigure}
\textbf{OOD Learning Setting.}
Following prior work~\citep{wang2024stone}, we use the \SD~dataset to simulate spatio-temporal shift. For the temporal dimension, we use 1-8/2019, 9-10/2019, and 11-12/2020 for training, validation, and testing, respectively. For the spatial dimension, we randomly mask 10\% of nodes in the test set and consider three proportions of new nodes (10\% / 15\% / 20\%) relative to the training node to mimic varying degrees of shift. In Figure~\ref{fig:st-ood}, we present the 12-step average prediction performance gains of the advanced OOD learning model \STONE~with our \model, evaluated on all nodes and new nodes to demonstrate generalizability and scalability (Full results in Table~\ref{tab:rq3_ood}). We observe that: \ding{182} The \STONE~model with \model consistently achieves performance benefits, significantly outperforming all previous settings, indicating 
that existing OOD models are still insufficient for true OOD generalization, while our method is highly effective. 
\ding{183} For both all and new nodes, our improvements become more pronounced as the shift increases, further demonstrating our effectiveness in handling both generalizability and scalability in challenging scenarios.

\textbf{Continual Learning Setting.}
Following prior work~\citep{chen2025eac}, we used multi-period streaming spatio-temporal data to examine our \model's integration with continual learning method. Table~\ref{tab:rq3_continual_12_wrap} shows the improved 12-step forecasting of advanced continual learning models \EAC~and \STKEC~with our \model. We observed: \ding{182} Consistent performance gains for both models across all datasets; \STKEC~with \model even achieved comparable performance to best model \EAC. \ding{183} \EnergyStream~achieves significant improvement over other datasets, as the \model effectively learns and calibrates temporal changes, as shown by the frequency analysis (drastic shift changes) in Figure~\ref{fig:energy_shift}.

\begin{table*}[t!]
\begin{minipage}{0.8\linewidth}
  \setlength{\tabcolsep}{2.0pt}
  \renewcommand{\arraystretch}{1.2}
  \caption{\raggedright Performance comparison in continual learning setting.}
  \label{tab:rq3_continual_12_wrap}
  \resizebox{0.8\linewidth}{!}{
    \begin{tabular}{cc*{3}{c}*{3}{c}*{3}{c}}
      \toprule
      \multirow{2}{*}{Methods}
      & \multirow{2}{*}{w/ \model}
      & \multicolumn{3}{c}{\AirStream}
      & \multicolumn{3}{c}{\PEMSStream}
      & \multicolumn{3}{c}{\EnergyStream} \\
      \cmidrule(lr){3-5} \cmidrule(lr){6-8} \cmidrule(lr){9-11}
      & & MAE & RMSE & MAPE(\%) & MAE & RMSE & MAPE(\%) & MAE & RMSE & MAPE(\%) \\
      \midrule
      \multirow{3}{*}{\EAC}
        & \rxmark
          & $24.15\textcolor{gray}{\text{\scriptsize±0.14}}$
          & $38.22\textcolor{gray}{\text{\scriptsize±0.31}}$
          & $31.79\textcolor{gray}{\text{\scriptsize±0.05}}$
          & $14.92\textcolor{gray}{\text{\scriptsize±0.11}}$
          & $24.17\textcolor{gray}{\text{\scriptsize±0.17}}$
          & $20.82\textcolor{gray}{\text{\scriptsize±0.16}}$
          & $5.15\textcolor{gray}{\text{\scriptsize±0.10}}$
          & $5.46\textcolor{gray}{\text{\scriptsize±0.09}}$
          & $50.55\textcolor{gray}{\text{\scriptsize±2.60}}$ \\
        & \gcmark
          & $23.54\textcolor{gray}{\text{\scriptsize±0.15}}$
          & $37.51\textcolor{gray}{\text{\scriptsize±0.27}}$
          & $31.44\textcolor{gray}{\text{\scriptsize±0.10}}$
          & $14.71\textcolor{gray}{\text{\scriptsize±0.07}}$
          & $23.87\textcolor{gray}{\text{\scriptsize±0.12}}$
          & $20.53\textcolor{gray}{\text{\scriptsize±0.03}}$
          & $3.47\textcolor{gray}{\text{\scriptsize±0.01}}$
          & $3.94\textcolor{gray}{\text{\scriptsize±0.00}}$
          & $39.66\textcolor{gray}{\text{\scriptsize±0.41}}$ \\
        & \cellcolor{gray!8} $\Delta$
          & \cellcolor{gray!8}$\downarrow$ 2.5\%
          & \cellcolor{gray!8}$\downarrow$ 1.9\%
          & \cellcolor{gray!8}$\downarrow$ 1.1\%
          & \cellcolor{gray!8}$\downarrow$ 1.4\%
          & \cellcolor{gray!8}$\downarrow$ 1.2\%
          & \cellcolor{gray!8}$\downarrow$ 1.4\%
          & \cellcolor{gray!8}$\downarrow$ 32.6\%
          & \cellcolor{gray!8}$\downarrow$ 27.8\%
          & \cellcolor{gray!8}$\downarrow$ 21.6\% \\
      \midrule
      \multirow{3}{*}{\STKEC}
        & \rxmark
          & $25.44\textcolor{gray}{\text{\scriptsize±1.05}}$
          & $40.11\textcolor{gray}{\text{\scriptsize±1.13}}$
          & $33.30\textcolor{gray}{\text{\scriptsize±1.64}}$
          & $16.25\textcolor{gray}{\text{\scriptsize±0.04}}$
          & $26.73\textcolor{gray}{\text{\scriptsize±0.07}}$
          & $22.33\textcolor{gray}{\text{\scriptsize±0.16}}$
          & $5.41\textcolor{gray}{\text{\scriptsize±0.15}}$
          & $5.72\textcolor{gray}{\text{\scriptsize±0.10}}$
          & $52.40\textcolor{gray}{\text{\scriptsize±1.10}}$ \\
        & \gcmark
          & $24.26\textcolor{gray}{\text{\scriptsize±0.05}}$
          & $39.02\textcolor{gray}{\text{\scriptsize±0.05}}$
          & $31.73\textcolor{gray}{\text{\scriptsize±0.04}}$
          & $16.05\textcolor{gray}{\text{\scriptsize±0.06}}$
          & $26.39\textcolor{gray}{\text{\scriptsize±0.11}}$
          & $21.88\textcolor{gray}{\text{\scriptsize±0.06}}$
          & $3.83\textcolor{gray}{\text{\scriptsize±0.09}}$
          & $4.28\textcolor{gray}{\text{\scriptsize±0.08}}$
          & $43.22\textcolor{gray}{\text{\scriptsize±0.90}}$ \\
        & \cellcolor{gray!8}$\Delta$
          & \cellcolor{gray!8}$\downarrow$ 4.6\%
          & \cellcolor{gray!8}$\downarrow$ 2.7\%
          & \cellcolor{gray!8}$\downarrow$ 4.7\%
          & \cellcolor{gray!8}$\downarrow$ 1.2\%
          & \cellcolor{gray!8}$\downarrow$ 1.3\%
          & \cellcolor{gray!8}$\downarrow$ 2.0\%
          & \cellcolor{gray!8}$\downarrow$ 29.2\%
          & \cellcolor{gray!8}$\downarrow$ 25.2\%
          & \cellcolor{gray!8}$\downarrow$ 17.5\% \\
      \bottomrule
    \end{tabular}%
    }
\end{minipage}
\hspace{-0.16\textwidth} 
\begin{minipage}{0.32\linewidth}
  \centering
  \includegraphics[width=1.0\linewidth]{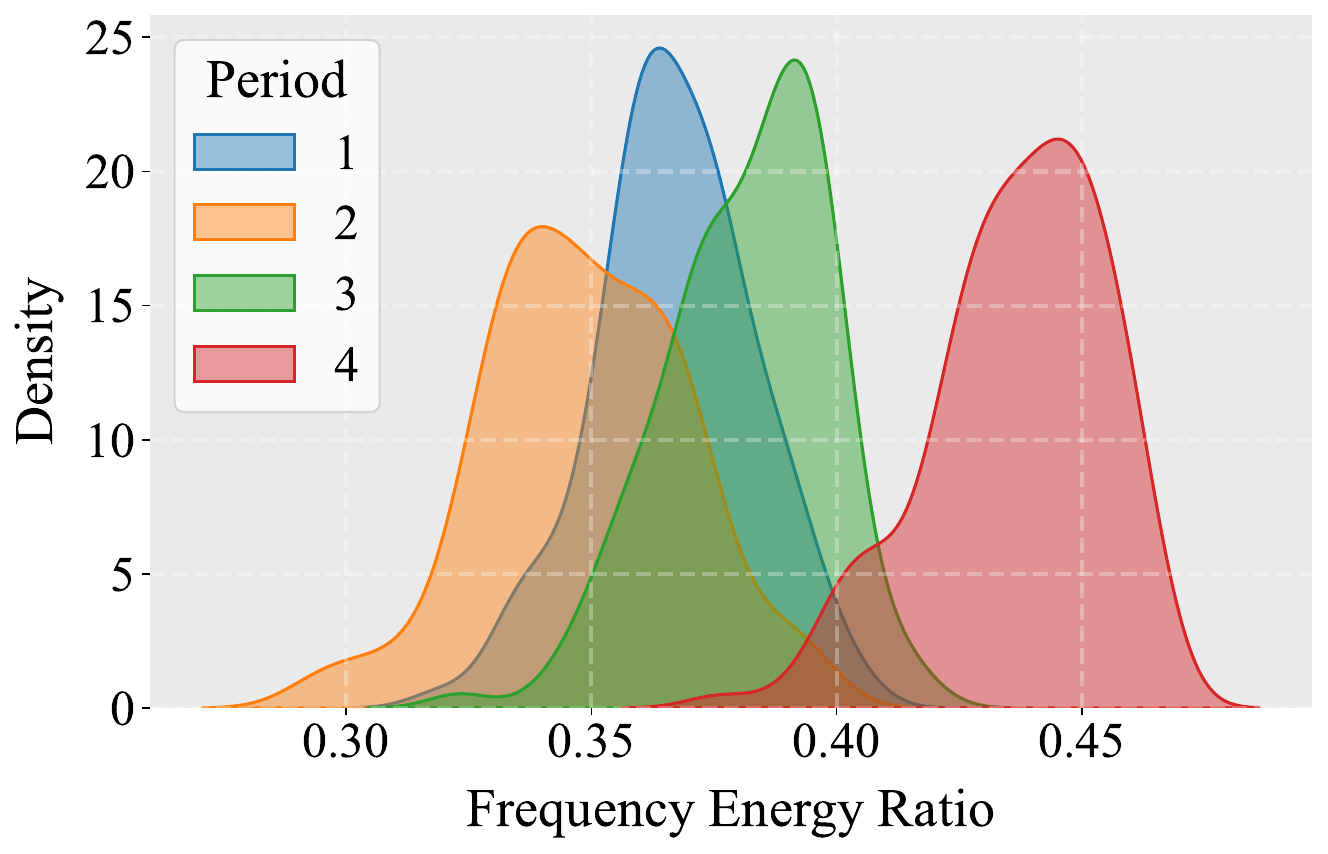} 
  \captionof{figure}{\EnergyStream's~Shift.}
  \label{fig:energy_shift}
\end{minipage}%
\end{table*}

\subsection{Mechanism \& Robustness Study (RQ4)}

\begin{figure}[htbp]
    \vspace{-5mm}
	\centering
	\begin{minipage}{0.32\linewidth}
		\centering
		\includegraphics[width=1.0\linewidth]{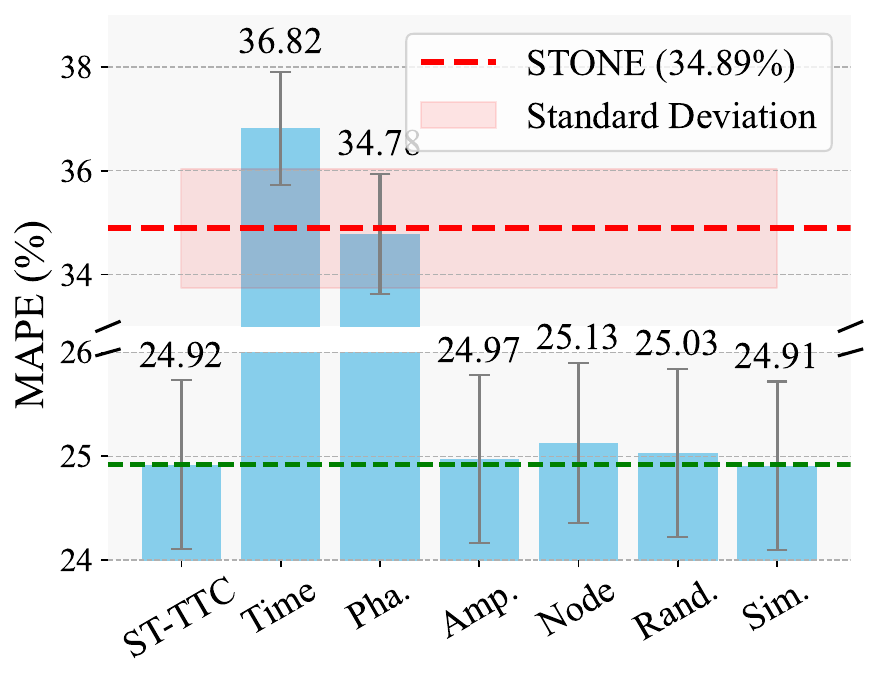}
	\end{minipage}
	\begin{minipage}{0.32\linewidth}
		\centering
		\includegraphics[width=1.0\linewidth]{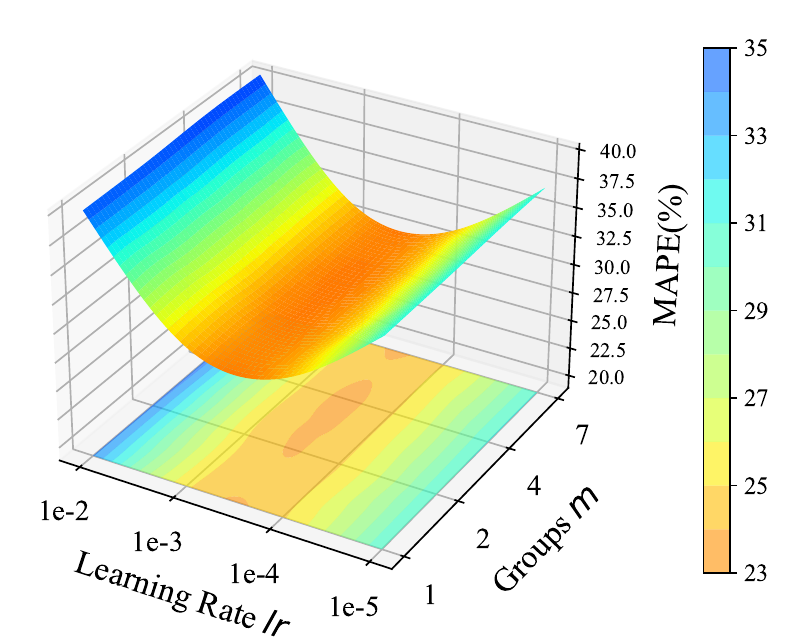}
	\end{minipage}
        \begin{minipage}{0.32\linewidth}
		\centering
		\includegraphics[width=1.0\linewidth]{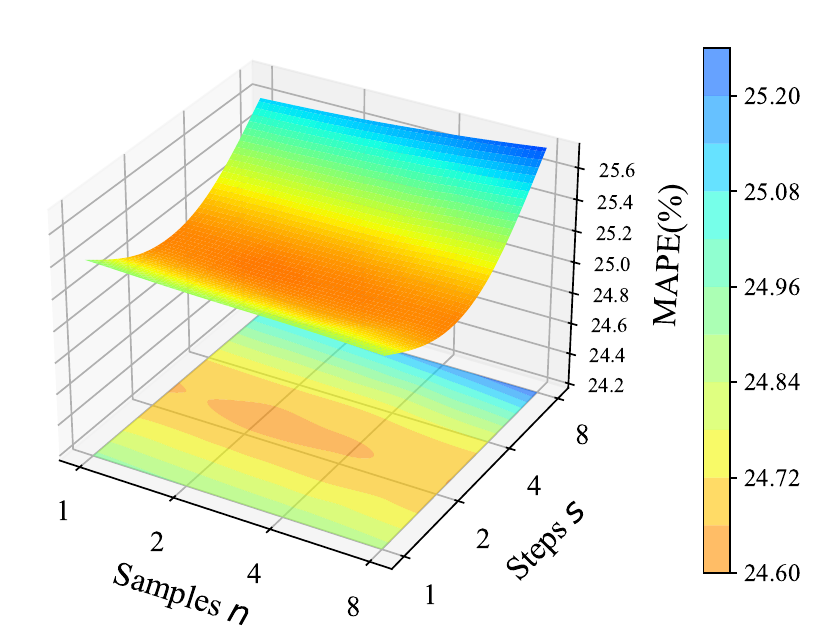}
	\end{minipage}
        \caption{Left: Strategy comparison. Middle: Effect of $lr$ \wrt $m$ . Right: Effect of $n$  \wrt $s$.}
        \label{fig:rq4_study}
        \vspace{-2mm}
\end{figure}

We follow the OOD setup (challenging setting with 20\% new nodes) to evaluate our \model.

\textbf{Strategy Study.}
We compare different strategies: 1) simple nonlinear time domain calibration (\textit{Time}), 2) learning only phase or amplitude modulation factors (\textit{Pha. / Amp.}), 3) node-share modeling (\textit{Node}), and 4) random selection or retrieval of the most similar samples (\textit{Rand. / Sim.}). As shown in Figure~\ref{fig:rq4_study} left, we observe: \ding{182} Frequency-domain calibration significantly outperforms time-domain calibration, with amplitude modulation being the primary contributor; \ding{183} Sharing nodes leads to performance degradation due to spatial heterogeneity in spatio-temporal data; \ding{184} Random sample selection reduces performance, and retrieving similar samples offers negligible gains while incurring higher computational cost. Our proposed update strategy is already near-optimal.

\textbf{Parameter study.}
We analyze the sensitivity of two parameter groups. As shown in the middle and right of Figure~\ref{fig:rq4_study}: \ding{182} Higher learning rates and fewer groups generally lead to poorer performance, likely due to limited parameter capacity hindering stable learning; \ding{183} Increasing the number of samples or update steps has minimal impact on performance (fluctuations < 1\%), but significantly increases time cost, highlighting the rationale of our flash update mechanism.


\subsection{Efficiency \& Lightweight  Study (RQ5)}

\begin{wrapfigure}{r}{5.5cm}
\begin{center}
\vspace{-12mm}
\includegraphics[width=1.0\linewidth]{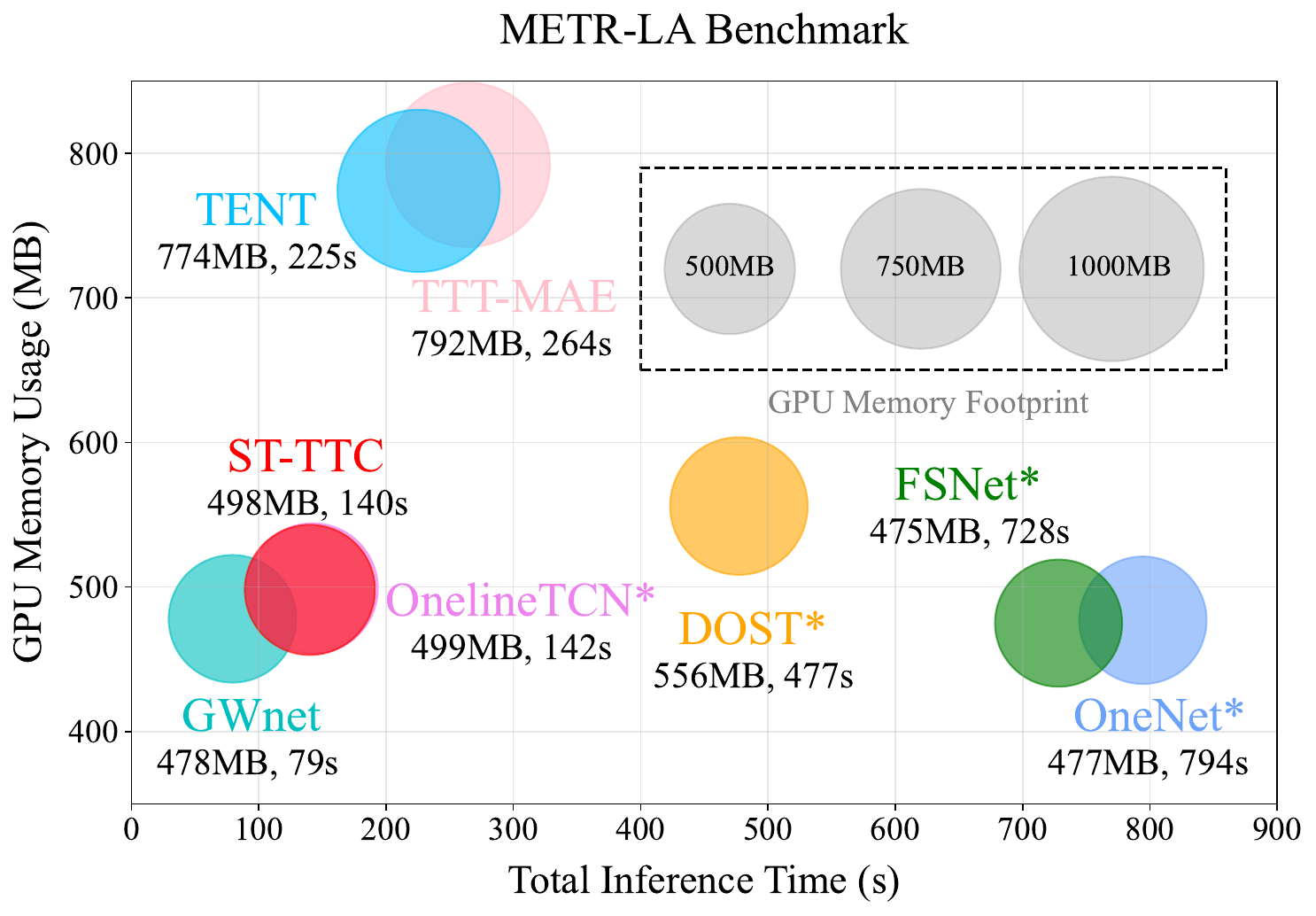}
\setlength{\abovecaptionskip}{-0.1cm}
\caption{time and memory.} 
\label{fig.time_memory}
\end{center}
\vspace{-12mm}
\end{wrapfigure}
\textbf{Result Analysis.}
We use \GWNet~as the backbone and compare \model with other test-time adaptation methods on \MetrLa~in terms of total inference time and memory usage. As shown in Figure~\ref{fig.time_memory}, \model achieves the best overall efficiency (excluding the \GWNet~baseline), being 4.64× faster and reducing memory usage by 37.12\% compared to the least efficient method. These improvements are significant as they are much smaller than the sliding size (5 min.), thereby meeting the real-time requirements.


\section{Conclusion}

In this paper, we investigate the objectives of test-time computation in spatio-temporal forecasting and explore effective approaches for its implementation. We propose \model, a noval paradigm that uses a flash gradient update with streaming memory queue to learning a spectral-domain calibrator via phase-amplitude modulation, effectively addressing non-stationary errors. Extensive experiments confirm its effectiveness, universality, and flexibility. In future work, we aim to explore how to enhance the internal computational capacity of spatio-temporal foundation models during test time.

\clearpage
\section*{Acknowledgment}
The first author would like to thank all the anonymous reviewers for their valuable comments and high recognition. Although he has almost lost his passion for this field and is ready to leave, he hopes that this study can still bring something new to the ST community.

This work is mainly supported by the National Natural Science Foundation of China (No. 62402414). This work is also supported by the Huawei Co., Ltd (No. TC20241023027), the Guangzhou-HKUST(GZ) Joint Funding Program (No. 2024A03J0620), Guangzhou Municipal Science and Technology Project (No. 2023A03J0011),  the Guangzhou Industrial Information and Intelligent Key Laboratory Project (No. 2024A03J0628), and a grant from State Key Laboratory of Resources and Environmental Information System, and Guangdong Provincial Key Lab of Integrated Communication, Sensing and Computation for Ubiquitous Internet of Things (No. 2023B1212010007).

\definecolor{textgray}{HTML}{6E6E73}
\makeatletter
\newcommand\applefootnote[1]{%
  \begingroup
  \renewcommand\thefootnote{}%
  \renewcommand\@makefntext[1]{\noindent##1}%
  \footnote{#1}%
  \addtocounter{footnote}{-1}%
  \endgroup
}
\makeatother

\newcommand\DoToC{%
  \startcontents
  \printcontents{}{1}{\noindent \textbf{\large{Table of Contents}}\vskip3pt\vskip5pt}
  \vskip3pt\vskip5pt
}

\bibliography{ref}
\bibliographystyle{plainnat}
\newpage
\appendix

\section*{Appendix}
\appendix
\counterwithin{figure}{section}
\counterwithin{table}{section}
\fancypagestyle{appendixfooter}{
  \fancyhf{} 
  \renewcommand{\headrulewidth}{0pt}
  \renewcommand{\footrulewidth}{0pt}
  \fancyfoot[L]{\hyperlink{appendix-start}{{\textit{Go to Appendix Index}}}}
  \fancyfoot[C]{\thepage}
}

\hypertarget{appendix-start}{}
\pagestyle{appendixfooter}


\vspace{1.5em}

\hrule height .8pt
\DoToC
\hrule height .8pt

\vspace{1.5em}

\section{More Related Work}\label{more_work}

\subsection{Spectral Domain Learning.} 
Many recent forecasting models leverage spectral (Fourier or wavelet) representations to capture periodic or multiscale patterns in spatio-temporal data. For example, \textit{PastNet}~\citep{wu2024pastnet} integrates a Fourier-domain convolutional operator to embed physical inductive biases, achieving state-of-the-art results in weather and traffic prediction. \textit{FourierGNN}~\citep{yi2023fouriergnn} builds a learnable Fourier-graph operator that conducts graph convolutions in the frequency domain, reducing convolutional complexity from $\mathcal{O}(n^2)$ to $\mathcal{O}(n)$. Wavelet-based methods like \textit{WDNO}~\citep{hu2025wavelet} perform diffusion modeling in the wavelet domain to capture abrupt spatio-temporal changes and multi-resolution features. In the pure time-series setting, approaches such as \textit{FITS}~\citep{xu2024fits} interpolate in the complex Fourier domain and discard negligible high-frequency components to maintain accuracy with very few parameters, and \textit{TimeKAN}~\citep{huang2025timekan} explicitly decomposes multivariate series into multiple frequency bands using Kolmogorov–Arnold networks. These works demonstrate how frequency-domain learning can improve forecasting efficiency and accuracy by isolating dominant spectral components. \textit{Different from these methods, our method combines spectral domain feature extraction with calibration-aware test-time computation to achieve reliable and calibrated forecasts even under changing conditions.}

\subsection{Online Learning for Forecasting.} 
Traditional online forecasting methods include adaptive filters like Kalman filters and recursive least squares that update linear models on streaming data. Recently, deep-learning approaches have been proposed to handle nonstationarity in an online fashion. For instance, \FSNet~\citep{pham2023learning} implements a complementary “fast and slow” learning system: a fast-adapting component for sudden pattern changes and a slow memory component for repeating trends. \OneNet~\citep{wen2023onenet} runs two parallel neural forecasters (one modeling temporal dependencies, one modeling cross-variable dependencies) and uses reinforcement learning to dynamically weight their predictions under concept drift. These methods continuously update model parameters or ensemble weights as new data arrive. A recent study~\citep{lau2025fast} pointed out the information leakage problem of previous online time series prediction methods, where the model makes predictions and then evaluates them based on the historical time steps that have been back-propagated for parameter updates. By redefining the setting to focus on predicting unknown future steps and evaluating unobserved data points, they propose a two-stream framework for online prediction, DSOF, which is conceptually similar to previous methods, generating predictions in a coarse-to-fine manner through a teacher-student model. \textit{Compared with these methods, we focus on the more difficult spatio-temporal predictions while not requiring complex network architecture design. Instead, we propose a calibration-aware framework that focuses on adjusting predictions online instead of learning predictions.}

\subsection{Test-Time Adaptation.} 
Recent test-time adaptation techniques can be grouped by their adaptation strategy. \textit{Entropy minimization} methods adjust a trained model to increase prediction confidence on unlabeled test data. For example,~\citep{wangtent} propose \TENT, which adapts model parameters by minimizing the entropy of its predictions on each test batch and updating batch-normalization layers online. \textit{Feature alignment} methods recalibrate feature distributions using test inputs; for instance, adaptive batch-normalization techniques re-estimate BN statistics on the target data to align feature distributions without labels. \textit{Self-supervised adaptation} uses auxiliary tasks on the test data to refine the model. Test-Time Training~\citep{gandelsman2022test,hardt2024test,sun2020test,wang2025test} converts each test input into a self-supervised learning problem (\eg predicting image rotations) and updates model parameters before making a prediction. Similarly, SHOT~\citep{liang2020we} freezes the source classifier and updates the feature extractor on unlabeled target data using pseudo-labeling and information maximization. Each of these paradigms improves generalization under distribution shift without access to target labels. \textit{In contrast, unlike these methods that exploit self-supervisory information, our spatio-temporal prediction setting can use labels from historical test information, enabling explicit optimization of the objective at test time, ensuring real-time adaptivity.}

\section{Theoretical Analysis}\label{Appendix_theoretical}

\subsection{Approximate Bound on Output Perturbation}

\begin{theorem}[Approximate Bound on Output Perturbation]~\label{theorem_1}
Let $Y \in \mathbb{C}^{B \times N \times M}$ be the original frequency-domain representation of the backbone’s prediction $y \in \mathbb{R}^{B \times N \times T}$, and $y' \in \mathbb{R}^{B \times N \times T}$ be the calibrated output. Suppose the amplitude and phase modulation parameters satisfy $|\lambda^\alpha_g| \leq \epsilon_\alpha$ and $|\lambda^\phi_g| \leq \epsilon_\phi$ for all groups $g \in \{1,\dots,G\}$. Then, the $\ell_2$-norm of the calibration error satisfies:  
$$
\|y' - y\|_2 \leq (\epsilon_\alpha + \epsilon_\phi) \|Y\|_2,
$$  
where $\|Y\|_2$ is the $\ell_2$-norm of $Y$.  
\end{theorem}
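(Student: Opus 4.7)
}
My plan is to reduce the overall perturbation to a per-frequency-bin multiplicative perturbation, bound that scalar perturbation by $\epsilon_\alpha + \epsilon_\phi$, and then lift the bound back to the time domain via Parseval/Plancherel. First I would unwind the SD-Calibrator's definitions: for any bin $m$ in group $g$, writing $Y[m] = A[m]\,e^{j P[m]}$, the calibrated value is $Y'[m] = A[m](1+\lambda^\alpha_g)\,e^{j(P[m]+\lambda^\phi_g)} = Y[m]\cdot(1+\lambda^\alpha_g)\,e^{j\lambda^\phi_g}$. So the calibration acts as an entrywise complex multiplier $c_g := (1+\lambda^\alpha_g)\,e^{j\lambda^\phi_g}$ depending only on the group index (and node, but the same argument is applied independently per node and per sample in the batch).

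The key step is bounding $|c_g - 1|$. I would use the split
\begin{equation*}
(1+\lambda^\alpha_g)\,e^{j\lambda^\phi_g} - 1 = \lambda^\alpha_g\,e^{j\lambda^\phi_g} + \bigl(e^{j\lambda^\phi_g} - 1\bigr),
\end{equation*}
so that by the triangle inequality $|c_g - 1| \le |\lambda^\alpha_g|\cdot|e^{j\lambda^\phi_g}| + |e^{j\lambda^\phi_g}-1|$. Since $|e^{j\theta}|=1$ and $|e^{j\theta}-1| = 2|\sin(\theta/2)| \le |\theta|$, I get $|c_g-1| \le |\lambda^\alpha_g| + |\lambda^\phi_g| \le \epsilon_\alpha + \epsilon_\phi$ uniformly in $g$. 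Summing the bin-wise error $|Y'[m]-Y[m]|^2 = |c_g-1|^2\,|Y[m]|^2$ across bins, nodes, and batch gives
\begin{equation*}
\|Y' - Y\|_2 \le (\epsilon_\alpha + \epsilon_\phi)\,\|Y\|_2.
\end{equation*}

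To translate into the time domain, I would invoke Parseval's identity for the rFFT/irFFT pair: $\|y'-y\|_2 = \|\mathrm{irFFT}(Y'-Y)\|_2$ is equal (up to a fixed normalizing constant $\le 1$ depending on the rFFT convention) to $\|Y'-Y\|_2$, which yields the claimed inequality $\|y' - y\|_2 \le (\epsilon_\alpha + \epsilon_\phi)\,\|Y\|_2$. The main obstacle, in my view, is not any single inequality but rather being careful about (i) handling the group-constant-but-otherwise-arbitrary structure of $c_g$ in the summation (which is straightforward once one writes the bound as a uniform-in-$g$ scalar), and (ii) pinning down the exact rFFT normalization so that the Parseval step yields a clean, nonexpansive transfer from the spectral to the temporal error. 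Once those are nailed down the rest is mechanical.
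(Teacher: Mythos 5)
Your proposal is correct, and it is in fact stronger than the paper's own argument. Both proofs share the same skeleton---view the calibration as a per-bin complex multiplier, bound the multiplicative spectral perturbation uniformly over groups, then transfer to the time domain via Parseval---but the key bounding step differs. The paper linearizes $Y'_g = Y_g(1+\lambda^\alpha_g)e^{j\lambda^\phi_g}$ around $\lambda^\alpha_g=\lambda^\phi_g=0$ to get $\Delta Y_g \approx Y_g(\lambda^\alpha_g + j\lambda^\phi_g)$, explicitly \emph{neglecting higher-order terms}, which yields the slightly tighter constant $\sqrt{\epsilon_\alpha^2+\epsilon_\phi^2}$ but only as a first-order approximation. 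You instead bound the multiplier exactly via the split $(1+\lambda^\alpha_g)e^{j\lambda^\phi_g}-1 = \lambda^\alpha_g e^{j\lambda^\phi_g} + (e^{j\lambda^\phi_g}-1)$ together with $|e^{j\theta}-1| = 2|\sin(\theta/2)| \le |\theta|$, which gives $|c_g-1|\le \epsilon_\alpha+\epsilon_\phi$ with no approximation at all; the stated inequality then follows rigorously rather than ``up to higher-order terms.'' Your approach sacrifices the $\sqrt{\epsilon_\alpha^2+\epsilon_\phi^2}$ refinement (which the paper anyway relaxes to $\epsilon_\alpha+\epsilon_\phi$ in its final line) in exchange for an exact proof of exactly the claimed bound. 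You are also right to flag the rFFT normalization in the Parseval step: for a one-sided real FFT the bins carry unequal weights, so $\|y'-y\|_2$ and $\|\Delta Y\|_2$ agree only up to those weights---a point the paper silently elides---but since your perturbation bound is uniform per bin it survives any fixed weighting, so the conclusion is unaffected.
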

\begin{proof}
Let $\Delta Y = Y' - Y$ denote the frequency-domain perturbation. For each group $g$, the calibrated spectrum is $Y'_g = A_g(1 + \lambda^\alpha_g)e^{j(P_g + \lambda^\phi_g)}$. Expanding $Y'_g$ around $\lambda^\alpha_g = 0, \lambda^\phi_g = 0$, we approximate:  
$$
Y'_g \approx Y_g \left(1 + \lambda^\alpha_g + j\lambda^\phi_g\right),
$$  
where higher-order terms (\eg, $\lambda^\alpha_g \lambda^\phi_g$) are neglected under small $\epsilon_\alpha, \epsilon_\phi$. Thus, the perturbation is:  
$$
\Delta Y_g \approx Y_g (\lambda^\alpha_g + j\lambda^\phi_g).
$$  
The $\ell_2$-norm of $\Delta Y$ is bounded by:  
$$
\|\Delta Y\|_2^2 = \sum_{g=1}^G \sum_{f \in \text{Group } g} |\Delta Y_{g,f}|^2 \leq \sum_{g=1}^G (\epsilon_\alpha^2 + \epsilon_\phi^2) \sum_{f \in \text{Group } g} |Y_{g,f}|^2 = (\epsilon_\alpha^2 + \epsilon_\phi^2)\|Y\|_2^2.
$$  
By Parseval’s theorem~\citep{parseval1806memoire}, $\|y' - y\|_2 = \|\Delta Y\|_2$, hence:  
$$
\|y' - y\|_2 \leq \sqrt{\epsilon_\alpha^2 + \epsilon_\phi^2} \|Y\|_2 \leq (\epsilon_\alpha + \epsilon_\phi)\|Y\|_2.
$$  
\end{proof}

\begin{remark}
This theorem guarantees that the calibration-induced perturbation is sub-linearly bounded by the modulation parameters $\epsilon_\alpha, \epsilon_\phi$. By constraining these parameters (e.g., via regularization during test-time adaptation), SD-Calibrator ensures the calibrated output does not deviate excessively from the original prediction, thereby avoiding overfitting to transient noise. The group-wise parameterization further reduces the effective degrees of freedom (from $\mathcal{O}(NM)$ to $\mathcal{O}(NG)$), inherently limiting the risk of over-parameterization.    
\end{remark}

\subsection{Controlled Descent on Streaming Memory Queues}

\begin{assumption}[Lipschitz Continuous Gradient of the Loss]\label{assump_lip}
The loss function $L_k(\lambda) = \mathcal{L}(g_{\lambda}(f_{\theta}(X_o^{(k)})), Y_o^{(k)})$ is differentiable with respect to $\lambda$, and its gradient $\nabla_{\lambda} L_k(\lambda)$ is Lipschitz continuous with constant $L_c > 0$. That is, for any $\lambda_a, \lambda_b$:
$$\| \nabla_{\lambda} L_k(\lambda_a) - \nabla_{\lambda} L_k(\lambda_b) \|_2 \le L_c \| \lambda_a - \lambda_b \|_2$$
According to the descent Lemma~\citep{nesterov2013introductory}, this implies:
$$L_k(\lambda_b) \le L_k(\lambda_a) + \langle \nabla_{\lambda} L_k(\lambda_a), \lambda_b - \lambda_a \rangle + \frac{L_c}{2} \| \lambda_b - \lambda_a \|_2^2$$
\end{assumption}

\begin{assumption}[Bounded Gradient]\label{assump_bound}
The norm of the gradient of the loss function with respect to the calibrator parameters $\lambda$ is bounded for any sample $(X_o^{(k)}, Y_o^{(k)})$ from the queue and any reasonable parameter set $\lambda_k$:
$$\| \nabla_{\lambda} L_k(\lambda_k) \|_2 \le G_{max}$$
for some constant $G_{max} > 0$.
\end{assumption}

This is a common assumption, especially if the output of the calibrator and the true labels are within a certain range, and the calibrator $g_\lambda$ is well-behaved.

\begin{proposition}[Controlled Descent on Streaming Memory Queues]\label{theorem_2}
Let the above assumptions hold. For the $k$-th update step using the dequeued sample pair $(X_o^{(k)}, Y_o^{(k)})$, if the learning rate $\eta$ satisfies $0 < \eta < \frac{2}{L_c}$, then the single gradient descent step on the SD-Calibrator parameters $\lambda$ ensures a decrease in the loss function for that specific sample:
$$L_k(\lambda_{k+1}) \le L_k(\lambda_k) - \eta \left(1 - \frac{L_c \eta}{2}\right) \| \nabla_{\lambda} L_k(\lambda_k) \|_2^2$$
Furthermore, the change in the calibrator parameters is bounded:
$$\| \lambda_{k+1} - \lambda_k \|_2 \le \eta G_{max}$$
\end{proposition}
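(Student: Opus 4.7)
\textbf{Proof Proposal for Proposition~\ref{theorem_2}.} The plan is to directly invoke the descent lemma form of Assumption~\ref{assump_lip} at the single iterate produced by the flash update, and then read off the two claimed inequalities. The starting point is the explicit update rule from the \textit{Flash Gradient Update} step, namely $\lambda_{k+1} = \lambda_k - \eta\,\nabla_\lambda L_k(\lambda_k)$. Since only the calibrator parameters are modified (the backbone $f_\theta$ being frozen), the loss $L_k$ as a function of $\lambda$ inherits the smoothness stated in Assumption~\ref{assump_lip}, so the quadratic upper bound applies without modification.

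First I would apply the descent-lemma inequality from Assumption~\ref{assump_lip} with $\lambda_a = \lambda_k$ and $\lambda_b = \lambda_{k+1}$. Substituting $\lambda_{k+1} - \lambda_k = -\eta\,\nabla_\lambda L_k(\lambda_k)$ into the inner product term yields $-\eta\,\|\nabla_\lambda L_k(\lambda_k)\|_2^2$, while substitution into the quadratic term yields $\tfrac{L_c}{2}\eta^2\,\|\nabla_\lambda L_k(\lambda_k)\|_2^2$. Collecting these two contributions gives
\begin{equation*}
L_k(\lambda_{k+1}) \;\le\; L_k(\lambda_k) \;-\; \eta\!\left(1 - \frac{L_c\eta}{2}\right)\|\nabla_\lambda L_k(\lambda_k)\|_2^2,
\end{equation*}
which is exactly the first claim. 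The step-size condition $0 < \eta < 2/L_c$ then makes the coefficient $(1 - L_c\eta/2)$ strictly positive, turning the right-hand side into a genuine decrease whenever the gradient is non-zero, and recovering the standard $\eta^* = 1/L_c$ as the sharpest contraction.

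For the second inequality, I would simply take Euclidean norms on both sides of the update rule and apply Assumption~\ref{assump_bound}:
\begin{equation*}
\|\lambda_{k+1} - \lambda_k\|_2 \;=\; \eta\,\|\nabla_\lambda L_k(\lambda_k)\|_2 \;\le\; \eta\,G_{\max}.
\end{equation*}
There is no real obstacle in the argument since both assumptions are exactly tailored to invoke the descent lemma and the gradient-norm bound; the only thing to be careful about is to state that $L_k$ is viewed as a function of $\lambda$ alone (with $X_o^{(k)}, Y_o^{(k)}$ fixed and $f_\theta$ frozen), so that the single-sample Lipschitz constant $L_c$ and gradient bound $G_{\max}$ from Assumptions~\ref{assump_lip}–\ref{assump_bound} govern this step. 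If desired, one could additionally note that combining the descent inequality with the perturbation bound of Theorem~\ref{theorem_1} shows the calibrated output $\hat y^{cal}$ stays within a $\mathcal{O}(\eta G_{\max})$ neighbourhood of its previous value, which connects the optimisation-side stability of this proposition with the representation-side stability of the spectral calibrator.
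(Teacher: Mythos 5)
Your proposal is correct and follows essentially the same route as the paper's proof: apply the descent lemma from Assumption~\ref{assump_lip} at $\lambda_a=\lambda_k$, $\lambda_b=\lambda_{k+1}$, substitute the update rule to collect the $-\eta\bigl(1-\tfrac{L_c\eta}{2}\bigr)\|\nabla_\lambda L_k(\lambda_k)\|_2^2$ term, and take norms of the update with Assumption~\ref{assump_bound} for the second bound. The additional remarks on the optimal step size and the link to Theorem~\ref{theorem_1} are harmless extras not present in the paper.
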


\begin{proof}
Let $L_k(\lambda) = \mathcal{L}(g_{\lambda}(f_{\theta}(X_o^{(k)})), Y_o^{(k)})$ be the loss for the $k$-th dequeued sample.
The parameter update rule is $\lambda_{k+1} = \lambda_k - \eta \nabla_{\lambda} L_k(\lambda_k)$.

From Assumption~\ref{assump_lip}, we have:
$$L_k(\lambda_{k+1}) \le L_k(\lambda_k) + \langle \nabla_{\lambda} L_k(\lambda_k), \lambda_{k+1} - \lambda_k \rangle + \frac{L_c}{2} \| \lambda_{k+1} - \lambda_k \|_2^2$$
Substitute $\lambda_{k+1} - \lambda_k = -\eta \nabla_{\lambda} L_k(\lambda_k)$:
$$L_k(\lambda_{k+1}) \le L_k(\lambda_k) + \langle \nabla_{\lambda} L_k(\lambda_k), -\eta \nabla_{\lambda} L_k(\lambda_k) \rangle + \frac{L_c}{2} \| -\eta \nabla_{\lambda} L_k(\lambda_k) \|_2^2$$
$$L_k(\lambda_{k+1}) \le L_k(\lambda_k) - \eta \| \nabla_{\lambda} L_k(\lambda_k) \|_2^2 + \frac{L_c \eta^2}{2} \| \nabla_{\lambda} L_k(\lambda_k) \|_2^2$$
Factor out $\| \nabla_{\lambda} L_k(\lambda_k) \|_2^2$:
$$L_k(\lambda_{k+1}) \le L_k(\lambda_k) - \eta \left(1 - \frac{L_c \eta}{2}\right) \| \nabla_{\lambda} L_k(\lambda_k) \|_2^2$$
For the loss to decrease (or stay the same if gradient is zero), we require the term $\eta \left(1 - \frac{L_c \eta}{2}\right) \| \nabla_{\lambda} L_k(\lambda_k) \|_2^2 \ge 0$. Since $\eta > 0$ and $\| \nabla_{\lambda} L_k(\lambda_k) \|_2^2 \ge 0$, we need $\left(1 - \frac{L_c \eta}{2}\right) > 0$.
This implies $1 > \frac{L_c \eta}{2}$, so $\frac{2}{L_c} > \eta$.
Thus, if $0 < \eta < \frac{2}{L_c}$, the loss $L_k(\lambda_{k+1})$ on the sample $(X_o^{(k)}, Y_o^{(k)})$ is strictly reduced if $\nabla_{\lambda} L_k(\lambda_k) \neq 0$.

For the bound on parameter change:
$$\| \lambda_{k+1} - \lambda_k \|_2 = \| -\eta \nabla_{\lambda} L_k(\lambda_k) \|_2 = \eta \| \nabla_{\lambda} L_k(\lambda_k) \|_2$$
Using Assumption~\ref{assump_bound}, $\| \nabla_{\lambda} L_k(\lambda_k) \|_2 \le G_{max}$:
$$\| \lambda_{k+1} - \lambda_k \|_2 \le \eta G_{max}$$
This completes the proof.
\end{proof}

\begin{remark}
The proposition demonstrates that each single-step update is not arbitrary but moves the \textit{SD-Calibrator}'s parameters $\lambda$ in a direction that reduces the prediction error on the specific historical sample $(X_o^{(k)}, Y_o^{(k)})$ used for the update, provided the learning rate $\eta$ is chosen appropriately (\ie, small enough, specifically $\eta < 2/L_c$). The condition on $\eta$ ensures that the update step does not overshoot. The second part, $\| \lambda_{k+1} - \lambda_k \|_2 \le \eta G_{max}$, shows that the magnitude of change in the parameters $\lambda$ during each update is bounded. This is crucial for preventing the calibrator from experiencing excessively large or erratic parameter shifts from one step to the next, which could lead to instability or overfitting to noisy individual samples.
\end{remark}

\section{Method Details}

\subsection{Spectral Domain Calibrator}\label{appendix-algorithm-sdc}

\textbf{Algorithm Workflow.}
We summarize the algorithm workflow of Section~\ref{what_to_compute} in Algorithm~\ref{alg:sd_calibrator}.

\textbf{Algorithm Pseudo-code.}
We further present Algorithm~\ref{alg:sd_calibrator} in the form of pytorch pseudo code in Listing ~\ref{pytorch_sdc} for easy understanding.

\subsection{Lightning Gradient Update}\label{appendix-algorithm-lgu}

\textbf{Algorithm Pseudo-code.}
We summarize the algorithm workflow of Section~\ref{how_to_compute} in Algorithm~\ref{alg:flash_gradient}.

\textbf{Algorithm Workflow.}
We further present Algorithm~\ref{alg:flash_gradient} in the form of pytorch pseudo code in Listing ~\ref{pytorch_fgu} for easy understanding.

\begin{algorithm}[htbp!]
\renewcommand{\arraystretch}{1.4}
\caption{Spectral Domain Calibrator}
\label{alg:sd_calibrator}
\begin{algorithmic}[1]
\REQUIRE Pre-trained backbone $f_\theta$, Test input $x$, Horizon length $T$, Number of nodes $N$, Groups $G$
\ENSURE Calibrated output $\hat y^{\mathrm{cal}}$
\STATE Get the backbone predictions: $\hat y = f_\theta(x)\in\mathbb R^{N\times T}$
\STATE Compute $M \leftarrow \tfrac{T}{2}+1$
\\\textcolor{red}{\textbf{\textit{$\vartriangleright$ \uppercase\expandafter{\romannumeral1}: Spatial-aware Decomposition}}}
\STATE Apply real‐to‐complex FFT along time dimension for each node: $Y_{f} \;\leftarrow\; \mathrm{rFFT}(\hat y)\;\in\;\mathbb C^{N\times M}$
\STATE Decompose: $A \leftarrow |Y_{f}|,\quad P \leftarrow \angle Y_{f}$
\\\textcolor{red}{\textbf{\textit{$\vartriangleright$ \uppercase\expandafter{\romannumeral2}: Group-wise Modulation}}}
\FOR{$g=1,\dots,G$} 
  \STATE Get group index: $start\leftarrow (g-1)\lfloor M/G\rfloor+1$, \quad $end\leftarrow
    \begin{cases}
       M & g=G\\
       g\lfloor M/G\rfloor & \text{otherwise}
    \end{cases}$
  \STATE Get learnable offsets: $\lambda^\alpha_g\in\mathbb R^{N\times1},\;\lambda^\phi_g\in\mathbb R^{N\times1}$
  \STATE Modulate group‐slice:$ A'_{g} \leftarrow A[:,start:end]\odot\bigl(1+\lambda^\alpha_g\bigr),\quad P'_{g} \leftarrow P[:,start:end]+\lambda^\phi_g$
  \STATE Reconstruct slice: $Y'_{f}[:,start:end] \leftarrow A'_{g}\,\odot\,e^{j\,P'_{g}}$
\ENDFOR
\\\textcolor{red}{\textbf{\textit{$\vartriangleright$ \uppercase\expandafter{\romannumeral3}: Inverse Transform}}}
\STATE Inverse FFT: $\hat y^{\mathrm{cal}} \leftarrow \mathrm{irFFT}\bigl(Y'_{f}\bigr)\;\in\;\mathbb R^{N\times T}$
\RETURN $\hat y^{\mathrm{cal}}$
\end{algorithmic}
\end{algorithm}

\begin{lstlisting}[language=python,caption={PyTorch-style pseudocode: SD-Calibrator Class},label={pytorch_sdc}]
class SD_Calibrator(nn.Module):
    """
    Spectral Domain Calibrator with Phase-Amplitude Modulation
    """

    def __init__(self, num_nodes, freq_bins, groups=4):
        """
        Args:
            num_nodes: number of spatial nodes (N)
            freq_bins: number of frequency bins (M = T // 2 + 1)
            groups: number of frequency groups (G)
        """
        super().__init__()
        self.groups = groups
        self.group_size = freq_bins // groups

        # Learnable offsets for amplitude and phase: (G, N, 1)
        self.lambda_amp = nn.Parameter(
            torch.zeros(groups, num_nodes, 1)
        )
        self.lambda_phi = nn.Parameter(
            torch.zeros(groups, num_nodes, 1)
        )

    def forward(self, y_pred):
        """
        Args:
            y_pred: prediction from backbone, shape (B, 1, N, T)
            B defaults to 1, because only one sample can be tested
        Returns:
            calibrated prediction, shape (B, 1, N, T)
        """
        B, _, N, T = y_pred.shape
        y = y_pred[:, 0]                  # (B, N, T)

        Yf = torch.fft.rfft(y, dim=-1)    # (B, N, M)
        A = torch.abs(Yf)
        P = torch.angle(Yf)

        Yf_corr = torch.zeros_like(Yf)
        for g in range(self.groups):
            start = g * self.group_size
            if g == self.groups - 1
              end = T // 2 + 1
            else
              end = (g + 1) * self.group_size

            lam_a = self.lambda_amp[g].unsqueeze(0)  # (1, N, 1)
            lam_p = self.lambda_phi[g].unsqueeze(0)

            A_g = A[:, :, start:end] * (1 + lam_a)
            P_g = P[:, :, start:end] + lam_p

            Yf_corr[:, :, start:end] = A_g * torch.exp(1j * P_g)

        y_time = torch.fft.irfft(Yf_corr, n=T, dim=-1)
        return y_time.unsqueeze(1)                # (B, 1, N, T)
\end{lstlisting}

\begin{algorithm}[htbp!]
\renewcommand{\arraystretch}{1.4}
\caption{Flash Gradient Update Mechanism}
\label{alg:flash_gradient}
\begin{algorithmic}[1]
\REQUIRE Test spatio-temporal sample stream $\{x_t\}_{t=1}^{B}$, Pre-trained backbone $f_\theta$, Streaming memory queue $\mathcal{Q}$, Queue size $T$ (equal to horizon length)
\ENSURE Spectral domain calibrator $g_\theta$, Prediction collection of test samples $\{\hat{y}_{t}^{cal}\}_{t=1}^{B}$
\STATE Initialize Calibrator module $g_{\theta}=(\lambda^\alpha,\lambda^\phi)$, empty queue $\mathcal{Q}$
\FOR{each timestep $t=1,2,...$}
    \STATE Receive $x_t$, compute default prediction $\hat{y}_t = f_\theta(x_t)$
    \\\textcolor{red}{\textbf{\textit{$\vartriangleright$ \uppercase\expandafter{\romannumeral1}: Streaming Memory Queue}}}
    \STATE Use Algorithm~\ref{alg:sd_calibrator} to obtain the calibration results: $\hat{y}_t^{cal} = g_\theta(\hat{y}_{t}^{cal}; \lambda)$ 
    \STATE Record ground truth: $y_t$ (collected by the value of $x_t$, available $T$ time steps in the future)
    \STATE $\mathcal{Q}$.enqueue($(x_t, y_t)$)
    \\\textcolor{red}{\textbf{\textit{$\vartriangleright$ \uppercase\expandafter{\romannumeral2}: Flash Gradient Update}}}
    \IF{len($\mathcal{Q}$) $> T$}
        \STATE $(x_o, y_o) = \mathcal{Q}$.dequeue()
        \STATE Use Algorithm~\ref{alg:sd_calibrator} to obtain the calibration results: $\hat{y}_{o}^{cal} = f_\theta(x_o)$
        \STATE Update: $\lambda \leftarrow \lambda - \eta \nabla_\lambda L(y_o, \hat{y}_{o}^{cal})$
    \ENDIF
\ENDFOR
\RETURN $g_\theta$, $\{\hat{y}_t^{cal}\}_{t=1}^{B}$
\end{algorithmic}
\end{algorithm}

\begin{lstlisting}[language=python,caption={PyTorch-style pseudocode: Flash Gradient Update Function},label={pytorch_fgu}]
def st_ttc_test(self, test_loader, node_num, T, groups):
    """
    Flash Gradient Update with Streaming Memory Queue
    """
    SDC = SD_Calibrator(node_num, T//2+1, groups).to(self.device)
    optimizer = torch.optim.Adam(SDC.parameters(), lr=1e-4)
    loss_fn = self._select_criterion()
    SMQ, preds = Queue(maxsize=T), []
    
    for x, y in test_loader:
        x, y = x.to(self.device), y.to(self.device)
        with torch.no_grad():
            y_pred = self.model(x)
            y_corr = SDC(y_pred)

        # Use y_corr for inference
        y_corr = self.scaler.inverse_transform(y_corr)
        preds.append(y_corr.cpu().detach().numpy())
        
        SMQ.put((x, y))
        if SMQ.full():
            x_old, y_old = SMQ.get()
            with torch.no_grad():
                y_pred_old = self.model(x_old)
            
            SDC.train()
            y_corr_old = SDC(y_pred_old)
            y_corr_old = self.scaler.inverse_transform(y_corr_old)
            
            loss = loss_fn(y_corr_old, y_old)
            loss.backward()
            optimizer.step()
            optimizer.zero_grad()
            SDC.eval()
    return preds
\end{lstlisting}

\section{Experimental Details}

\subsection{Datasets Details}\label{appendix_datasets}

Our experiments are carried out on 14 real-world datasets from diffrent domain. The statistics of these spatio-temporal datasets are shown in Table~\ref{tab:dataset_summary}.

We follow the conventional practice~\citep{li2018diffusion} to define the graph topology for all spatio-temporal datasets except Know-Air. Specifically, we construct the adjacency matrix $A$ for each dataset using a threshold Gaussian kernel, defined as follows:
$$
A_{[ij]} = 
\begin{cases} 
\exp\left(-\frac{d_{ij}^2}{\sigma^2}\right) & \text{if } \exp\left(-\frac{d_{ij}^2}{\sigma^2}\right) \geq r ~\text{and}~i \neq j\\ 
0 & \text{otherwise}
\end{cases}
$$
where $d_{ij}$ represents the distance between sensors $i$ and $j$, $\sigma$ is the standard deviation of all distances, and $r$ is the threshold. We follow the recommended parameter settings in all corresponding papers.

For the KnowAir dataset, we follow the original paper~\citep{wang2020pm2} and calculate the correlation between nodes to construct the adjacency matrix. Intuitively, most aerosol pollutants are distributed within a certain range above the ground. In addition, the mountains along the two cities will hinder the transmission of pollutants to the PM$_{2.5}$ direction. Based on these intuitions, we constrain the weights in the adjacency matrix by the following formula:
$$
A_{[ij]} = H(d_\theta - d_{ij}) \cdot H(m_\theta - m_{ij}), 
\quad \text{where} 
$$
$$
d_{ij} = ||\rho_i - \rho_j||, \quad
m_{ij} = \sup_{\lambda \in (0,1)} \{ h(\lambda \rho_i + (1-\lambda)\rho_j) - \max\{h(\rho_i), h(\rho_j)\} \},
$$
where $\rho_i$ is the location (latitude, longitude) of node $i$, $h(\rho)$ is the height of location $\rho$, and $||\cdot||$ is the L2-norm of the vector. $H(\cdot)$ is the Heaviside step function, where $H(x) = 1$ if and only if $x > 0$. $d_\theta$ and $m_\theta$ are the distance and altitude thresholds, respectively. Specifically, we also set the distance threshold $d_\theta = 300$ km and the altitude threshold $m_\theta = 1200$ meters.

\begin{table}[htbp!]
\centering
\tabcolsep=0.65mm
\renewcommand{\arraystretch}{1.5}
  \caption{Summary of datasets used for our experiments. Degree: the average degree of each node. Meta: the number of metadata associated with each node. Data Points: multiplication of nodes and frames. \textcolor{blue}{M}: million ($10^6$).}
  \vspace{2mm}
  \resizebox{\textwidth}{!}{
  \begin{tabular}{cc|ccccc}
    \toprule
     Source & Dataset & Nodes & Time Range & Frames & Sampling Rate & Data Points \\
    \hline
    \multirow{4}{*}{~\citep{song2020spatial}} 
     & PEMS03 & 358 & 09/01/2018 -- 11/30/2018 & 26,208 & 5 minutes & 9.38\textcolor{blue}{M} \\
     & PEMS04 & 307 & 01/01/2018 -- 02/28/2018 & 16,992 & 5 minutes & 5.22\textcolor{blue}{M} \\
     & PEMS07 & 883 & 05/01/2017 -- 08/06/2017 & 28,224 & 5 minutes & 24.92\textcolor{blue}{M} \\
     & PEMS08 & 170 & 07/01/2016 -- 08/31/2016 & 17,856 & 5 minutes & 3.04\textcolor{blue}{M} \\
    \multirow{1}{*}{~\citep{li2025urbanev}} 
     & UrbanEV  & 275 & 09/01/2022 -- 02/28/2023 & 4344 & 1 hour & 1.19\textcolor{blue}{M} \\
     \multirow{1}{*}{~\citep{wang2020pm2}} 
     & Know-Air  & 184 & 01/01/2015 -- 12/31/2018 & 11688 & 3 hours & 2.15\textcolor{blue}{M} \\
    \multirow{1}{*}{~\citep{li2018diffusion}} 
     & METR-LA  & 207 & 03/01/2012 -- 06/27/2012 & 34,272 & 5 minutes & 7.09\textcolor{blue}{M} \\
     \hline
    \multirow{4}{*}{LargeST~\citep{liu2023largest}}
    & CA & 8,600 & 01/01/2019 -- 12/31/2019 & 35,040 & 15 minutes & 30.13\textcolor{blue}{M} \\
    & GLA & 3,834 & 01/01/2019 -- 12/31/2019 & 35,040 & 15 minutes & 13.43\textcolor{blue}{M} \\
    & GBA & 2,352 & 01/01/2019 -- 12/31/2019 & 35,040 & 15 minutes & 8.87\textcolor{blue}{M} \\
    & SD & 716 & 01/01/2019 -- 12/31/2020 & 70,080 & 15 minutes & 5.02\textcolor{blue}{M} \\
    \hline
    \multirow{3}{*}{~\citep{chen2025eac}} 
    & Air-Stream & \makecell[c]{1087 $\rightarrow$ 1154\\ $\rightarrow$ 1193 $\rightarrow$ 1202} & 01/01/2016 - 12/31/2019 & 34065 & 1 hour & 15.79\textcolor{blue}{M} \\
    \cline{3-7}
    & PEMS-Stream & \makecell[c]{655 $\rightarrow$ 715 $\rightarrow$ 786 \\ $\rightarrow$ 822 $\rightarrow$ 834 $\rightarrow$ 850\\ $\rightarrow$ 871} & 07/10/2011 - 09/08/2017 & 61,992 & 5 minutes & 34.30\textcolor{blue}{M} \\
    \cline{3-7}
    & Energy-Stream & \makecell[c]{103 $\rightarrow$ 113 \\ $\rightarrow$ 122 $\rightarrow$ 134} & Unknown (245 days) & 34,560 & 10 minutes & 1.63\textcolor{blue}{M} \\
    \bottomrule
  \end{tabular}
  \label{tab:dataset_summary}
  }
\end{table}

\subsection{Baseline Details}\label{appendix_baselines}

In our paper, we cover various spatio-temporal forecasting methods under various learning paradigms. The following is a classification and brief introduction of these advanced methods:

\textbf{Classical Learning Methods for Spatio-Temporal Forecasting.}

\begin{itemize}[noitemsep, topsep=0pt, partopsep=0pt, leftmargin=6mm,parsep=8pt]
\item \STAEformer~\citep{liu2023spatio}: \STAEformer~is a spatial-temporal adaptive embedding transformer that makes vanilla transformer state-of-the-art for spatio-temporal forecasting. It introduces a novel architecture to effectively capture the dynamic spatial and temporal dependencies in spatio-temporal data. \textcolor{magenta}{\url{https://github.com/XDZhelheim/STAEformer}}
\item \STTN~\citep{xu2020spatial}: \STTN~is a spatial-temporal transformer network designed for traffic flow forecasting. It leverages dynamic directed spatial dependencies and long-range temporal dependencies to enhance the accuracy of long-term traffic predictions. \textcolor{magenta}{\url{https://github.com/xumingxingsjtu/STTN}}
\item \GWNet~\citep{wu2019graph}: \GWNet~is a graph wavenet model for deep spatial-temporal graph modeling. It effectively captures the complex spatial and temporal patterns in spatio-temporal data using a combination of graph convolutional networks and dilated causal convolutions. \textcolor{magenta}{\url{https://github.com/nnzhan/Graph-WaveNet}}
\item \STGCN~\citep{yu2018spatio}: \STGCN~is a spatio-temporal graph convolutional network framework for traffic forecasting. It integrates graph convolutional networks with temporal convolutional networks to model the spatial and temporal dependencies in traffic data. \textcolor{magenta}{\url{https://github.com/hazdzz/stgcn}}
\item \STID~\citep{shao2022spatial}: \STID~is a simple yet effective baseline for spatio-temporal forecasting. It addresses the indistinguishability of samples in spatial and temporal dimensions by attaching spatial and temporal identity information, achieving competitive performance with concise and efficient models. \textcolor{magenta}{\url{https://github.com/GestaltCogTeam/STID}}
\item \STNorm~\citep{deng2021st}: \STNorm~is a method that applies spatial and temporal normalization for multi-variate time series forecasting. It enhances the performance of forecasting models by normalizing the spatial and temporal features of the data. \textcolor{magenta}{\url{https://github.com/JLDeng/ST-Norm}}
\end{itemize}

\textbf{Efficient Learning Methods for Large-Scale Spatio-Temporal Forecasting.}

\begin{itemize}[noitemsep, topsep=0pt, partopsep=0pt, leftmargin=6mm,parsep=5pt]
\item \PatchSTG~\citep{fang2025efficient}: \PatchSTG~is an attention-based dynamic spatial modeling method that uses irregular spatial patching for efficient large-scale spatio-temporal forecasting. It reduces computational complexity by segmenting large-scale inputs into balanced and non-overlapped patches, capturing local and global spatial dependencies effectively. \textcolor{magenta}{\url{https://github.com/lmissher/patchstg}}
\end{itemize}

\textbf{OOD Learning Methods for Spatio-Temporal Forecasting.}

\begin{itemize}[noitemsep, topsep=0pt, partopsep=0pt, leftmargin=6mm,parsep=5pt]
\item \STONE~\citep{wang2024stone}: \STONE~is a state-of-the-art  spatio-temporal OOD learning framework that effectively models spatial heterogeneity and generates temporal and spatial semantic graphs. It introduces a graph perturbation mechanism to enhance the model’s environmental modeling capability for better generalization. \textcolor{magenta}{\url{https://github.com/PoorOtterBob/STONE-KDD-2024}}
\end{itemize}

\textbf{Continual Learning Methods for Spatio-Temporal Forecasting.}

\begin{itemize}[noitemsep, topsep=0pt, partopsep=0pt, leftmargin=6mm,parsep=5pt]
\item \EAC~\citep{chen2025eac}: \EAC~is a state-of-the-art method for exploring the rapid adaptation of models in the face of dynamic spatio-temporal graph changes during supervised finetuning. It follows the principles of expand and compress to address the challenges of retraining models over new data and catastrophic forgetting. \textcolor{magenta}{\url{https://github.com/Onedean/EAC}}
\item \STKEC~\citep{wang2023knowledge}: \STKEC~is a continual learning framework for traffic flow prediction on expanding traffic networks. It introduces a pattern bank to store representative network patterns and employs a pattern expansion mechanism to incorporate new patterns from evolving networks without requiring historical graph data. \textcolor{magenta}{\url{https://github.com/wangbinwu13116175205/STKEC}}
\end{itemize}

In addition to these advanced spatio-temporal forecasting models, we also cover various competitive baselines that learn with test information, mainly in the following three categories:

\textbf{Popular test-time training methods}

\begin{itemize}[noitemsep, topsep=0pt, partopsep=0pt, leftmargin=6mm,parsep=5pt]
\item \TTTMAE~\citep{gandelsman2022test}: \TTTMAE~is a test-time training method that uses masked autoencoders to adjust the model during inference. It helps improve the performance of the model on unseen data by effectively utilizing test-time information. We adapt it to the backbone model of the spatiotemporal network, which is divided into a feature extractor and a prediction head as well as a self-supervisory head. \textcolor{magenta}{\url{https://github.com/Rima-ag/TTT-MAE}}
\item \TENT~\citep{wangtent}: \TENT~is a method for adjusting the model at test time by normalizing the activation function to reduce the offset between the training distribution and the test distribution. It enhances the generalization ability of the model without retraining on labeled test data. Although it is theoretically designed mainly for the cross entropy loss function, that is, classification tasks, we can still directly apply it to our prediction scenarios. \textcolor{magenta}{\url{https://github.com/DequanWang/tent}}
\end{itemize}

\textbf{Classical online time series forecasting methods}

\begin{itemize}[noitemsep, topsep=0pt, partopsep=0pt, leftmargin=6mm,parsep=5pt]
\item \OnlineTCN~\citep{zinkevich2003online}: \OnlineTCN~is an online learning method based on a time convolutional network. It can adapt to new data sequentially and is very suitable for real-time prediction applications where data arrives continuously. \textcolor{magenta}{\url{https://github.com/locuslab/TCN}}
\item \FSNet~\citep{pham2023learning}: \FSNet~proposes a fast and slow learning network for online time series prediction that can handle both sudden changes and repeated patterns. In particular, \FSNet~improves on a slowly learning backbone by dynamically balancing fast adaptation to recent changes and retrieval of similar old knowledge. \FSNet~implements this mechanism through the interaction between two complementary components of the adapter to monitor each layer's contribution to missing events, and an associative memory that supports remembering, updating, and recalling repeated events. \textcolor{magenta}{\url{https://github.com/salesforce/fsnet}}
\item \OneNet~\citep{wen2023onenet}: \OneNet~dynamically updates and combines two models, one focusing on modeling dependencies across time dimensions and the other focusing on cross-variable dependencies. The approach integrates reinforcement learning-based methods into a traditional online convex programming framework, allowing the two models to be linearly combined with dynamically adjusted weights, thereby addressing the main drawback of classical online prediction methods that are slow to adapt to concept drift. \textcolor{magenta}{\url{https://github.com/yfzhang114/OneNet}}
\end{itemize}

\textbf{Advanced spatio-temporal forecasting methods using test information.}

\begin{itemize}[noitemsep, topsep=0pt, partopsep=0pt, leftmargin=6mm,parsep=5pt]
\item \CompFormer~\citep{zhang2023test}: \CompFormer~proposes a test-time compensated representation learning framework, including a spatiotemporal decomposed database and a multi-head spatial transformer model. The former component explicitly separates all training data along the time dimension according to periodic features, while the latter component establishes connections between recent observations and historical sequences in the database through a spatial attention matrix. This enables it to transfer robust features to overcome abnormal events
\item \DOST~\citep{wang2024distribution}: \DOST~proposes a novel online continuous learning framework tailored to the characteristics of spatiotemporal data. \DOST adopts an adaptive spatiotemporal network equipped with variable independent adapters to dynamically address the unique distribution changes of each urban location. In addition, to adapt to the gradual nature of these transformations, a wake-sleep learning strategy is used, which intermittently fine-tunes the adapters during the online stage to reduce computational overhead.
\end{itemize}

\subsection{Protocol Details}\label{appendix_protocols}

\textbf{Metrics Detail.} We use different metrics such as MAE, RMSE, and MAPE. Formally, these metrics are formulated as following:
$$
\text{MAE} = \frac{1}{n} \sum_{i=1}^{n} |y_i - \hat{y}_i|,~~~\text{RMSE} = \sqrt{\frac{1}{n} \sum_{i=1}^{n} (y_i - \hat{y}_i)^2},~~~\text{MAPE}=\frac{100\%}{n}\sum_{i=1}^n\left|\frac{\hat{y}_i-y_i}{y_i}\right|
$$
where $n$ represents the indices of all observed samples, $y_i$ denotes the $i$-th actual sample, and $\hat{y_i}$ is the corresponding prediction.

\textbf{Parameter Detail.} For the hyper-parameter settings of all baseline methods, we follow the parameter settings recommended by the corresponding references. For our paper, except for the robustness study section, all other experimental hyper-parameters are set uniformly: the learning rate $lr$ is 1e-4, and the number of groups $m$ is set to 4. All experiments are conducted on a Linux server equipped with a 1 × AMD EPYC 7763 128-Core Processor CPU (256GB memory) and 4 × NVIDIA RTX A6000 (48GB memory) GPUs. To carry out benchmark testing experiments, all baselines are set to run for a duration of 100$\sim$150 epochs by default (depends on the corresponding paper), with specific timings contingent upon the method with early stop mechanism. The number of early stopping steps is set to 10. 

\section{More Results}

\subsection{Complete Results Table}

We provide complete information of the experimental tables in the main text as Table~\ref{tab:rq2_fewshot_12},~\ref{tab:rq2_largest},~\ref{tab:rq3_ood}

\begin{table*}[ht]
    \renewcommand{\arraystretch}{1.6}
    \setlength{\tabcolsep}{5.5pt}
    \centering
    \caption{Performance comparison of different models w/ and w/o \model in the few-shot scenario.}
    \resizebox{\textwidth}{!}{
    \begin{tabular}{cccccccccccccc}
        \toprule
        
        \multicolumn{2}{c}{\multirow{2}{*}{\textbf{Models}}} & \multicolumn{4}{c}{\textbf{Transformer-based}} & \multicolumn{4}{c}{\textbf{Graph-based}} & \multicolumn{4}{c}{\textbf{MLP-based}}
        \\
        
        \cmidrule(lr){3-6} \cmidrule(lr){7-10} \cmidrule(lr){11-14}
        & & \multicolumn{2}{c}{\textbf{\textit{STAEformer}}~\citep{liu2023spatio}} & \multicolumn{2}{c}{\textbf{\textit{STTN}}~\citep{xu2020spatial}} & \multicolumn{2}{c}{\textbf{\textit{GWNet}}~\citep{wu2019graph}} & \multicolumn{2}{c}{\textbf{\textit{STGCN}}~\citep{yu2018spatio}} & \multicolumn{2}{c}{\textbf{\textit{STID}}~\citep{shao2022spatial}} & \multicolumn{2}{c}{\textbf{\textit{ST-Norm}}~\citep{deng2021st}}
        \\
        \multicolumn{2}{c}{w/ \model} & \rxmark & \gcmark & \rxmark & \gcmark & \rxmark & \gcmark & \rxmark & \gcmark & \rxmark & \gcmark & \rxmark & \gcmark \\
        
        \cmidrule(lr){1-2} \cmidrule(lr){3-4} \cmidrule(lr){5-6}
        \cmidrule(lr){7-8} \cmidrule(lr){9-10} \cmidrule(lr){11-12}
        \cmidrule(lr){13-14}
        
        \multirow{3}{*}{\PemsThree} 
        & MAE
        & $23.57\textcolor{gray}{\text{\scriptsize±0.90}}$ & \cellcolor{gray!8} $\textbf{22.96\textcolor{gray}{\text{\scriptsize±0.93}}}$
        & $21.32\textcolor{gray}{\text{\scriptsize±0.93}}$ & \cellcolor{gray!8} $\textbf{21.15\textcolor{gray}{\text{\scriptsize±0.92}}}$
        & $21.70\textcolor{gray}{\text{\scriptsize±0.98}}$ & \cellcolor{gray!8} $\textbf{21.43\textcolor{gray}{\text{\scriptsize±0.92}}}$
        & $21.79\textcolor{gray}{\text{\scriptsize±0.50}}$ & \cellcolor{gray!8} $\textbf{21.28\textcolor{gray}{\text{\scriptsize±0.54}}}$
        & $21.81\textcolor{gray}{\text{\scriptsize±0.24}}$ & \cellcolor{gray!8} $\textbf{21.57\textcolor{gray}{\text{\scriptsize±0.24}}}$
        & $21.13\textcolor{gray}{\text{\scriptsize±0.31}}$ & \cellcolor{gray!8} $\textbf{20.74\textcolor{gray}{\text{\scriptsize±0.28}}}$
        \\
        
        & RMSE
        & $37.90\textcolor{gray}{\text{\scriptsize±1.41}}$ & \cellcolor{gray!8} $\textbf{37.10\textcolor{gray}{\text{\scriptsize±1.59}}}$
        & $34.19\textcolor{gray}{\text{\scriptsize±1.59}}$ & \cellcolor{gray!8} $\textbf{33.87\textcolor{gray}{\text{\scriptsize±1.53}}}$
        & $34.52\textcolor{gray}{\text{\scriptsize±1.44}}$ & \cellcolor{gray!8} $\textbf{34.28\textcolor{gray}{\text{\scriptsize±1.40}}}$
        & $34.82\textcolor{gray}{\text{\scriptsize±0.92}}$ & \cellcolor{gray!8} $\textbf{34.08\textcolor{gray}{\text{\scriptsize±0.89}}}$
        & $35.58\textcolor{gray}{\text{\scriptsize±0.53}}$ & \cellcolor{gray!8} $\textbf{35.18\textcolor{gray}{\text{\scriptsize±0.52}}}$
        & $33.76\textcolor{gray}{\text{\scriptsize±0.45}}$ & \cellcolor{gray!8} $\textbf{33.07\textcolor{gray}{\text{\scriptsize±0.35}}}$
        \\
        
        & MAPE(\%)
        & $21.49\textcolor{gray}{\text{\scriptsize±1.21}}$ & \cellcolor{gray!8} $\textbf{21.23\textcolor{gray}{\text{\scriptsize±1.03}}}$
        & $21.14\textcolor{gray}{\text{\scriptsize±1.35}}$ & \cellcolor{gray!8} $\textbf{21.13\textcolor{gray}{\text{\scriptsize±1.28}}}$
        & $21.30\textcolor{gray}{\text{\scriptsize±1.14}}$ & \cellcolor{gray!8} $\textbf{19.70\textcolor{gray}{\text{\scriptsize±0.57}}}$
        & $22.85\textcolor{gray}{\text{\scriptsize±0.76}}$ & \cellcolor{gray!8} $\textbf{21.59\textcolor{gray}{\text{\scriptsize±0.77}}}$
        & $21.46\textcolor{gray}{\text{\scriptsize±0.86}}$ & \cellcolor{gray!8} $\textbf{21.27\textcolor{gray}{\text{\scriptsize±0.73}}}$
        & $22.57\textcolor{gray}{\text{\scriptsize±3.13}}$ & \cellcolor{gray!8} $\textbf{22.04\textcolor{gray}{\text{\scriptsize±2.22}}}$
        \\
        
        \midrule
        
        \multirow{3}{*}{\PemsFour} 
        & MAE 
        & $35.10\textcolor{gray}{\text{\scriptsize±4.25}}$ & \cellcolor{gray!8} $\textbf{34.57\textcolor{gray}{\text{\scriptsize±4.08}}}$
        & $29.76\textcolor{gray}{\text{\scriptsize±0.37}}$ & \cellcolor{gray!8} $\textbf{29.44\textcolor{gray}{\text{\scriptsize±0.31}}}$
        & $33.22\textcolor{gray}{\text{\scriptsize±1.86}}$ & \cellcolor{gray!8} $\textbf{32.87\textcolor{gray}{\text{\scriptsize±1.95}}}$
        & $29.97\textcolor{gray}{\text{\scriptsize±0.81}}$ & \cellcolor{gray!8} $\textbf{29.66\textcolor{gray}{\text{\scriptsize±0.83}}}$
        & $29.64\textcolor{gray}{\text{\scriptsize±0.67}}$ & \cellcolor{gray!8} $\textbf{29.49\textcolor{gray}{\text{\scriptsize±0.65}}}$
        & $30.66\textcolor{gray}{\text{\scriptsize±0.11}}$ & \cellcolor{gray!8} $\textbf{30.31\textcolor{gray}{\text{\scriptsize±0.14}}}$
        \\
        
        & RMSE
        & $50.94\textcolor{gray}{\text{\scriptsize±4.86}}$ & \cellcolor{gray!8} $\textbf{50.23\textcolor{gray}{\text{\scriptsize±4.59}}}$
        & $44.17\textcolor{gray}{\text{\scriptsize±0.75}}$ & \cellcolor{gray!8} $\textbf{43.89\textcolor{gray}{\text{\scriptsize±0.81}}}$
        & $50.09\textcolor{gray}{\text{\scriptsize±2.56}}$ & \cellcolor{gray!8} $\textbf{49.73\textcolor{gray}{\text{\scriptsize±2.73}}}$
        & $45.96\textcolor{gray}{\text{\scriptsize±1.26}}$ & \cellcolor{gray!8} $\textbf{45.47\textcolor{gray}{\text{\scriptsize±1.28}}}$
        & $44.81\textcolor{gray}{\text{\scriptsize±1.13}}$ & \cellcolor{gray!8} $\textbf{44.62\textcolor{gray}{\text{\scriptsize±1.10}}}$
        & $45.86\textcolor{gray}{\text{\scriptsize±0.50}}$ & \cellcolor{gray!8} $\textbf{45.33\textcolor{gray}{\text{\scriptsize±0.49}}}$
        \\
        
        & MAPE(\%)
        & $23.55\textcolor{gray}{\text{\scriptsize±3.28}}$ & \cellcolor{gray!8} $\textbf{23.40\textcolor{gray}{\text{\scriptsize±3.26}}}$
        & $23.51\textcolor{gray}{\text{\scriptsize±1.27}}$ & \cellcolor{gray!8} $\textbf{22.54\textcolor{gray}{\text{\scriptsize±0.71}}}$
        & $22.97\textcolor{gray}{\text{\scriptsize±3.52}}$ & \cellcolor{gray!8} $\textbf{22.43\textcolor{gray}{\text{\scriptsize±3.17}}}$
        & $20.80\textcolor{gray}{\text{\scriptsize±1.19}}$ & \cellcolor{gray!8} $\textbf{20.72\textcolor{gray}{\text{\scriptsize±1.19}}}$
        & $22.90\textcolor{gray}{\text{\scriptsize±1.77}}$ & \cellcolor{gray!8} $\textbf{22.70\textcolor{gray}{\text{\scriptsize±1.72}}}$
        & $21.75\textcolor{gray}{\text{\scriptsize±0.67}}$ & \cellcolor{gray!8} $\textbf{21.72\textcolor{gray}{\text{\scriptsize±0.60}}}$
        \\
        
        \midrule
        
        \multirow{3}{*}{\PemsSeven} 
        & MAE 
        & $30.45\textcolor{gray}{\text{\scriptsize±0.47}}$ & \cellcolor{gray!8} $\textbf{29.70\textcolor{gray}{\text{\scriptsize±0.39}}}$
        & $31.70\textcolor{gray}{\text{\scriptsize±0.82}}$ & \cellcolor{gray!8} $\textbf{31.22\textcolor{gray}{\text{\scriptsize±0.69}}}$
        & $33.17\textcolor{gray}{\text{\scriptsize±0.65}}$ & \cellcolor{gray!8} $\textbf{32.82\textcolor{gray}{\text{\scriptsize±0.63}}}$
        & $32.64\textcolor{gray}{\text{\scriptsize±0.72}}$ & \cellcolor{gray!8} $\textbf{31.88\textcolor{gray}{\text{\scriptsize±0.77}}}$
        & $31.42\textcolor{gray}{\text{\scriptsize±1.00}}$ & \cellcolor{gray!8} $\textbf{30.91\textcolor{gray}{\text{\scriptsize±1.05}}}$
        & $31.14\textcolor{gray}{\text{\scriptsize±0.06}}$ & \cellcolor{gray!8} $\textbf{30.50\textcolor{gray}{\text{\scriptsize±0.05}}}$
        \\
        
        & RMSE
        & $47.89\textcolor{gray}{\text{\scriptsize±0.81}}$ & \cellcolor{gray!8} $\textbf{46.89\textcolor{gray}{\text{\scriptsize±0.76}}}$
        & $46.57\textcolor{gray}{\text{\scriptsize±1.10}}$ & \cellcolor{gray!8} $\textbf{45.96\textcolor{gray}{\text{\scriptsize±0.93}}}$
        & $49.83\textcolor{gray}{\text{\scriptsize±0.59}}$ & \cellcolor{gray!8} $\textbf{49.36\textcolor{gray}{\text{\scriptsize±0.57}}}$
        & $48.65\textcolor{gray}{\text{\scriptsize±0.14}}$ & \cellcolor{gray!8} $\textbf{47.61\textcolor{gray}{\text{\scriptsize±0.05}}}$
        & $47.51\textcolor{gray}{\text{\scriptsize±0.82}}$ & \cellcolor{gray!8} $\textbf{46.71\textcolor{gray}{\text{\scriptsize±0.97}}}$
        & $47.45\textcolor{gray}{\text{\scriptsize±0.43}}$ & \cellcolor{gray!8} $\textbf{46.54\textcolor{gray}{\text{\scriptsize±0.42}}}$
        \\
        
        & MAPE(\%)
        & $13.87\textcolor{gray}{\text{\scriptsize±0.27}}$ & \cellcolor{gray!8} $\textbf{13.53\textcolor{gray}{\text{\scriptsize±0.23}}}$
        & $14.58\textcolor{gray}{\text{\scriptsize±0.02}}$ & \cellcolor{gray!8} $\textbf{14.45\textcolor{gray}{\text{\scriptsize±0.13}}}$
        & $15.04\textcolor{gray}{\text{\scriptsize±0.70}}$ & \cellcolor{gray!8} $\textbf{14.74\textcolor{gray}{\text{\scriptsize±0.58}}}$
        & $17.13\textcolor{gray}{\text{\scriptsize±1.40}}$ & \cellcolor{gray!8} $\textbf{16.28\textcolor{gray}{\text{\scriptsize±0.99}}}$
        & $15.27\textcolor{gray}{\text{\scriptsize±1.15}}$ & \cellcolor{gray!8} $\textbf{15.03\textcolor{gray}{\text{\scriptsize±1.20}}}$
        & $14.60\textcolor{gray}{\text{\scriptsize±0.67}}$ & \cellcolor{gray!8} $\textbf{14.18\textcolor{gray}{\text{\scriptsize±0.46}}}$
        \\
        
        \midrule
        
        \multirow{3}{*}{\PemsEight} 
        & MAE 
        & $36.98\textcolor{gray}{\text{\scriptsize±7.31}}$ & \cellcolor{gray!8} $\textbf{35.47\textcolor{gray}{\text{\scriptsize±6.03}}}$
        & $24.17\textcolor{gray}{\text{\scriptsize±0.42}}$ & \cellcolor{gray!8} $\textbf{23.81\textcolor{gray}{\text{\scriptsize±0.41}}}$
        & $26.21\textcolor{gray}{\text{\scriptsize±0.85}}$ & \cellcolor{gray!8} $\textbf{25.94\textcolor{gray}{\text{\scriptsize±0.97}}}$
        & $25.97\textcolor{gray}{\text{\scriptsize±0.25}}$ & \cellcolor{gray!8} $\textbf{25.31\textcolor{gray}{\text{\scriptsize±0.21}}}$
        & $24.03\textcolor{gray}{\text{\scriptsize±0.27}}$ & \cellcolor{gray!8} $\textbf{23.75\textcolor{gray}{\text{\scriptsize±0.28}}}$
        & $24.34\textcolor{gray}{\text{\scriptsize±0.09}}$ & \cellcolor{gray!8} $\textbf{24.06\textcolor{gray}{\text{\scriptsize±0.07}}}$
        \\
        
        & RMSE
        & $54.61\textcolor{gray}{\text{\scriptsize±10.46}}$ & \cellcolor{gray!8} $\textbf{52.43\textcolor{gray}{\text{\scriptsize±8.37}}}$
        & $36.89\textcolor{gray}{\text{\scriptsize±0.45}}$ & \cellcolor{gray!8} $\textbf{36.61\textcolor{gray}{\text{\scriptsize±0.50}}}$
        & $40.81\textcolor{gray}{\text{\scriptsize±0.95}}$ & \cellcolor{gray!8} $\textbf{40.51\textcolor{gray}{\text{\scriptsize±1.11}}}$
        & $38.53\textcolor{gray}{\text{\scriptsize±0.15}}$ & \cellcolor{gray!8} $\textbf{37.80\textcolor{gray}{\text{\scriptsize±0.10}}}$
        & $37.62\textcolor{gray}{\text{\scriptsize±0.77}}$ & \cellcolor{gray!8} $\textbf{37.36\textcolor{gray}{\text{\scriptsize±0.77}}}$
        & $37.45\textcolor{gray}{\text{\scriptsize±0.16}}$ & \cellcolor{gray!8} $\textbf{37.20\textcolor{gray}{\text{\scriptsize±0.20}}}$
        \\
        
        & MAPE(\%)
        & $27.38\textcolor{gray}{\text{\scriptsize±9.55}}$ & \cellcolor{gray!8} $\textbf{26.19\textcolor{gray}{\text{\scriptsize±8.29}}}$
        & $18.10\textcolor{gray}{\text{\scriptsize±0.52}}$ & \cellcolor{gray!8} $\textbf{17.65\textcolor{gray}{\text{\scriptsize±0.54}}}$
        & $17.00\textcolor{gray}{\text{\scriptsize±1.10}}$ & \cellcolor{gray!8} $\textbf{16.52\textcolor{gray}{\text{\scriptsize±1.25}}}$
        & $20.16\textcolor{gray}{\text{\scriptsize±1.91}}$ & \cellcolor{gray!8} $\textbf{17.84\textcolor{gray}{\text{\scriptsize±0.88}}}$
        & $15.07\textcolor{gray}{\text{\scriptsize±0.53}}$ & \cellcolor{gray!8} $\textbf{14.43\textcolor{gray}{\text{\scriptsize±0.20}}}$
        & $15.28\textcolor{gray}{\text{\scriptsize±0.59}}$ & \cellcolor{gray!8} $\textbf{14.99\textcolor{gray}{\text{\scriptsize±0.29}}}$
        \\

        \midrule
        \multirow{3}{*}{\KnowAir}
        & MAE
        & $18.48\textcolor{gray}{\text{\scriptsize±0.50}}$ &\cellcolor{gray!8} $\textbf{18.16\textcolor{gray}{\text{\scriptsize±0.35}}}$
        & $20.47\textcolor{gray}{\text{\scriptsize±0.23}}$ &\cellcolor{gray!8} $\textbf{19.85\textcolor{gray}{\text{\scriptsize±0.23}}}$
        & $19.32\textcolor{gray}{\text{\scriptsize±0.32}}$ &\cellcolor{gray!8} $\textbf{19.04\textcolor{gray}{\text{\scriptsize±0.29}}}$
        & $20.59\textcolor{gray}{\text{\scriptsize±0.26}}$ &\cellcolor{gray!8} $\textbf{20.09\textcolor{gray}{\text{\scriptsize±0.25}}}$
        & $22.58\textcolor{gray}{\text{\scriptsize±1.20}}$ &\cellcolor{gray!8} $\textbf{21.72\textcolor{gray}{\text{\scriptsize±0.94}}}$
        & $21.52\textcolor{gray}{\text{\scriptsize±0.39}}$ &\cellcolor{gray!8} $\textbf{20.92\textcolor{gray}{\text{\scriptsize±0.36}}}$
        \\
        & RMSE
        & $27.20\textcolor{gray}{\text{\scriptsize±0.09}}$ &\cellcolor{gray!8} $\textbf{26.97\textcolor{gray}{\text{\scriptsize±0.08}}}$
        & $28.95\textcolor{gray}{\text{\scriptsize±0.44}}$ &\cellcolor{gray!8} $\textbf{28.51\textcolor{gray}{\text{\scriptsize±0.46}}}$
        & $27.79\textcolor{gray}{\text{\scriptsize±0.13}}$ &\cellcolor{gray!8} $\textbf{27.58\textcolor{gray}{\text{\scriptsize±0.15}}}$
        & $29.05\textcolor{gray}{\text{\scriptsize±0.20}}$ &\cellcolor{gray!8} $\textbf{28.65\textcolor{gray}{\text{\scriptsize±0.21}}}$
        & $30.25\textcolor{gray}{\text{\scriptsize±1.03}}$ &\cellcolor{gray!8} $\textbf{29.69\textcolor{gray}{\text{\scriptsize±0.83}}}$
        & $29.28\textcolor{gray}{\text{\scriptsize±0.44}}$ &\cellcolor{gray!8} $\textbf{28.86\textcolor{gray}{\text{\scriptsize±0.41}}}$
        \\
        & MAPE(\%)
        & $72.37\textcolor{gray}{\text{\scriptsize±7.17}}$ &\cellcolor{gray!8} $\textbf{70.14\textcolor{gray}{\text{\scriptsize±4.89}}}$
        & $85.39\textcolor{gray}{\text{\scriptsize±1.82}}$ &\cellcolor{gray!8} $\textbf{81.21\textcolor{gray}{\text{\scriptsize±1.63}}}$
        & $80.49\textcolor{gray}{\text{\scriptsize±5.43}}$ &\cellcolor{gray!8} $\textbf{78.49\textcolor{gray}{\text{\scriptsize±5.40}}}$
        & $84.80\textcolor{gray}{\text{\scriptsize±2.40}}$ &\cellcolor{gray!8} $\textbf{82.13\textcolor{gray}{\text{\scriptsize±2.18}}}$
        & $102.09\textcolor{gray}{\text{\scriptsize±6.60}}$ &\cellcolor{gray!8} $\textbf{95.69\textcolor{gray}{\text{\scriptsize±5.08}}}$
        & $95.24\textcolor{gray}{\text{\scriptsize±2.26}}$ &\cellcolor{gray!8} $\textbf{91.11\textcolor{gray}{\text{\scriptsize±1.80}}}$
        \\
        
        \midrule
        \multirow{3}{*}{\UrbanEV}
        & MAE
        & $3.39\textcolor{gray}{\text{\scriptsize±0.12}}$ &\cellcolor{gray!8} $\textbf{3.36\textcolor{gray}{\text{\scriptsize±0.07}}}$
        & $4.12\textcolor{gray}{\text{\scriptsize±0.06}}$ &\cellcolor{gray!8} $\textbf{4.05\textcolor{gray}{\text{\scriptsize±0.06}}}$
        & $3.92\textcolor{gray}{\text{\scriptsize±0.09}}$ &\cellcolor{gray!8} $\textbf{3.88\textcolor{gray}{\text{\scriptsize±0.09}}}$
        & $4.21\textcolor{gray}{\text{\scriptsize±0.10}}$ &\cellcolor{gray!8} $\textbf{4.10\textcolor{gray}{\text{\scriptsize±0.10}}}$
        & $3.31\textcolor{gray}{\text{\scriptsize±0.06}}$ &\cellcolor{gray!8} $\textbf{3.24\textcolor{gray}{\text{\scriptsize±0.05}}}$
        & $4.85\textcolor{gray}{\text{\scriptsize±0.10}}$ &\cellcolor{gray!8} $\textbf{4.75\textcolor{gray}{\text{\scriptsize±0.11}}}$
        \\
        & RMSE
        & $6.15\textcolor{gray}{\text{\scriptsize±0.21}}$ &\cellcolor{gray!8} $\textbf{6.05\textcolor{gray}{\text{\scriptsize±0.16}}}$
        & $6.81\textcolor{gray}{\text{\scriptsize±0.06}}$ &\cellcolor{gray!8} $\textbf{6.71\textcolor{gray}{\text{\scriptsize±0.05}}}$
        & $6.64\textcolor{gray}{\text{\scriptsize±0.12}}$ &\cellcolor{gray!8} $\textbf{6.59\textcolor{gray}{\text{\scriptsize±0.11}}}$
        & $6.74\textcolor{gray}{\text{\scriptsize±0.19}}$ &\cellcolor{gray!8} $\textbf{6.61\textcolor{gray}{\text{\scriptsize±0.17}}}$
        & $5.51\textcolor{gray}{\text{\scriptsize±0.10}}$ &\cellcolor{gray!8} $\textbf{5.42\textcolor{gray}{\text{\scriptsize±0.08}}}$
        & $8.54\textcolor{gray}{\text{\scriptsize±0.27}}$ &\cellcolor{gray!8} $\textbf{8.36\textcolor{gray}{\text{\scriptsize±0.27}}}$
        \\
        & MAPE(\%)
        & $31.60\textcolor{gray}{\text{\scriptsize±0.19}}$ &\cellcolor{gray!8} $\textbf{31.33\textcolor{gray}{\text{\scriptsize±0.47}}}$
        & $38.41\textcolor{gray}{\text{\scriptsize±1.43}}$ &\cellcolor{gray!8} $\textbf{37.32\textcolor{gray}{\text{\scriptsize±1.33}}}$
        & $35.79\textcolor{gray}{\text{\scriptsize±1.24}}$ &\cellcolor{gray!8} $\textbf{34.95\textcolor{gray}{\text{\scriptsize±1.10}}}$
        & $40.93\textcolor{gray}{\text{\scriptsize±1.48}}$ &\cellcolor{gray!8} $\textbf{39.57\textcolor{gray}{\text{\scriptsize±1.52}}}$
        & $31.42\textcolor{gray}{\text{\scriptsize±0.59}}$ &\cellcolor{gray!8} $\textbf{30.75\textcolor{gray}{\text{\scriptsize±0.50}}}$
        & $43.20\textcolor{gray}{\text{\scriptsize±1.28}}$ &\cellcolor{gray!8} $\textbf{42.26\textcolor{gray}{\text{\scriptsize±1.25}}}$
        \\

        \bottomrule
        \label{tab:rq2_fewshot_12}
    \end{tabular}
    }
\end{table*}

\begin{table}[htbp]
  \centering
  \renewcommand{\arraystretch}{1.6}
  \caption{PatchSTG with \model in LargeST Benchmark.}
  \label{tab:rq2_largest}
  \resizebox{\textwidth}{!}{
  \begin{tabular}{cc*{3}{c}*{3}{c}*{3}{c}*{3}{c}}
    \toprule
    \multirow{2}{*}{Datasets} & \multirow{2}{*}{Methods}
      & \multicolumn{3}{c}{Horizon 3}
      & \multicolumn{3}{c}{Horizon 6}
      & \multicolumn{3}{c}{Horizon 12}
      & \multicolumn{3}{c}{Average} \\
    \cmidrule(lr){3-5} \cmidrule(lr){6-8} \cmidrule(lr){9-11} \cmidrule(lr){12-14}
    & 
      & MAE    & RMSE   & MAPE(\%)
      & MAE    & RMSE   & MAPE(\%)
      & MAE    & RMSE   & MAPE(\%)
      & MAE    & RMSE   & MAPE(\%) \\
    \midrule
    \multirow{3}{*}{\SD}
      & PatchSTG         & 14.53 & 24.34 &  9.22 & 16.86 & 28.63 & 11.11 & 20.66 & 36.27 & 14.72 & 16.90 & 29.27 & 11.23 \\
      & w/ \model        & 14.44 & 24.07 &  8.70 & 16.66 & 28.18 & 10.93 & 20.37 & 35.68 & 14.27 & 16.72 & 28.83 & 11.11 \\
      &\cellcolor{gray!8} $\Delta$   &  \cellcolor{gray!8} $\downarrow$ 0.6\%  &  \cellcolor{gray!8} $\downarrow$ 1.1\%  &  \cellcolor{gray!8} $\downarrow$ 5.6\%  &  \cellcolor{gray!8} $\downarrow$ 1.2\%  &  \cellcolor{gray!8} $\downarrow$ 1.6\%  &  \cellcolor{gray!8} $\downarrow$ 1.6\%  &  \cellcolor{gray!8} $\downarrow$ 1.4\%  &  \cellcolor{gray!8} $\downarrow$ 1.6\%  &  \cellcolor{gray!8} $\downarrow$ 3.1\%  &  \cellcolor{gray!8} $\downarrow$ 1.1\%  &  \cellcolor{gray!8} $\downarrow$ 1.5\%  &  \cellcolor{gray!8} $\downarrow$ 1.1\%  \\
    \midrule
    \multirow{3}{*}{\GBA}
      & PatchSTG         & 16.81 & 28.71 & 12.25 & 19.68 & 33.09 & 14.51 & 23.49 & 39.23 & 18.93 & 19.50 & 33.16 & 14.64 \\
      & w/ \model        & 16.65 & 28.31 & 12.12 & 19.30 & 32.40 & 14.35 & 22.96 & 38.33 & 18.30 & 19.16 & 32.49 & 14.48 \\
      &\cellcolor{gray!8} $\Delta$   &  \cellcolor{gray!8} $\downarrow$ 1.0\%  &  \cellcolor{gray!8} $\downarrow$ 1.4\%  &  \cellcolor{gray!8} $\downarrow$ 1.1\%  &  \cellcolor{gray!8} $\downarrow$ 1.9\%  &  \cellcolor{gray!8} $\downarrow$ 2.1\%  &  \cellcolor{gray!8} $\downarrow$ 1.1\%  &  \cellcolor{gray!8} $\downarrow$ 2.3\%  &  \cellcolor{gray!8} $\downarrow$ 2.3\%  &  \cellcolor{gray!8} $\downarrow$ 3.3\%  &  \cellcolor{gray!8} $\downarrow$ 1.7\%  &  \cellcolor{gray!8} $\downarrow$ 2.0\%  &  \cellcolor{gray!8} $\downarrow$ 1.1\%  \\
    \midrule
    \multirow{3}{*}{\GLA}
      & PatchSTG         & 15.84 & 26.34 &  9.27 & 19.06 & 31.85 & 11.30 & 23.32 & 39.64 & 14.60 & 18.96 & 32.33 & 11.44 \\
      & w/ \model        & 15.78 & 26.08 &  9.15 & 18.76 & 31.29 & 11.21 & 22.86 & 38.89 & 14.35 & 18.69 & 31.78 & 11.36 \\
      &\cellcolor{gray!8} $\Delta$   &  \cellcolor{gray!8} $\downarrow$ 0.4\%  &  \cellcolor{gray!8} $\downarrow$ 1.0\%  &  \cellcolor{gray!8} $\downarrow$ 1.3\%  &  \cellcolor{gray!8} $\downarrow$ 1.6\%  &  \cellcolor{gray!8} $\downarrow$ 1.8\%  &  \cellcolor{gray!8} $\downarrow$ 0.8\%  &  \cellcolor{gray!8} $\downarrow$ 2.0\%  &  \cellcolor{gray!8} $\downarrow$ 1.9\%  &  \cellcolor{gray!8} $\downarrow$ 1.7\%  &  \cellcolor{gray!8} $\downarrow$ 1.4\%  &  \cellcolor{gray!8} $\downarrow$ 1.7\%  &  \cellcolor{gray!8} $\downarrow$ 0.7\%  \\
    \midrule
    \multirow{3}{*}{\CA}
      & PatchSTG         & 14.69 & 24.82 & 10.51 & 17.41 & 29.43 & 12.83 & 21.20 & 36.13 & 16.00 & 17.35 & 29.79 & 12.79 \\
      & w/ \model        & 14.59 & 24.61 & 10.40 & 17.14 & 28.97 & 12.51 & 20.76 & 35.38 & 15.54 & 17.10 & 29.31 & 12.53 \\
      &\cellcolor{gray!8} $\Delta$   &  \cellcolor{gray!8} $\downarrow$ 0.7\%  &  \cellcolor{gray!8} $\downarrow$ 0.8\%  &  \cellcolor{gray!8} $\downarrow$ 1.0\%  &  \cellcolor{gray!8} $\downarrow$ 1.6\%  &  \cellcolor{gray!8} $\downarrow$ 1.6\%  &  \cellcolor{gray!8} $\downarrow$ 2.5\%  &  \cellcolor{gray!8} $\downarrow$ 2.1\%  &  \cellcolor{gray!8} $\downarrow$ 2.1\%  &  \cellcolor{gray!8} $\downarrow$ 2.9\%  &  \cellcolor{gray!8} $\downarrow$ 1.4\%  &  \cellcolor{gray!8} $\downarrow$ 1.6\%  &  \cellcolor{gray!8} $\downarrow$ 2.0\%  \\
    \bottomrule
  \end{tabular}
  }
\end{table}

\begin{table*}[htbp]
  \centering
  \caption{Performance of spatio-temporal shift dataset SD-ratio(\%) on all nodes and unknown new nodes at different spatio-temporal shift levels.}
  \label{tab:rq3_ood}
  \resizebox{\textwidth}{!}{
  \begin{tabular}{ccc*{3}{c}*{3}{c}}
    \toprule
    \multirow{2}{*}{Dataset}      & \multirow{2}{*}{Horizon} & \multirow{2}{*}{Methods}
                 & \multicolumn{3}{c}{All Node}
                 & \multicolumn{3}{c}{New Node} \\
    \cmidrule(lr){4-6} \cmidrule(lr){7-9}
    &         & 
                 & MAE   & RMSE  & MAPE(\%)
                 & MAE   & RMSE   & MAPE(\%) \\
    \midrule
    \multirow{6}{*}{SD-10\%}
      & \multirow{3}{*}{12}      & STONE       
                 & $40.74\textcolor{gray}{\text{\scriptsize±4.24}}$ 
                 & $58.43\textcolor{gray}{\text{\scriptsize±3.67}}$ 
                 & $45.82\textcolor{gray}{\text{\scriptsize±9.30}}$ 
                 & $46.25\textcolor{gray}{\text{\scriptsize±7.02}}$ 
                 & $66.97\textcolor{gray}{\text{\scriptsize±7.05}}$ 
                 & $47.57\textcolor{gray}{\text{\scriptsize±8.02}}$ \\
      &         & w/ \model   
                 & $39.41\textcolor{gray}{\text{\scriptsize±3.82}}$ 
                 & $57.44\textcolor{gray}{\text{\scriptsize±3.67}}$ 
                 & $38.02\textcolor{gray}{\text{\scriptsize±5.19}}$ 
                 & $44.65\textcolor{gray}{\text{\scriptsize±6.69}}$ 
                 & $65.06\textcolor{gray}{\text{\scriptsize±6.95}}$ 
                 & $41.85\textcolor{gray}{\text{\scriptsize±5.87}}$ \\
      &         & \cellcolor{gray!8} $\Delta$
                 & \cellcolor{gray!8} $\downarrow$ 3.3\% & \cellcolor{gray!8} $\downarrow$ 1.7\% & \cellcolor{gray!8} $\downarrow$ 17.0\% 
                 & \cellcolor{gray!8} $\downarrow$ 3.5\% & \cellcolor{gray!8} $\downarrow$ 2.9\% & \cellcolor{gray!8} $\downarrow$ 12.0\% \\
    \cmidrule{3-9}
                 
      & \multirow{3}{*}{Avg.}    & STONE       
                 & $30.18\textcolor{gray}{\text{\scriptsize±1.19}}$ 
                 & $42.83\textcolor{gray}{\text{\scriptsize±1.38}}$ 
                 & $39.44\textcolor{gray}{\text{\scriptsize±3.03}}$ 
                 & $32.79\textcolor{gray}{\text{\scriptsize±2.77}}$ 
                 & $46.68\textcolor{gray}{\text{\scriptsize±3.87}}$ 
                 & $40.12\textcolor{gray}{\text{\scriptsize±0.30}}$ \\
      &         & w/ \model   
                 & $28.29\textcolor{gray}{\text{\scriptsize±1.04}}$ 
                 & $41.50\textcolor{gray}{\text{\scriptsize±1.16}}$ 
                 & $28.66\textcolor{gray}{\text{\scriptsize±1.32}}$ 
                 & $30.66\textcolor{gray}{\text{\scriptsize±2.63}}$ 
                 & $44.44\textcolor{gray}{\text{\scriptsize±3.32}}$ 
                 & $30.80\textcolor{gray}{\text{\scriptsize±1.54}}$ \\
      &         & \cellcolor{gray!8} $\Delta$
                 & \cellcolor{gray!8} $\downarrow$ 6.3\% & \cellcolor{gray!8} $\downarrow$ 3.1\% & \cellcolor{gray!8} $\downarrow$ 27.3\% 
                 & \cellcolor{gray!8} $\downarrow$ 6.5\% & \cellcolor{gray!8} $\downarrow$ 4.8\% & \cellcolor{gray!8} $\downarrow$ 23.2\% \\
    \midrule
    \multirow{6}{*}{SD-15\%}
      & \multirow{3}{*}{12}      & STONE       
                 & $35.31\textcolor{gray}{\text{\scriptsize±0.46}}$ 
                 & $52.87\textcolor{gray}{\text{\scriptsize±0.57}}$ 
                 & $34.05\textcolor{gray}{\text{\scriptsize±1.73}}$ 
                 & $43.65\textcolor{gray}{\text{\scriptsize±0.85}}$ 
                 & $66.07\textcolor{gray}{\text{\scriptsize±1.28}}$ 
                 & $30.47\textcolor{gray}{\text{\scriptsize±1.40}}$ \\
      &         & w/ \model   
                 & $34.86\textcolor{gray}{\text{\scriptsize±0.15}}$ 
                 & $52.25\textcolor{gray}{\text{\scriptsize±0.54}}$ 
                 & $29.80\textcolor{gray}{\text{\scriptsize±0.79}}$ 
                 & $41.93\textcolor{gray}{\text{\scriptsize±0.62}}$ 
                 & $63.17\textcolor{gray}{\text{\scriptsize±0.29}}$ 
                 & $27.70\textcolor{gray}{\text{\scriptsize±0.03}}$ \\
      &         & \cellcolor{gray!8} $\Delta$
                 & \cellcolor{gray!8} $\downarrow$ 1.3\% & \cellcolor{gray!8} $\downarrow$ 1.2\% & \cellcolor{gray!8} $\downarrow$ 12.5\% 
                 & \cellcolor{gray!8} $\downarrow$ 3.9\% & \cellcolor{gray!8} $\downarrow$ 4.4\% &  $\downarrow$ 9.1\% \\
    \cmidrule{3-9}
        
      & \multirow{3}{*}{Avg.}    & STONE       
                 & $28.45\textcolor{gray}{\text{\scriptsize±0.05}}$ 
                 & $41.30\textcolor{gray}{\text{\scriptsize±0.26}}$ 
                 & $36.02\textcolor{gray}{\text{\scriptsize±2.91}}$ 
                 & $33.20\textcolor{gray}{\text{\scriptsize±0.69}}$ 
                 & $49.19\textcolor{gray}{\text{\scriptsize±0.97}}$ 
                 & $29.77\textcolor{gray}{\text{\scriptsize±3.00}}$ \\
      &         & w/ \model   
                 & $26.59\textcolor{gray}{\text{\scriptsize±0.22}}$ 
                 & $39.90\textcolor{gray}{\text{\scriptsize±0.09}}$ 
                 & $24.96\textcolor{gray}{\text{\scriptsize±1.00}}$ 
                 & $30.65\textcolor{gray}{\text{\scriptsize±0.29}}$ 
                 & $46.24\textcolor{gray}{\text{\scriptsize±0.52}}$ 
                 & $22.50\textcolor{gray}{\text{\scriptsize±0.38}}$ \\
      &         & \cellcolor{gray!8} $\Delta$
                 & \cellcolor{gray!8} $\downarrow$ 6.5\% & \cellcolor{gray!8} $\downarrow$ 3.4\% & \cellcolor{gray!8} $\downarrow$ 30.7\% 
                 & \cellcolor{gray!8} $\downarrow$ 7.7\% & \cellcolor{gray!8} $\downarrow$ 6.0\% & \cellcolor{gray!8} $\downarrow$ 24.4\% \\
    \midrule
    \multirow{6}{*}{SD-20\%}
      & \multirow{3}{*}{12}      & STONE       
                 & $36.13\textcolor{gray}{\text{\scriptsize±0.76}}$ 
                 & $53.87\textcolor{gray}{\text{\scriptsize±0.44}}$ 
                 & $34.19\textcolor{gray}{\text{\scriptsize±1.08}}$ 
                 & $41.64\textcolor{gray}{\text{\scriptsize±1.23}}$ 
                 & $63.19\textcolor{gray}{\text{\scriptsize±2.11}}$ 
                 & $37.38\textcolor{gray}{\text{\scriptsize±3.29}}$ \\
      &         & w/ \model   
                 & $35.08\textcolor{gray}{\text{\scriptsize±1.08}}$ 
                 & $52.37\textcolor{gray}{\text{\scriptsize±0.67}}$ 
                 & $30.55\textcolor{gray}{\text{\scriptsize±1.13}}$ 
                 & $40.10\textcolor{gray}{\text{\scriptsize±1.41}}$ 
                 & $60.77\textcolor{gray}{\text{\scriptsize±2.07}}$ 
                 & $33.70\textcolor{gray}{\text{\scriptsize±2.73}}$ \\
      &         & \cellcolor{gray!8} $\Delta$
                 & \cellcolor{gray!8} $\downarrow$ 2.9\% & \cellcolor{gray!8} $\downarrow$ 2.8\% & \cellcolor{gray!8} $\downarrow$ 10.6\% 
                 & \cellcolor{gray!8} $\downarrow$ 3.7\% & \cellcolor{gray!8} $\downarrow$ 3.8\% &  $\downarrow$ 9.8\% \\
    \cmidrule{3-9}
        
      & \multirow{3}{*}{Avg.}    & STONE       
                 & $28.86\textcolor{gray}{\text{\scriptsize±0.13}}$ 
                 & $41.72\textcolor{gray}{\text{\scriptsize±0.14}}$ 
                 & $34.89\textcolor{gray}{\text{\scriptsize±1.15}}$ 
                 & $31.46\textcolor{gray}{\text{\scriptsize±0.95}}$ 
                 & $46.06\textcolor{gray}{\text{\scriptsize±1.60}}$ 
                 & $36.35\textcolor{gray}{\text{\scriptsize±4.23}}$ \\
      &         & w/ \model   
                 & $26.67\textcolor{gray}{\text{\scriptsize±0.29}}$ 
                 & $39.71\textcolor{gray}{\text{\scriptsize±0.08}}$ 
                 & $24.92\textcolor{gray}{\text{\scriptsize±0.81}}$ 
                 & $28.94\textcolor{gray}{\text{\scriptsize±0.79}}$ 
                 & $43.29\textcolor{gray}{\text{\scriptsize±1.32}}$ 
                 & $26.60\textcolor{gray}{\text{\scriptsize±2.49}}$ \\
      &         & \cellcolor{gray!8} $\Delta$
                 & \cellcolor{gray!8} $\downarrow$ 7.6\% & \cellcolor{gray!8} $\downarrow$ 4.8\% & \cellcolor{gray!8} $\downarrow$ 28.6\% 
                 & \cellcolor{gray!8} $\downarrow$ 8.0\% & \cellcolor{gray!8} $\downarrow$ 6.0\% & \cellcolor{gray!8} $\downarrow$ 26.8\% \\
    \bottomrule
  \end{tabular}
  }
\end{table*}

\subsection{Visualization Case}

We provide more visualization examples of test set predictions to illustrate the effectiveness of our calibration, as shown in Figure~\ref{fig:long_term_vis}

\section{More Discussion}

\subsection{Distinction between Spatio-Temporal Forecasting and Long-Term Time Series}

As stated in the paper, our work adheres to the common settings of previous short-term and long-term spatio-temporal forecasting studies (\eg, $12 \rightarrow 12$, $24 \rightarrow 24$)~\citep{shao2022long,shao2024exploring}. However, we are fully aware of the $96 \rightarrow 96 - 720$ settings prevalent in current long-term time series forecasting. We wish to share our insights regarding this:

\ding{182} There has been a long-standing debate concerning the long-term predictability of time series~\citep{bergmeir2024fundamental,brigato2025position}, which we do not intend to overly critique here. Nevertheless, it is important to note that in spatio-temporal forecasting, specifically in traffic flow theory research, previous studies~\citep{zheng2025probabilistic} utilizing residual analysis have indicated that, after minimizing periodicity, the correlation between traffic data beyond one hour and the past hour's observations is significantly limited for most sensors. Furthermore, typical traffic sensor data collection frequency is approximately 5 minutes. Therefore, for practical traffic forecasting and decision-making scenarios, which usually focus on the next 1-2 hours (\ie, max 24 steps), our settings are more aligned with real-world applications.

\ding{183} Given that spatio-temporal forecasting can be considered a complex extension of time series forecasting, the additional dimension of sensor count (up to hundreds or thousands) results in an order of magnitude higher data training cost. This is a secondary reason why existing spatio-temporal forecasting often considers 12-step settings.

\ding{184} Another notable finding is that a recent study on the accuracy law of deep time series forecasting~\citep{wang2025accuracy}, emerging subsequent to this paper, identifies a significant exponential relationship between the minimum prediction error of current deep forecasting models and the complexity of window series patterns. Adopting a Spectral-domain perspective similar to that of this paper, it defines the complexity of series patterns as the total variance of the amplitude spectrum distribution, thereby characterizing the intrinsic heterogeneity of series variations within each relevant window. This approach aligns with ours and provides a novel perspective to explain the learnability and effectiveness of the single-step gradient descent in our calibrator. While the study focuses solely on univariate time series forecasting, this limitation does not inherently undermine its relevance. It is worth noting that the study concludes that mainstream time series models have not yet reached saturation on traffic scenario-based benchmarks. However, we hold a different view, arguing that this conclusion primarily stems from the study’s exclusive use of time series forecasting models, while neglecting mainstream spatio-temporal dependency modeling models. For further in-depth discussions, we suggest that future research should address this aspect.

\begin{figure*}[htbp!]
    \centering
    \includegraphics[width=\textwidth]{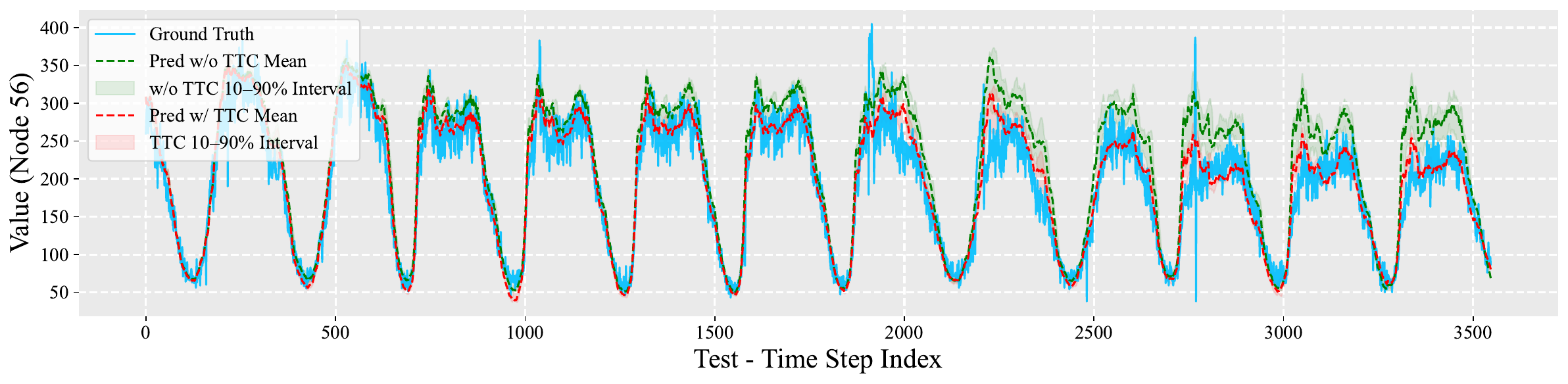}
    \includegraphics[width=\textwidth]{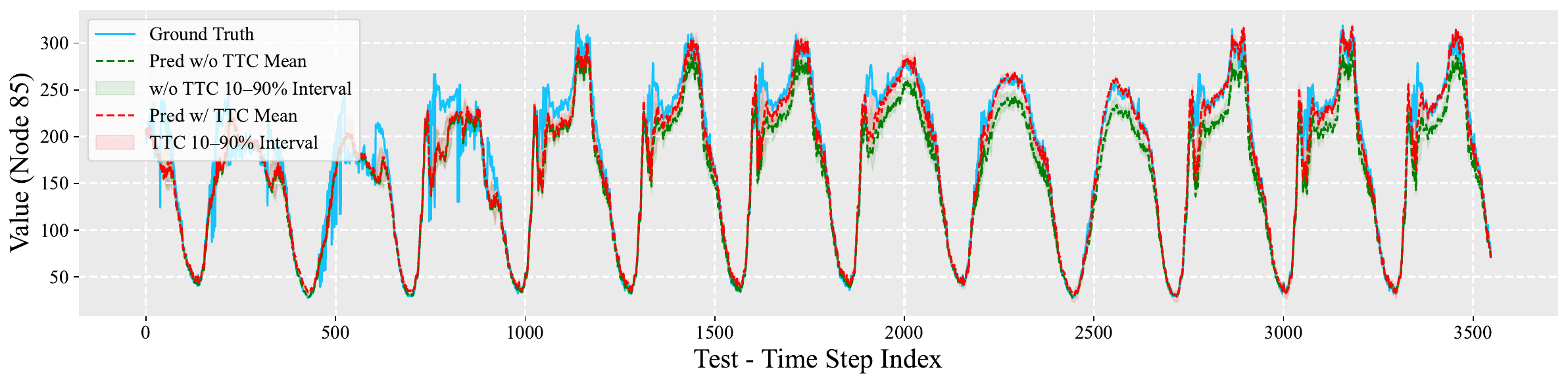}
    \includegraphics[width=\textwidth]{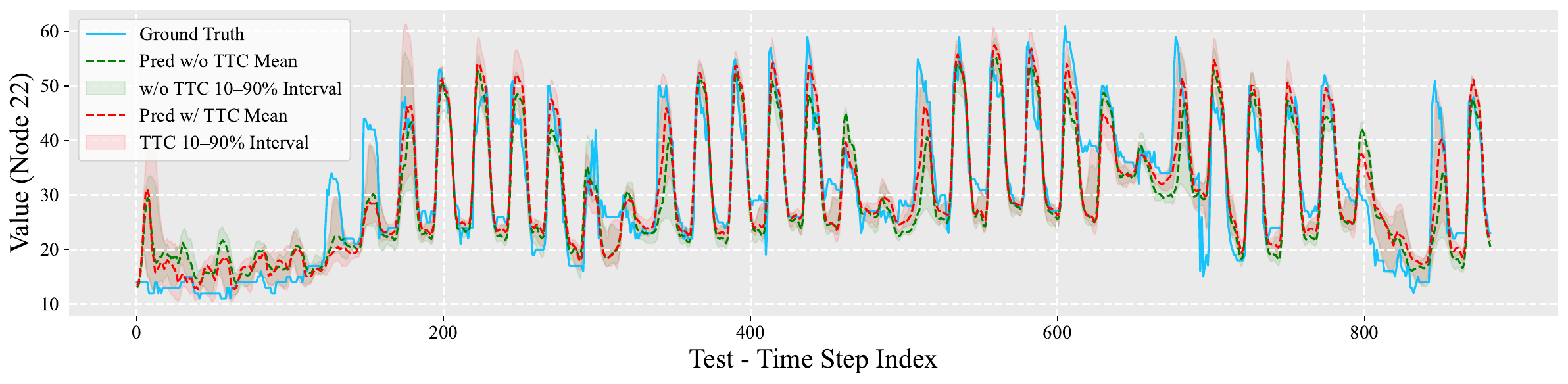}
    \includegraphics[width=\textwidth]{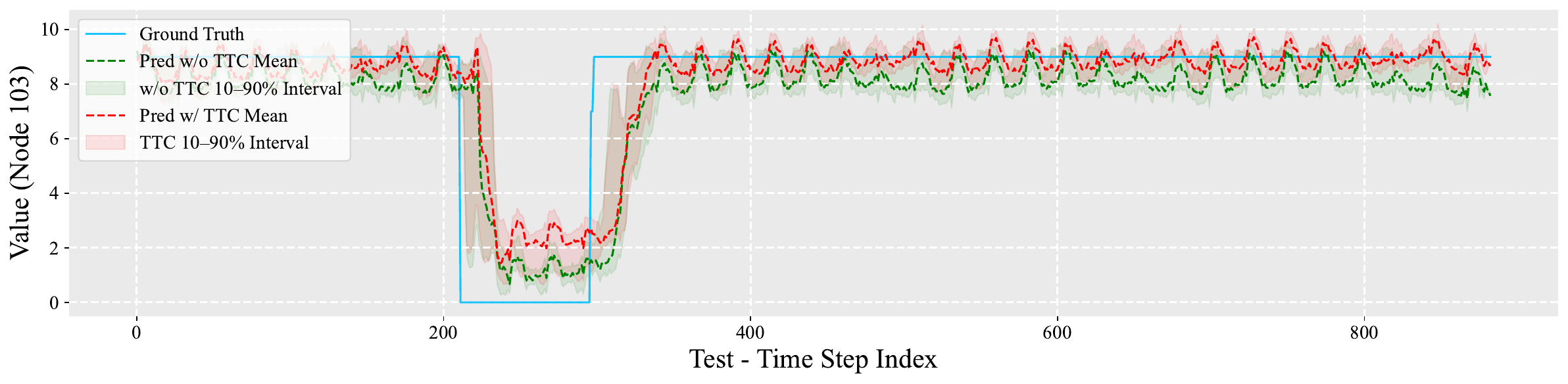}
    \caption{Show case of improvement in spatio-temporal forecasting through our \model.}
    \label{fig:long_term_vis}
\end{figure*}

\subsection{Limitation}

In this paper, we propose a novel paradigm for spatio-temporal forecasting: test-time computing. 
Although there are still many potential areas for improvement, given the superiority and generality of our \model, we believe this provides a pathway for future exploration of larger-scale and more effective test-time computation. While we have taken a small step in this direction, several limitations warrant attention:

\ding{182} Our current study does not involve testing on spatio-temporal foundation models. The fundamental reason behind this is our belief that true spatio-temporal foundation models do not yet exist. Although some preliminary exploratory work has been done~\citep{yuan2024unist,yuan2024urbandit,li2024urbangpt,li2024opencity,yu2024bigcity}, they are far from achieving true zero-shot generalization. However, considering their future inevitability, we believe that further improving the paradigm of test-time computation, especially how to activate and scale the internal capabilities of spatio-temporal foundation models during testing, goes beyond the design philosophy of our proposed learning with calibration. Nevertheless, our experiments still provide some preliminary guidance and insights.

\ding{183} It is undoubtedly encouraging that our current calibration mechanism is more effective in large-scale and out-of-distribution scenarios. However, for commonly used small spatio-temporal benchmark datasets, the performance improvement is not yet significant. Therefore, how to effectively improve the performance of test-time computation on small-scale spatio-temporal datasets still requires exploration, and we reserve further improvement efforts for future research.

\ding{184} We observed in our experiments that utilizing a larger amount of test information that is more similar to the current test sample does not significantly affect the results. This is certainly beneficial for real-time efficiency requirements. However, considering our current efficiency is already sufficiently good, further exploration is needed on how to potentially slow down the test-time computing process to make it more scalable and improve forecasting effectiveness.

\subsection{Future Work}

Building upon the research direction presented in this paper, we envision future work encompassing two main aspects:

\ding{182} Exploring how to integrate retrieval-augmented techniques to filter more effective learning samples from arbitrary external scenarios, thereby combining them with our test-time computation framework to optimize performance on small-scale datasets.

\ding{183} Investigating the construction of real spatio-temporal foundation models that encapsulate internal compressed knowledge, and exploring how to activate this internal capability during test time.

\section{Broader Impacts}

This paper aims to promote the real-world usability of spatio-temporal forecasting models. We propose a novel paradigm, namely test-time computing of spatio-temporal forecasting. This paradigm shows significant generalization, universality across multiple scenarios, multiple tasks, multiple learning paradigms, and scalability to improve performance, providing valuable insights for future research and application value for practitioners. This paper focuses mainly on scientific research and has no obvious negative impact on society.

\end{document}